\def\eqref#1{equation~\ref{#1}}
\def\1{\bm{1}}
\DeclareMathAlphabet{\mathsfit}{\encodingdefault}{\sfdefault}{m}{sl}
\SetMathAlphabet{\mathsfit}{bold}{\encodingdefault}{\sfdefault}{bx}{n}
\DeclareMathOperator*{\argmax}{arg\,max}
\title{The Asymmetric Maximum Margin Bias of \\Quasi-Homogeneous Neural Networks}
\author{Daniel Kunin\thanks{Equal contribution. Correspondence to Daniel Kunin and Atsushi Yamamura.}\; \& Atsushi Yamamura (山村篤志)$^*$ \\
Stanford University\\
\texttt{\{kunin,atsushi3\}@stanford.edu} \\
\AND
Chao Ma, Surya Ganguli\\
Stanford University\\
\texttt{\{chaoma,sganguli\}@stanford.edu}
}
\newtheorem{theorem}{Theorem}[section]
\newtheorem{corollary}{Corollary}[section]
\newtheorem{lemma}{Lemma}[section]
\newtheorem{definition}{Definition}[section]
\newtheorem{conjecture}{Conjecture}[section]
\newcounter{assumption}
\newenvironment{assumption}[1][]{\refstepcounter{assumption}
   \noindent \textbf{A\theassumption} (\textit{#1}).}{}
\begin{document}

\begin{CJK}{UTF8}{min}
\maketitle
\end{CJK}

\vspace{-15pt}
\begin{abstract}
\vspace{-10pt}
In this work, we explore the maximum-margin bias of \textit{quasi-homogeneous} neural networks trained with gradient flow on an exponential loss and past a point of separability.
We introduce the class of quasi-homogeneous models, which is expressive enough to describe nearly all neural networks with homogeneous activations, even those with biases, residual connections, and normalization layers, while structured enough to enable geometric analysis of its gradient dynamics.
Using this analysis, we generalize the existing results of maximum-margin bias for homogeneous networks to this richer class of models.
We find that gradient flow implicitly favors a subset of the parameters, unlike in the case of a homogeneous model where all parameters are treated equally.
We demonstrate through simple examples how this strong favoritism toward minimizing an asymmetric norm can degrade the \textit{robustness} of quasi-homogeneous models.
On the other hand, we conjecture that this norm-minimization discards, when possible, unnecessary higher-rate parameters, reducing the model to a sparser parameterization.
Lastly, by applying our theorem to sufficiently expressive neural networks with normalization layers, we reveal a universal mechanism behind the empirical phenomenon of \textit{Neural Collapse}.
\end{abstract}

\vspace{-15pt}
\section{Introduction}
\label{sec:intro}
\vspace{-8pt}

Modern neural networks trained with (stochastic) gradient descent generalize remarkably well despite being trained well past the point at which they interpolate the training data and despite having the functional capacity to memorize random labels \cite{zhang2021understanding}.
This apparent paradox has led to the hypothesis that there must exist an implicit process biasing the network to learn a ``good" generalizing solution, when one exists, rather than one of the many more ``bad" interpolating ones.
While much research has been devoted to identifying the origin of this \textit{implicit bias}, much of the theory is developed for models that are far simpler than modern neural networks.
In this work, we extend and generalize a long line of literature studying the \textit{maximum-margin bias} of gradient descent in {\it quasi-homogeneous networks}, a class of models we define that encompasses nearly all modern feedforward neural network architectures.
Quasi-homogeneous networks include feedforward networks with homogeneous nonlinearities, bias parameters, residual connections, pooling layers, and normalization layers.
For example, the ResNet-18 convolutional network introduced by \cite{he2016deep} is quasi-homogeneous.
We prove that after surpassing a certain threshold in training, gradient flow on an exponential loss, such as cross-entropy, drives the network to a maximum-margin solution under a norm constraint on the parameters.
Our work is a direct generalization of the results discussed for homogeneous networks in \cite{lyu2019gradient}.
However, unlike in the homogeneous setting, the norm constraint only involves a subset of the parameters.
For example, in the case of a ResNet-18 network, only the last layer's weight and bias parameters are constrained.
This asymmetric norm can have non-trivial implications on the robustness and optimization of quasi-homogeneous models, which we explore in sections \ref{sec:separating-balls} and ~\ref{sec:neural-collapse}.

\vspace{-8pt}
\section{Background and Related Work}
\label{sec:related-work}
\vspace{-8pt}

Early works studying the maximum-margin bias of gradient descent focused on the simple, yet insightful, setting of logistic regression \cite{rosset2003margin, soudry2018implicit}.
Consider a binary classification problem with a linearly separable\footnote{Linearly separable implies there exists a $w \in \mathbb{R}^d$ such that for all $i \in [n]$, $y_iw^\intercal x_i \ge 1$.} training dataset $\{x_i,y_i\}$ where $x_i \in \mathbb{R}^d$ and $y_i \in \{-1,1\}$, a linear model $f(x;\beta) = \beta^\intercal x$, and the exponential loss $\mathcal{L}(\beta) = \sum_ie^{-y_if(x_i;\beta)}$.
As shown in \cite{soudry2018implicit}, the loss only has a minimum in $\beta$ as its norm becomes infinite. 
Thus, even after the network correctly classifies the training data, gradient descent decreases the loss by forcing the norm of $\beta$ to grow in an unbounded manner, yielding a slow alignment of $\beta$ in the direction of the maximum $\ell_2$-margin solution, which is the configuration of $\beta$ that minimizes $\|\beta\|$ while keeping the margin $\min_i y_i f(x_i;\beta)$ at least $1$.
But what if we parameterize the regression coefficients differently?
As shown in Fig.~\ref{fig:logistic-regression}, different parameterizations, while not changing the space of learnable functions, can lead to classifiers with very different properties.

\begin{wrapfigure}{r}{0.5\textwidth}
    \vspace{-10pt}
    \begin{subfigure}[b]{0.24\textwidth}
         \centering
         \includegraphics[width=\textwidth]{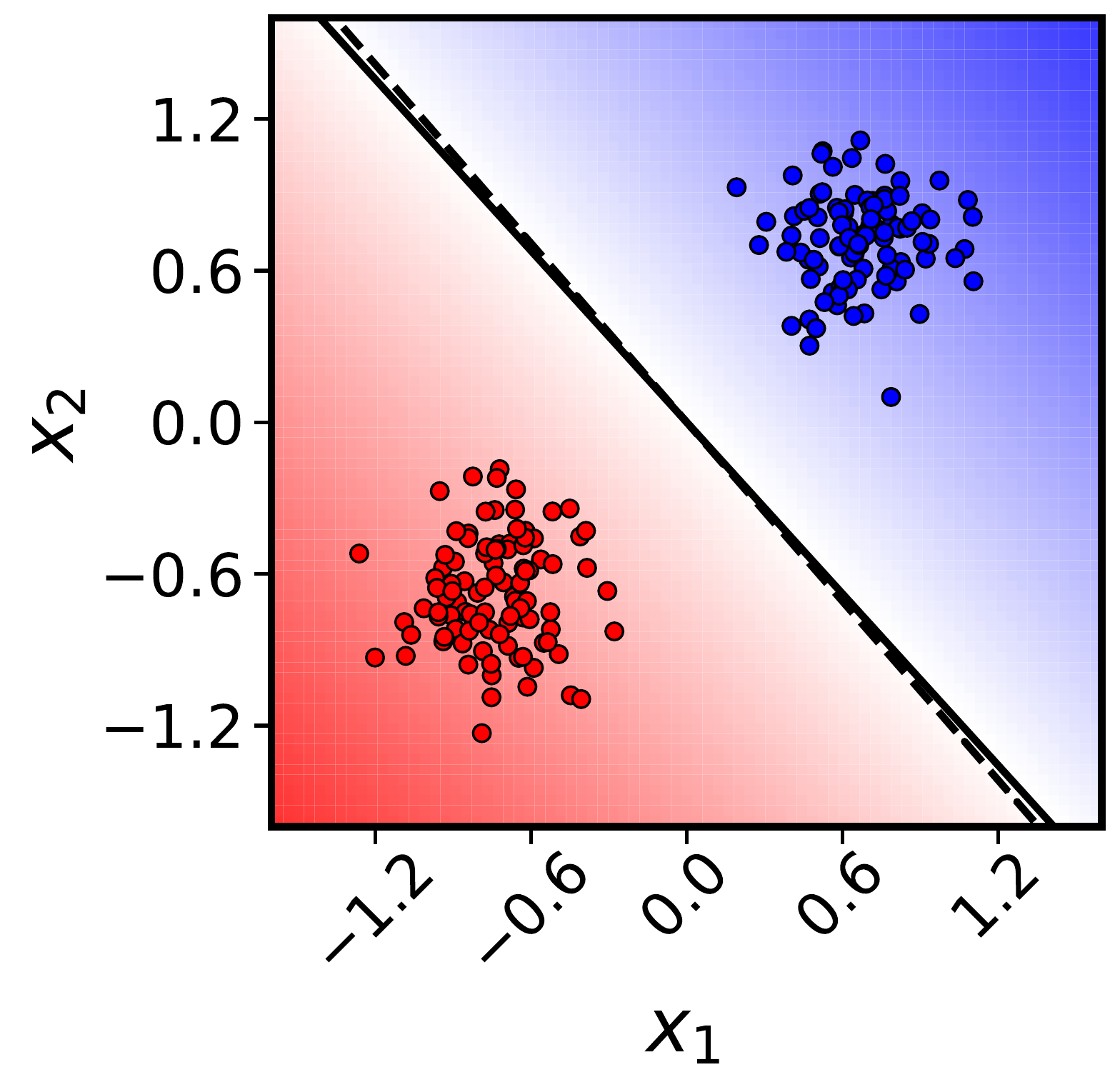}
         \caption{$\beta = \begin{bmatrix}a \\ b\end{bmatrix}$}
     \end{subfigure}
     \begin{subfigure}[b]{0.24\textwidth}
         \centering
         \includegraphics[width=\textwidth]{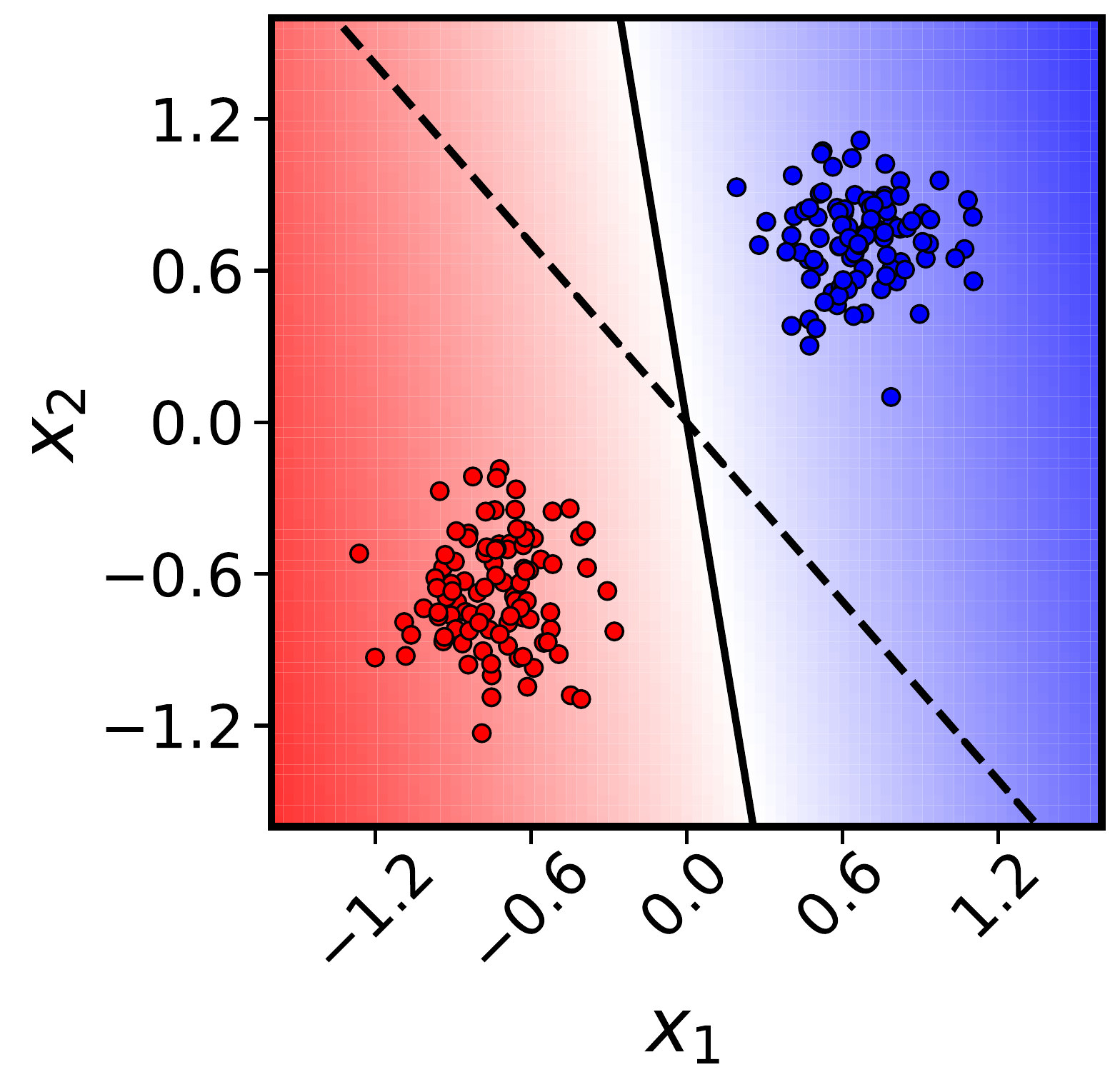}
         \caption{$\beta = \begin{bmatrix}a_1a_2 \\ b\end{bmatrix}$}
     \end{subfigure}
    \caption{
    \textbf{Maximum-margin bias changes with parameterization.}
    Logistic regression, $f(x) = \beta^\intercal x$, trained with gradient descent on a homogeneous (left) and quasi-homogeneous (right) parameterization of the regression coefficients $\beta$.
    The dashed black line is the maximum $\ell_2$-margin solution and the solid black line is the gradient descent trained classifier after $1e5$ steps.
    Existing theory predicts the homogeneous model will converge to the maximum $\ell_2$-margin solution.
    In this work we will show that the quasi-homogeneous model is driven by a different maximum-margin problem.
    }
    \label{fig:logistic-regression}
    \vspace{-10pt}
\end{wrapfigure}

\textbf{Linear networks.}
An early line of works exploring the influence of the parameterization on the maximum-margin bias studied the same setting as logistic regression, but where the regression coefficients $\beta$ are multilinear functions of parameters $\theta$.
\cite{ji2018gradient} showed that for deep linear networks, $\beta = \prod_i W_i$, the weight matrices asymptotically align to a rank-1 matrix, while their product converges to the maximum $\ell_2$-margin solution.
\cite{gunasekar2018implicit} showed that linear diagonal networks, $\beta = w_1 \odot \dots \odot w_D$, converge to the maximum $\ell_{2/D}$-margin solution, demonstrating that increasing depth drives the network to sparser solutions.
They also show an analogous result holds in the frequency domain for full-width linear convolutional networks. 
Many other works have advanced this line of literature, expanding to settings where the data is not linearly separable \cite{ji2019implicit}, generalizing the analysis to other loss functions with exponential tails \cite{nacson2019convergence}, considering the effect of randomness introduced by stochastic gradient descent \cite{nacson2019stochastic}, and unifying these results under a tensor formulation \cite{yun2020unifying}.

\textbf{Homogeneous networks.}
While linear networks allowed for simple and interpretable analysis of the implicit bias in both the space of $\theta$ (\textit{parameter space}) and the space of $\beta$ (\textit{function space}), it is unclear how these results on linear networks relate to the behavior of highly non-linear networks used in practice.
\cite{wei2018margin} and \cite{xu2021will} made progress towards analysis of non-linear networks by considering shallow, one or two layer, networks with \textit{positive-homogeneous} activations, i.e., there exists $L \in \mathbb{R}_+$ such that $f(\alpha x) = \alpha^L f(x)$ for all $\alpha \in \mathbb{R}_+$.
More recently, two concurrent works generalized this idea by expanding their analysis to all positive-homogeneous networks.
\cite{nacson2019lexicographic} used vanishing regularization to show that as long as the training error converges to zero and the parameters converge in direction, then the rescaled parameters of a homogeneous model converges to a first-order Karsh-Kuhn-Tucker (KKT) point of a maximum-margin optimization problem.
\cite{lyu2019gradient} defined a normalized margin and showed that once the training loss drops below a certain threshold, a smoothed version of the normalized margin monotonically converges, allowing them to conclude that all rescaled limit points of the normalized parameters are first-order KKT points of the same optimization problem.
A follow up work, \cite{ji2020directional}, developed a theory of unbounded, nonsmooth Kurdyka-Lojasiewicz inequalities to prove a stronger result of directional convergence of the parameters and alignment of the gradient with the parameters along the gradient flow path.
\cite{lyu2019gradient} and \cite{ji2020directional} also explored empirically non-homogeneous models with bias parameters and \cite{nacson2019lexicographic} considered theoretically non-homogeneous models defined as an ensemble of homogeneous models of different orders.
While these works have significantly narrowed the gap between theory and practice, all three works have also highlighted the limitation in applying their analysis to architectures with bias parameters, residual connections, and normalization layers, a limitation we alleviate in this work.
In a parallel literature studying low-rank biases in deep learning, \cite{le2022training} analyzed non-homogeneous models where the nonlinearities are restricted to the first few layers.


\vspace{-8pt}
\section{Defining the Class of Quasi-Homogeneous Models}
\label{sec:quasi-homogeneous}
\vspace{-8pt}

Here we introduce the class of quasi-homogeneous models, which is expressive enough to describe nearly all neural networks with positive-homogeneous activations, while structured enough to enable geometric analysis of its gradient dynamics.
Throughout this work, we will consider a binary classifier $f(x;\theta): \mathbb{R}^d \to \mathbb{R}$, where $\theta \in \mathbb{R}^m$ is the vector concatenating all the parameters of the model.
We assume the dynamics of $\theta(t)$ over time $t$ are governed by gradient flow $\frac{d\theta}{dt} = -\frac{\partial\mathcal{L}}{\partial \theta}$ on an exponential loss $\mathcal{L}(\theta) = \frac{1}{n}\sum_i e^{-y_i f(x_i;\theta)}$ computed over a training dataset $\{(x_{1},y_1), \dots, (x_{n},y_n)\}$ of size $n$ where $x_i \in \mathbb{R}^{d}$ and $y_i \in \{-1,1\}$.
In App.~\ref{apx:multiclass} we generalize our results to multi-class classification with the cross-entropy loss.

\begin{definition}[$\Lambda$-Quasi-Homogeneous]
\label{def:quasi-homo}
For a (non-zero) positive semi-definite matrix $\Lambda \in \mathbb{R}^{m \times m}$, a model $f(x;\theta)$ is $\Lambda$-quasi-homogeneous if under the parameter transformation
\begin{equation}
    \label{eq:transformation}    \psi_\alpha(\theta) := e^{\alpha \Lambda}\theta,
\end{equation}
the output of the model scales $f(x;\psi_\alpha(\theta)) = e^{\alpha}f(x;\theta)$ for all $\alpha \in \mathbb{R}$ and input $x$.
\end{definition}
In this work, we assume $\Lambda$ is diagonal\footnote{When $\Lambda$ is not diagonal, by reparameterizing the model $\theta \to O\theta$ with a proper orthogonal matrix $O$, we can diagonalize $\Lambda$.} and let $\lambda_i = (\Lambda)_{ii}$ and $\lambda_{\max} = \max_i \lambda_i$ be the maximum diagonal element, which must be positive.
Definition~(\ref{def:quasi-homo}) generalizes the notion of positive homogeneous functions, allowing different scaling rates for different parameters to yield the same scaling of the output.  
Given two parameters with different values of $\lambda$, we refer to the parameter with larger $\lambda$ as \textit{higher-rate} and the other as \textit{lower-rate}.

\textbf{Examples.}
We consider some simple quasi-homogeneous networks that are not homogeneous.

\textit{Unbalanced linear diagonal network.} 
Consider a diagonal network as described in \cite{gunasekar2018implicit}, but with a varying depth for different dimensions of the data.
The regression coefficient $\beta_i$ for input component $x_i$ is parameterized as the product of $D_i\in \mathbb{N}$ parameters, yielding $f(x;\theta) = \sum_{i} (\prod_{j=1}^{D_i} \theta_{ij}) x_i$.
When the $D_i$ are equal, the network is homogeneous, otherwise, the network is quasi-homogeneous where the choice of $\lambda$ can be $D_i^{-1}$ for $\theta_{ij}$.

\textit{Fully connected network with biases.}
One of the simplest quasi-homogeneous models is a multi-layer, fully-connected network with bias parameters, such as the two-layer network, $f(x;\theta) = w^2\sigma\left(\sum_{i} w^1_{i} x_i + b^1\right) + b^2$, where $\sigma(\cdot)$ is a Rectified Linear Unit (ReLU).
Without biases this network would be homogeneous, but their inclusion requires a quasi-homogeneous scaling of parameters to uniformly scale the output of the model.
For example, the choice of $\lambda$ can be $1$ for $b^2$ and $1/2$ for all other parameters.

\textit{Networks with residual connections.}
Similar to networks with biases, residual connections result in a computational path that requires a quasi-homogeneous scaling of the parameters.
For example, the model $f(x;\theta) = \sum_{j} w^2_{j}\sigma\left(\sum_{i} w^1_{ji} x_i + x_j\right)$ is quasi-homogeneous, where the choice of $\lambda$ can be $1$ for $w^2$ and $0$ for $w^1$.

\textit{Networks with normalization layers.}
As discussed in \cite{kunin2020neural}, when normalization layers, such as batch normalization, are introduced into a homogeneous network, they induce scale invariance in the parameters in the preceding layer.
However, as long as the last layer is positive homogeneous, then a network with normalization layers is quasi-homogeneous.
For example, the network $f(x;\theta) = \sum_i w_i h_i(\theta',x) + b$ is quasi-homogeneous, where $w$ is the weight of the last layer, $b$ is the bias, $\theta'$ is the set of parameters in earlier layers, and $h(\theta',x)$ is the activation of the last hidden layer after normalization.
The choice of $\lambda$ can be $1$ for $w$ and $b$ and $0$ for $\theta'$.

See App.~\ref{apx:quasi-homogeneous} for more examples of quasi-homogeneous models and their relationship to ensembles of homogeneous networks of different orders, as discussed in \cite{nacson2019lexicographic}.

\textbf{Geometric properties.}
Like homogeneous functions, quasi-homogeneous functions have certain geometric properties of their derivatives.
Analogous to \textit{Euler's Homogeneous Function Theorem}, for a quasi-homogeneous $f(x;\theta)$, we have $\langle \nabla_\theta f(x; \theta), \Lambda \theta \rangle = f(x; \theta)$,
which is easily derived by evaluating the derivative $\nabla_\alpha f(x; \psi_\alpha(\theta))$ at $\alpha = 0$, the identity element of the transformation.
Analogous to how the derivative of a homogeneous function of order $L$ is a homogeneous function of order $L-1$, the derivative of a quasi-homogeneous function under the same transformation respects the following property, $\nabla_\theta f(x; \psi_\alpha(\theta)) = e^{\alpha(I-\Lambda)}\nabla_\theta f(x;\theta)$.
%
See App.~\ref{apx:quasi-homogeneous} for a derivation of the geometric properties of quasi-homogeneous functions.

\begin{wrapfigure}{r}{0.5\textwidth}
    \vspace{-10pt}
    \centering
    \begin{subfigure}[b]{0.24\textwidth}
         \centering
         \includegraphics[width=\textwidth]{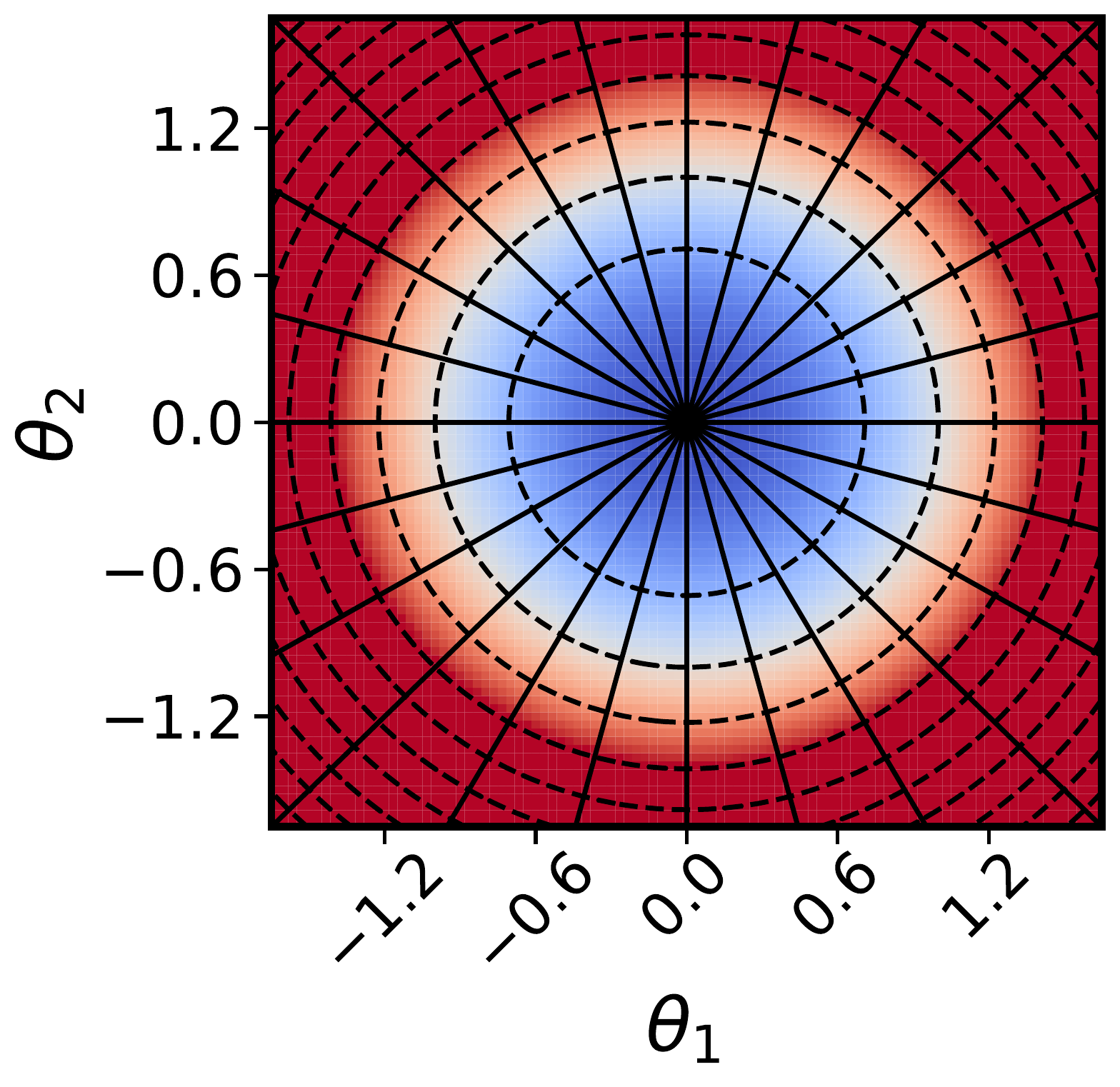}
         \caption{$\lambda_1 = \lambda_2 = 1$}
         \label{fig:1-1-quasi-homogeneous}
     \end{subfigure}
     \begin{subfigure}[b]{0.24\textwidth}
         \centering
         \includegraphics[width=\textwidth]{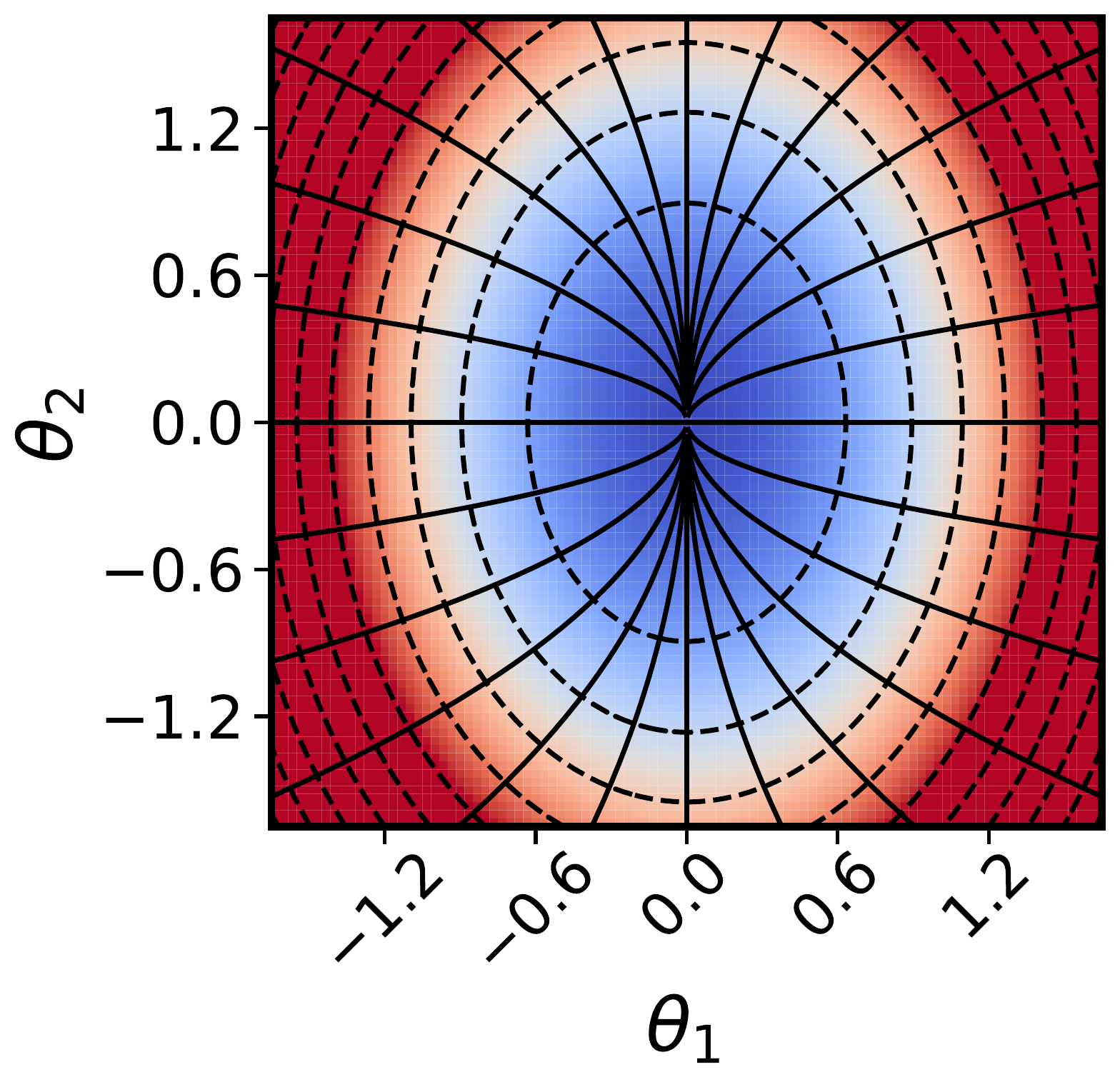}
         \caption{$\lambda_1 = 1, \lambda_2 = 0.5$}
         \label{fig:1-0.5-quasi-homogeneous}
     \end{subfigure}
    \caption{
    \textbf{A natural coordinate system for quasi-homogeneous models.}
    A useful coordinate system for studying the gradient dynamics of quasi-homogeneous models is the decomposition of parameter space into characteristic curves (solid lines) and level sets of the $\Lambda$-seminorm (dashed lines).
    For a homogeneous function (left), this decomposition is equivalent to a polar decomposition.
   For a quasi-homogeneous function (right), then the directions of the characteristic curves are eventually dominated by the highest-rate parameters and the level sets of the $\Lambda$-seminorm are concentric ellipsoids.
    }
    \label{fig:characteristic-curves}
    \vspace{-15pt}
\end{wrapfigure}

\textbf{Characteristic curves.}
Throughout this work we consider the partition of parameter space into the family of one-dimensional \textit{characteristic curves} mapped out by the parameter transformation in Eq.~\ref{eq:transformation}.
%
The vector field generating the transformation, $\frac{\partial \psi_\alpha}{\partial \alpha} |_{\alpha = 0} = \Lambda \theta$, is tangent to the characteristic curve and thus we will refer to this vector as the \textit{tangent vector}.
We define the \textit{angle} $\omega$ between the \textit{velocity} $\frac{d \theta}{dt}$ and tangent vector such that the \textit{cosine similarity} between these two vectors is $\beta := \cos(\omega) = \frac{\langle \Lambda \theta,\frac{d\theta}{dt}\rangle}{\|\Lambda \theta\| \|\frac{d\theta}{dt}\|}$.

\textbf{$\Lambda$-Seminorm.}
The characteristic curves perpendicularly intersect a family of concentric ellipsoids defined by the \textit{$\Lambda$-seminorm}, $\|\theta\|_\Lambda^2 := \sum_i \lambda_i\theta_i^2$.
Together, the intersection of a given characteristic curve with an ellipsoid of given $\Lambda$-seminorm uniquely defines a single point in parameter space. 
In the setting of homogeneous networks, this geometric structure is equivalent to a polar decomposition of parameter space. 
We also define the $\Lambda$-normalized parameters $\hat{\theta} = \psi_{-\tau}(\theta)$ where $\tau(\theta)$ is implicitly defined such that $\|\hat{\theta}\|_\Lambda^2 = 1$.
This corresponds to a unique projection of parameter $\theta$ onto the unit $\Lambda$-seminorm ellipsoid by moving along a characteristic curve.
%

As shown in Fig.~\ref{fig:characteristic-curves}, for a homogeneous function, the characteristics are rays and the $\Lambda$-seminorm is proportional to the Euclidean norm $\|\theta\|$.
For a quasi-homogeneous function, then the directions of the characteristic curves and the $\Lambda$-seminorm are eventually dominated by the highest-rate parameters.
Thus, we will also find it helpful to define the \textit{$\Lambda_{\max}$-seminorm} as $\|\theta\|_{\Lambda_{\max}}^2 := \sum_{i : \lambda_i = \lambda_{\max}} \lambda_i \theta_i^2$.

\vspace{-8pt}
\section{Quasi-Homogeneous Maximum-Margin Bias}
\label{sec:KKT-Theorem}
\vspace{-8pt}

Having defined the class of quasi-homogeneous models and identified a natural coordinate system to explore their gradient dynamics, we now generalize the maximum-margin bias theory developed in \cite{lyu2019gradient} for homogeneous models to a general quasi-homogeneous model $f(x;\theta)$.
Following the analysis strategy of \cite{lyu2019gradient}, we make the following assumptions:

\begin{compactitem}
    \item \begin{assumption}[Quasi-Homogeneous]
        \label{as:quasi}
        There exists a non-zero diagonal positive semi-definite matrix $\Lambda$, such that the model $f(x;\theta)$ is $\Lambda$-quasi-homogeneous.
    \end{assumption}
    \item
    \begin{assumption}[Regularity]
        \label{as:reg}
        For any fixed $x$, $f(x;\theta)$ is locally Lipschitz and admits a chain rule\footnote{Nearly all neural networks have this property, including those with ReLU activations. For details, see \cite{davis2020stochastic} or \cite{lyu2019gradient}.}.
    \end{assumption}
    \item \begin{assumption}[Exponential Loss]
        \label{as:exp}
        $\mathcal{L}(\theta) = \frac{1}{n}\sum_i \ell_i$ where $\ell_i = e^{-y_i f(x_i;\theta)}$.
    \end{assumption}
    \item \begin{assumption}[Gradient Flow]
        \label{as:gf}
        Learning dynamics are governed by $\frac{d \theta}{dt} \in \partial_{\theta}^{\circ} \mathcal{L}$ \footnote{The Clarke's subdifferential $\partial^\circ_{\theta} \mathcal{L}$ is a generalization of $\nabla_\theta\mathcal{L}$ for locally Lipschitz functions. For details, see App. \ref{apx:quasi-homogeneous}} for all $t>0$.
    \end{assumption}
    \item \begin{assumption}[Strong Separability]
        \label{as:sep}
        There exists a time $t_0$ such that $\mathcal{L}(\theta(t_0))<n^{-1}$.
    \end{assumption}
\end{compactitem}
We also make the following additional assumptions not presented in \cite{lyu2019gradient}:

\begin{compactitem}
    \item \begin{assumption}[Normalized Convergence]
        \label{as:conv}
        $\lim_{t \to \infty}\hat{\theta}(t)$ exists.
    \end{assumption}
    \item \begin{assumption}[Conditional Separability]
        \label{as:cond-sep}
        There exists a $\kappa > 0$ such that only $\theta$ with $\|\hat{\theta}\|_{\Lambda_{\max}} \ge \kappa$ can separate the training data.
    \end{assumption}
\end{compactitem}

A\ref{as:conv} implies the convergence of the decision boundary and A\ref{as:cond-sep} implies that $\lambda_{\max}$ parameters play a role in the classification task.
A\ref{as:conv} is necessary for a technical reason, but we expect that this assumption can be weakened by exploiting the argument in \cite{ji2020directional}.
A\ref{as:cond-sep} is trivially true for a homogeneous model where $\|\hat\theta\|_{\Lambda_{\max}} = \|\hat\theta\| = 1$, but not for a quasi-homogeneous model.
In section~\ref{sec:separating-balls} we will consider what happens when we remove this assumption.
We now state our main theoretical result:

\begin{theorem}[Quasi-Homogeneous Maximum-Margin]
\label{theorem:KKT}
Under assumptions A1 to A7, there exists an $\alpha \in \mathbb{R}$ such that $\psi_\alpha(\lim_{t \to \infty}\hat{\theta}(t))$ is a first-order KKT point\footnote{This KKT condition is necessary for the optimality since every feasible point satisfies Mangasarian-Fromovitz constraint qualification (MFCQ) condition (Lemma~\ref{lemma:MFCQ}).} of the optimization problem:
\begin{equation}
    \label{eq:asymmetric-max-margin}
    \tag{P}
    \begin{split}
        \text{minimize} &\qquad \frac{1}{2}\|\theta\|_{\Lambda_{\max}}^2 \\
        \text{subject to} &\qquad y_if(x_i;\theta) \ge 1 \quad \forall i \in [n]
    \end{split}
\end{equation}
\end{theorem}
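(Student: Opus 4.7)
The plan is to mirror the analysis of \cite{lyu2019gradient} for homogeneous networks, replacing their Euclidean normalization with the $\Lambda$-seminorm normalization from Section~\ref{sec:quasi-homogeneous}. Introduce the smoothed margin $\tilde q(\theta) := -\log(n\mathcal{L}(\theta)) = -\log \sum_i e^{-q_i}$ with $q_i := y_i f(x_i;\theta)$, together with the smoothed normalized margin $\tilde\gamma(\theta) := e^{-\tau(\theta)}\tilde q(\theta)$, where $\tau$ is the implicit function defined by $\|\hat\theta\|_\Lambda = 1$. Assumption A5 guarantees $\tilde q > 0$ for $t\ge t_0$, so $\tilde\gamma$ is well-defined and positive along the trajectory. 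The target KKT point will be built as $\psi_\alpha(\lim_{t\to\infty}\hat\theta(t))$, with $\alpha$ chosen so that the tightest constraint of (\ref{eq:asymmetric-max-margin}) is active.

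\textbf{Monotonicity of the smoothed normalized margin.} The first technical step is to show $\dot{\tilde\gamma}\ge 0$ for $t\ge t_0$. Direct computation yields $\dot{\tilde q} = \|\dot\theta\|^2/\mathcal{L}$, while implicit differentiation of $\|\hat\theta\|_\Lambda^2 = 1$ expresses $\dot\tau$ in terms of $\langle\Lambda\theta,\dot\theta\rangle$. Applying the quasi-homogeneous Euler identity $\langle\nabla_\theta f(x;\theta),\Lambda\theta\rangle = f(x;\theta)$ to $\dot\theta = -\nabla_\theta\mathcal{L} = \tfrac{1}{n}\sum_i e^{-q_i}y_i\nabla_\theta f(x_i;\theta)$ converts $\langle\Lambda\theta,\dot\theta\rangle$ into $\tfrac{1}{n}\sum_i e^{-q_i} q_i$, which is close to $\mathcal{L}\,\tilde q$ once the $q_i$ concentrate near $q_{\min}$. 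Combining these identities with the Cauchy--Schwarz bound $\langle\Lambda\theta,\dot\theta\rangle \le \|\Lambda\theta\|\|\dot\theta\|$ should reduce $\dot{\tilde\gamma}\ge 0$ to the inequality $\dot{\tilde q}/\tilde q \ge \dot\tau$, mirroring the homogeneous argument with $\Lambda\theta$ playing the role that $\theta$ played there.

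\textbf{Alignment and incipient KKT conditions.} Monotonicity plus an a~priori upper bound on $\tilde\gamma$ (inherited from boundedness of the training data) forces $\tilde\gamma$ to converge, which in turn makes the Cauchy--Schwarz estimate above asymptotically tight: the cosine similarity $\beta$ between $\dot\theta$ and $\Lambda\theta$ satisfies $\beta\to 1$, so the velocity aligns with the characteristic curves in the limit. Reading $\dot\theta \propto \Lambda\theta$ back through the explicit form of $\dot\theta$ and identifying $\mu_i \propto e^{-q_i}$ as incipient dual multipliers produces a limiting stationarity $\Lambda\theta \propto \sum_i \mu_i y_i \nabla_\theta f(x_i;\theta)$. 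Choosing $\alpha$ so that $\min_i y_i f(x_i;\psi_\alpha(\hat\theta_\infty)) = 1$ and rescaling $\mu$ accordingly delivers primal feasibility, while complementary slackness is immediate because any inactive constraint has an $e^{-q_i}$ that is exponentially suppressed relative to the active ones.

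\textbf{Principal obstacle: from $\Lambda$ to $\Lambda_{\max}$.} The central difficulty is converting the $\Lambda$-stationarity above into the asymmetric $\Lambda_{\max}$-stationarity required by (\ref{eq:asymmetric-max-margin}); this distinction is absent in the homogeneous setting where $\Lambda = \lambda_{\max}I$. For this, Assumption A6 is essential: convergence of $\hat\theta$ gives $\dot{\hat\theta}\to 0$, and the expansion $\dot{\hat\theta}_j = e^{-\tau\lambda_j}(\dot\theta_j - \dot\tau\lambda_j\theta_j)$, combined with the derivative scaling $\nabla_\theta f(x;\psi_\alpha(\theta)) = e^{\alpha(I-\Lambda)}\nabla_\theta f(x;\theta)$, should force $\sum_i\mu_i y_i\,\partial_{\theta_j}f(x_i;\psi_\alpha(\hat\theta_\infty)) = 0$ for every $j$ with $\lambda_j < \lambda_{\max}$, exactly matching the vanishing of $(\Lambda_{\max}\theta)_j$ at these indices. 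Finally, Assumption A7 guarantees $\|\hat\theta_\infty\|_{\Lambda_{\max}} > 0$, so the $\lambda_j = \lambda_{\max}$ block supplies a nontrivial stationarity $\Lambda_{\max}\theta = \sum_i \mu_i y_i\nabla_\theta f(x_i;\theta)$ with $\mu$ not identically zero. Together with primal feasibility and complementary slackness from the previous paragraph, this is a bona fide first-order KKT certificate for (\ref{eq:asymmetric-max-margin}).
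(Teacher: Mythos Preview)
Your overall architecture is right and largely matches the paper, but the paragraph you flag as the ``principal obstacle'' contains a genuine gap, and the upper bound on $\tilde\gamma$ is also misattributed.

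\textbf{The $\Lambda\to\Lambda_{\max}$ reduction.} The inference ``convergence of $\hat\theta$ gives $\dot{\hat\theta}\to 0$'' is not valid: A\ref{as:conv} only asserts that $\hat\theta(t)$ has a limit, which does not force its derivative to vanish (and in any case alignment $\beta\to 1$ is only obtained along a subsequence, not for all $t$). Even if one grants $\dot\theta\approx c\,\Lambda\theta$, for an index $j$ with $0<\lambda_j<\lambda_{\max}$ this gives $\dot\theta_j\approx c\lambda_j\theta_j\neq 0$, so your route does not produce $\sum_i\mu_i y_i\,\partial_{\theta_j}f(x_i;\cdot)=0$ at non-maximal coordinates. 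The paper's mechanism is different and does not go through $\dot{\hat\theta}$: one evaluates approximate stationarity at the rescaled point $\psi_{\alpha}(\theta(t))$ with $\alpha=-\log q_{\min}$, and the gradient scaling rule $\partial_\theta f(x;\psi_\alpha(\theta))=e^{\alpha(I-\Lambda)}\partial_\theta f(x;\theta)$ then inserts the diagonal matrix $Q$ with $Q_{jj}=q_{\min}^{\lambda_j-\lambda_{\max}}$ into the Lagrangian stationarity. Since $q_{\min}\to\infty$ (Lemma~\ref{lemma:divergence}), $Q\to D$, the indicator of $\{\lambda_j=\lambda_{\max}\}$, and this is precisely what converts the objective gradient from $\Lambda\theta$ to $\Lambda_{\max}\theta$. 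The argument is packaged as an $(\epsilon,\delta)$-KKT certificate along a subsequence (Lemma~\ref{lemma:approximate-KKT}) and then passed to the limit via \cite{dutta2013approximate}; A\ref{as:conv} is used only to ensure the rescaled sequence $\psi_{\alpha(t_k)}(\theta(t_k))$ converges, not to control $\dot{\hat\theta}$.

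\textbf{The upper bound on $\tilde\gamma$.} This is not ``inherited from boundedness of the training data.'' In the quasi-homogeneous case the normalized margin can in principle blow up along characteristic curves whose $\lambda_{\max}$-block is negligible; this is exactly what A\ref{as:cond-sep} rules out. The paper's bound (Lemma~\ref{lemma:gamma-bound}) uses $\|\theta\|_{\Lambda_{\max}}^{1/\lambda_{\max}}=e^{\tau}\|\hat\theta\|_{\Lambda_{\max}}^{1/\lambda_{\max}}\ge e^{\tau}\kappa^{1/\lambda_{\max}}$ from A\ref{as:cond-sep}, together with A\ref{as:conv} (via Lemma~\ref{lemma:bounded-normalized-trajectory}) to bound $e^{-\tau}q_{\min}$. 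Relatedly, the paper's smoothed normalized margin is $\tilde\gamma=\log((n\mathcal L)^{-1})/\|\theta\|_\Lambda^{1/\lambda_{\max}}$, not $e^{-\tau}\tilde q$; these coincide only when $\Lambda$ is a multiple of the identity, and the monotonicity proof (Lemma~\ref{lemma:dynamics}) relies on the specific denominator $\|\theta\|_\Lambda^{1/\lambda_{\max}}$ through the inequality $\|\Lambda\theta\|^2\le\lambda_{\max}\|\theta\|_\Lambda^2$.
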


%

%
\textbf{Significance.}
Theorem~\ref{theorem:KKT} implies that after interpolating the training data, the learning dynamics of the model are driven by a competition between maximizing the margin in function space and minimizing the $\Lambda_{\max}$-seminorm in parameter space.
At first glance, this might seem like a straightforward generalization of the result discussed in \cite{lyu2019gradient} for homogeneous networks, but crucially, whenever $\Lambda$ is quasi-homogeneous, which is the case for nearly all realistic networks, then the optimization problems are different, as $\|\theta\|_{\Lambda_{\max}} \neq \|\theta\|$.
In the quasi-homogeneous setting, the $\Lambda_{\max}$-seminorm will only depend on a subset of the parameters, and potentially an unexpected subset, such as just the last layer bias parameters for a standard fully-connected network.
In section~\ref{sec:separating-balls} and \ref{sec:neural-collapse} we will further discuss the implications of this result. 

\textbf{Intuition.}
The heart of the argument proving Theorem~\ref{theorem:KKT} essentially relies on showing that after all the assumptions are satisfied, then as $t \to \infty$ the $\Lambda$-seminorm diverges $\|\theta\|_\Lambda \to \infty$ and the angle $\omega$ converges $\omega \to 0$.
The convergence of $\omega$ implies that the training trajectory converges to a certain characteristic curve and the divergence of $\|\theta\|_\Lambda$ implies that the trajectory diverges along this curve away from the origin. 
In the homogeneous setting the characteristic curves are rays, implying that as $t \to \infty$ the velocity $\frac{d\theta}{dt}$ aligns in direction to $\theta$.
This alignment of the velocity with $\theta = \nabla\frac{1}{2}\|\theta\|^2$ is the key property allowing previous works to derive $\frac{1}{2}\|\theta\|^2$ as the objective function of the implicit optimization problem.
However, in the quasi-homogeneous setting, the directions of the characteristic curves are eventually dominated by the $\lambda_{\max}$ parameters, which is what gives rise to the asymmetric objective function $\frac{1}{2}\|\theta\|_{\Lambda_{\max}}^2$ in our work.

\textbf{Proof sketch.}
We defer most of the technical details of the proof of Theorem~\ref{theorem:KKT} to App.~\ref{apx:proof-main-theorem}, but state the central lemma and the overall logical structure below.
As in \cite{lyu2019gradient}, the key mathematical object of our analysis is a \textit{normalized margin}.
The \textit{margin}, defined as $q_{\min}(\theta) := \min_{i} y_i f(x_i;\theta)$, is non-differentiable and unbounded, making it difficult to study.
Thus, we define the normalized margin, $\gamma(\theta) := \frac{q_{\min}(\theta)}{\|\theta\|_\Lambda^{\lambda_{\max}^{-1}}}$, and the \textit{smooth normalized margin}, 
$\tilde{\gamma}(\theta) := \frac{\log((n\mathcal{L})^{-1})}{\|\theta\|_\Lambda^{\lambda_{\max}^{-1}}}$,
which is a smooth approximation of $\gamma$.
%
We then prove the following key lemma lower bounding changes in the $\Lambda$-seminorm $\|\theta\|_\Lambda$ and the smooth normalized margin $\tilde{\gamma}$.
This lemma holds throughout training, even before separability is achieved, and we believe could be of independent interest to understanding the learning dynamics.

\begin{lemma}[Dynamics of $\|\theta\|_\Lambda$ and $\tilde{\gamma}$]
    \label{lemma:dynamics}
    Under assumptions A\ref{as:quasi}, A\ref{as:reg}, A\ref{as:exp}, and A\ref{as:gf}, the dynamics of the $\Lambda$-seminorm and smooth normalized margin are governed by the following inequalities,
    \begin{equation}
        \frac{1}{2}\frac{d}{dt}\|\theta\|_\Lambda^2 \ge \mathcal{L}\log((n\mathcal{L})^{-1}),\qquad
        \frac{d}{dt}\log(\tilde{\gamma}) \ge \lambda_{\max}^{-1}\frac{d}{dt}\log(\|\theta\|_\Lambda)\tan(\omega)^2,
    \end{equation}
    for all $t>0$ for the first inequality, and for almost every $t>0$ for the second inequality.
\end{lemma}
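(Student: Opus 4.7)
\textbf{Proof proposal for Lemma~\ref{lemma:dynamics}.}
The plan is to prove the two inequalities separately, with the second reducing to the first after a few algebraic manipulations and one inequality relating $\|\Lambda\theta\|$ to $\|\theta\|_\Lambda$. For clarity I treat $f$ as differentiable; the subdifferential case is handled in the same way using the chain rule for Clarke subdifferentials, which accounts for the ``almost every $t$'' qualifier on the second bound.

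\textbf{First inequality.}
Write $u_i := y_i f(x_i;\theta)$ so that $\ell_i = e^{-u_i}$. Since $\tfrac{1}{2}\tfrac{d}{dt}\|\theta\|_\Lambda^2 = \langle \Lambda\theta, \dot\theta\rangle$ and gradient flow gives $\dot\theta = -\nabla_\theta \mathcal{L} = \tfrac{1}{n}\sum_i y_i \nabla_\theta f(x_i;\theta)\,e^{-u_i}$, I would apply the quasi-homogeneous Euler identity $\langle \nabla_\theta f(x_i;\theta), \Lambda\theta\rangle = f(x_i;\theta)$ (stated earlier in the excerpt) to obtain
\begin{equation*}
\langle \Lambda\theta,\dot\theta\rangle \;=\; \tfrac{1}{n}\sum_i y_i f(x_i;\theta)\,e^{-u_i} \;=\; \tfrac{1}{n}\sum_i u_i e^{-u_i}.
\end{equation*}
Introduce the probability weights $p_i = e^{-u_i}/(n\mathcal{L})$. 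Since $u_i = -\log(n\mathcal{L}\,p_i) = -\log(n\mathcal{L}) - \log p_i$, one gets $\sum_i p_i u_i = -\log(n\mathcal{L}) + H(p) \ge \log((n\mathcal{L})^{-1})$ by nonnegativity of Shannon entropy. Multiplying by $\mathcal{L}$ yields the first bound.

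\textbf{Second inequality.}
Let $N := \|\theta\|_\Lambda$ and $G := \log((n\mathcal{L})^{-1})$, so $\log\tilde\gamma = \log G - \lambda_{\max}^{-1}\log N$. The target inequality rearranges to $\tfrac{\dot G}{G}\cos^2\omega \ge \lambda_{\max}^{-1}\tfrac{\dot N}{N}$. I would first compute $\dot G$ directly: differentiating $G = -\log\bigl(\sum_i e^{-u_i}\bigr)$ and using $\dot u_i = y_i\langle \nabla_\theta f(x_i;\theta),\dot\theta\rangle$ together with the gradient-flow identity for $\dot\theta$ displayed above produces the clean formula $\dot G = \|\dot\theta\|^2/\mathcal{L}$. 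Next, $\dot N = \langle\Lambda\theta,\dot\theta\rangle/N$ and the definition of $\omega$ gives $\|\dot\theta\|\cos\omega = \langle\Lambda\theta,\dot\theta\rangle/\|\Lambda\theta\|$, so $\|\dot\theta\|^2\cos^2\omega = \langle\Lambda\theta,\dot\theta\rangle^2/\|\Lambda\theta\|^2$. Substituting these, the desired inequality becomes
\begin{equation*}
\langle\Lambda\theta,\dot\theta\rangle \;\ge\; \lambda_{\max}^{-1}\,\frac{\|\Lambda\theta\|^2}{N^2}\,\mathcal{L}\,G.
\end{equation*}
Finally, the elementary bound $\|\Lambda\theta\|^2 = \sum_i \lambda_i^2\theta_i^2 \le \lambda_{\max}\sum_i \lambda_i\theta_i^2 = \lambda_{\max} N^2$ collapses the right-hand side to at most $\mathcal{L} G$, and an application of the first inequality closes the chain.

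\textbf{Main obstacles.} The bulk of the work is algebraic, so the technical difficulties are mostly bookkeeping. Two points deserve care. First, several steps implicitly assume $G > 0$ (so that $\log\tilde\gamma$ is even defined, and so that one can divide by $G$ preserving the inequality direction); this matches the smoothed-margin setting. Second, when $f$ is only locally Lipschitz, $\nabla_\theta f$ must be replaced by an element of the Clarke subdifferential and the chain rule for subdifferentials is used to justify the computation of $\dot G$ and $\frac{d}{dt}\|\theta\|_\Lambda^2$; this is where the ``almost every $t$'' hypothesis comes in, and is likely the most delicate part of the write-up, though it follows the pattern established in \cite{lyu2019gradient}.
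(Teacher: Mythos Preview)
Your proposal is correct and follows essentially the same route as the paper: the first inequality via the quasi-homogeneous Euler identity plus the entropy bound $\sum_i p_i u_i \ge \log((n\mathcal{L})^{-1})$, and the second via $-\dot{\mathcal{L}} = \|\dot\theta\|^2$, the elementary bound $\|\Lambda\theta\|^2 \le \lambda_{\max}\|\theta\|_\Lambda^2$, and an appeal back to the first inequality. The only cosmetic difference is that the paper argues ``forward'' from $\frac{d}{dt}\log\tilde\gamma$ (and in fact invokes the norm bound twice), whereas you rearrange the target into $\langle\Lambda\theta,\dot\theta\rangle \ge \lambda_{\max}^{-1}\tfrac{\|\Lambda\theta\|^2}{N^2}\mathcal{L}G$ and close it with a single application---both are fine, and your observation that the argument tacitly needs $G>0$ (equivalently $n\mathcal{L}<1$) for the division/sign steps is on point.
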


Notice that once the separability assumption is met, the lower bound on the time-derivative of $\|\theta\|_\Lambda^2$ is strictly positive.
This allows us to conclude that the $\Lambda$-seminorm diverges and the loss converges $\mathcal{L} \to 0$ (Lemma~\ref{lemma:divergence}).
We then seek to prove the directional convergence of the parameters to the tangent vector $\Lambda \theta$ generating the characteristic curves.
We first prove that $\tilde{\gamma}$ is upper bounded using the definition of the margin and A\ref{as:cond-sep} (Lemma~\ref{lemma:gamma-bound}).
Combining this upper bound with the monotonicity of $\tilde{\gamma}$ proved in Lemma~\ref{lemma:dynamics}, we can conclude by a monotone convergence argument that $\tilde{\gamma}$ will converge. 
Taken together, the convergence of $\tilde{\gamma}$ and the divergence of $\|\theta\|_\Lambda^2$ implies the angle $\omega \to 0$ on a specific sequence of time (Lemma~\ref{lemma:alignment}).
Finally, we use the divergence $\|\theta\|_\Lambda \to \infty$ and the convergence $\omega \to 0$ to prove there exists a scaling of the normalized parameters that converges to a first-order KKT point of the optimization problem \ref{eq:asymmetric-max-margin} in Theorem~\ref{theorem:KKT}.

\textbf{Non-uniqueness of $\Lambda$.}
For a quasi-homogeneous function $f$, the value of $\Lambda$, and the $\lambda_{\max}$ parameter set, is not necessarily unique and therefore one may think Theorem~\ref{theorem:KKT} looks inconsistent. 
However, the conditional separability  (A\ref{as:cond-sep}), which is required to apply Theorem~\ref{theorem:KKT}, removes this possibility.
See App.~\ref{apx:uniqueness} for a discussion on how to determine the highest-rate $\lambda_{\max}$ parameter set.

\vspace{-8pt}
\section{Quasi-Homogeneous Maximum-Margin can Degrade Robustness}
\label{sec:separating-balls}
\vspace{-8pt}

\begin{wrapfigure}{r}{0.3\textwidth}
    \vspace{-10pt}
    \begin{center}
    \includegraphics[width=0.3\textwidth]{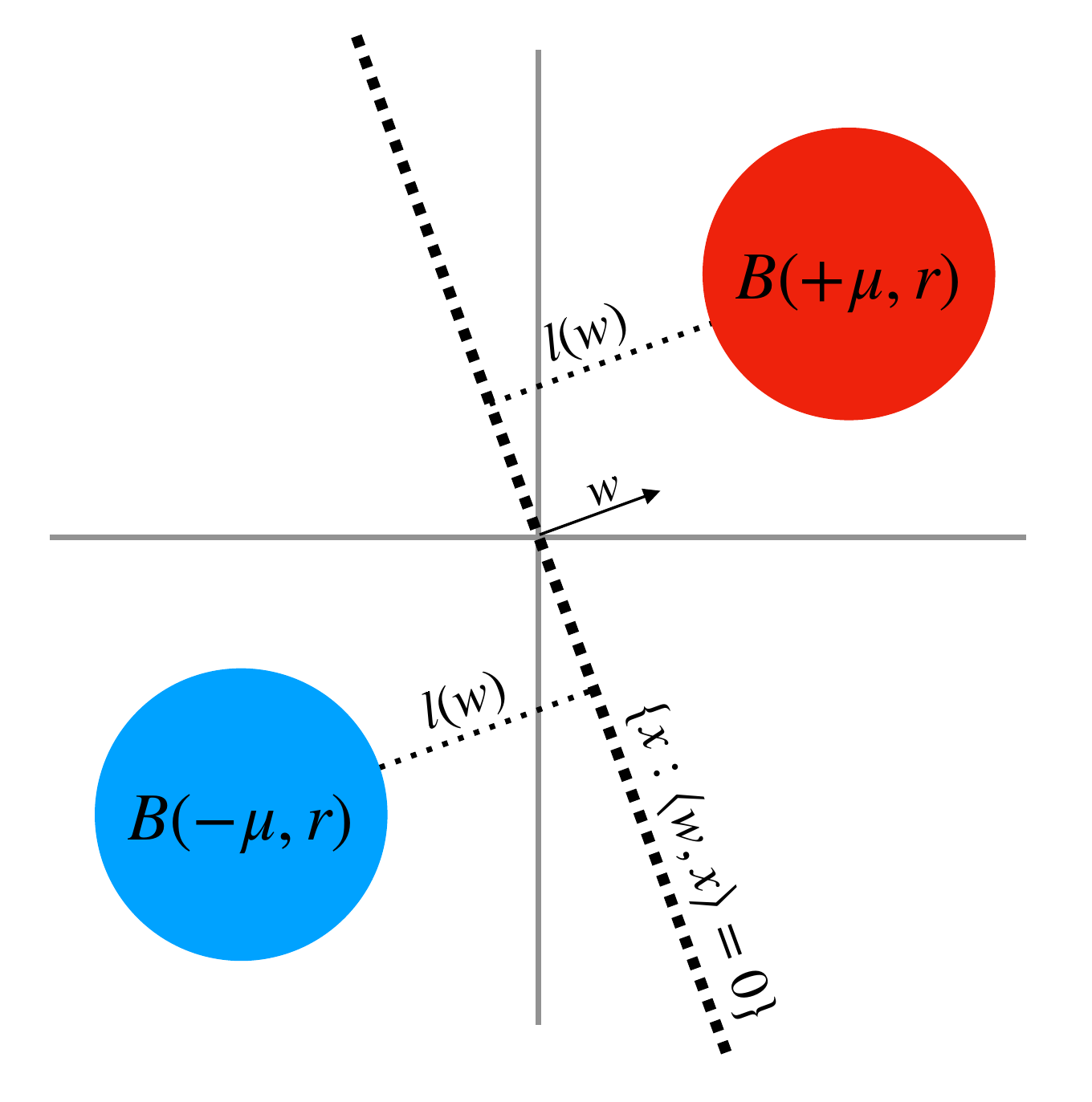}
    \end{center}
    \caption{
    \textbf{An illustrative example.}
    A 2D depiction of the binary classification task of learning a linear classifier $w$ to separate two balls $B(\pm \mu, r)$. 
    The robustness $l(w)$ is measured by the minimum Euclidean distance between the decision boundary and the balls.}
    \label{fig:balls}
    \vspace{-15pt}
\end{wrapfigure}

In section~\ref{sec:KKT-Theorem} we showed how gradient flow on a quasi-homogeneous model will implicitly minimize the norm of only the highest-rate parameters.
To explore the implications that this bias has on function space, we will consider a simple problem where analytic solutions exist. 
We will analyze the binary classification task of learning a linear classifier $w$ that separates two balls in $\mathbb{R}^d$.
Consider a dataset that forms two disjoint dense balls $B(\pm\mu,r)$ with centers at $\pm \mu\in\mathbb{R}^d$ and radii $r\in\mathbb{R}_+$. 
The label $y_i$ of a data point $x_i$ is determined by which ball it belongs to, such that $y_i = 1$ if $x_i\in B(\mu,r)$ and $y_i=-1$ if $x_i\in B(-\mu,r)$.
We assume $\left\|\mu\right\|=1$ and that $r < 1$ to ensure linear separability.
We measure the quality of a classifier by its \textit{robustness}, the minimum Euclidean distance between the decision boundary $\{x\in\mathbb{R}^d:\Braket{w,x}=0\}$ and the balls $B(\pm\mu,r)$.
See Fig.~\ref{fig:balls} for a depiction of the problem setup.

We will consider two parameterizations of a linear classifier, one that is homogeneous $f_{\text{hom}}(x;\theta) = \sum_i \theta_ix_i$ and one that is quasi-homogeneous
$f_{\text{quasi-hom}}(x;\theta) = \sum_i (\prod_{j=1}^{D_i} \theta_{ij})x_i$ where $D_i = 1$ for the first $m$-coordinates and $D_i > 1$ for the last $(d-m)$-coordinates.
For the quasi-homogeneous model, the parameters associated with the first $m$-coordinates are the $\lambda_{\max}$ parameters.
Let $P\in\mathbb{R}^{d\times d}$ be the projection matrix into the subspace spanned by the first $m$-coordinates, $P_{\perp} = I-P$ be the one into the last $(d - m)$-coordinates, and $\rho_\mu := \left\|P_{\perp}\mu\right\|$ be the norm of $\mu$ projected into this subspace.
%
As long as the radius $r > \rho_\mu$, then the conditional separability assumption of Theorem ~\ref{theorem:KKT} is satisfied\footnote{For all $w\in\mathbb{R}^d$ that separate the two balls $B(\pm\mu,r)$, $\left\|P w\right\| > 0$.}.
%
%
Applying Theorem~\ref{theorem:KKT}, we can conclude that for appropriate initializations\footnote{This problem does not have local minima, but it does have saddle points.}, $f_{\text{hom}}$ and $f_{\text{quasi-hom}}$ converge to the linear classifiers defined by the following optimization problems respectively,
\begin{align}
    \label{eq:max-margin-homogeneous-network}
    \min_{w \in \mathbb{R}^d} \left\|w\right\|  \text{ s.t. }  y(x)\Braket{w,x} \ge 1 \quad \forall x\in B(\pm\mu,r),\\
    \label{eq:max-margin-quasi-homogeneous-network}
    \min_{w\in\mathbb{R}^d} \left\|Pw\right\| \text{ s.t. }  y(x)\Braket{w,x} \ge 1 \quad \forall x\in B(\pm\mu,r).
\end{align}
Each of these two optimization problems is convex and has a unique minimizer, which we can derive exact expressions for by considering the subspace spanned by the vectors $P\mu$ and $P_\perp \mu$.

\begin{lemma}
    If separability ($r<1$) and conditional separability ($r>\rho_\mu$) hold, then Eq. \ref{eq:max-margin-homogeneous-network} and Eq. \ref{eq:max-margin-quasi-homogeneous-network} have unique minimizers, $w_{\text{hom}}$ and $w_{\text{quasi-hom}}$ respectively, which satisfy,
    \begin{equation}
    \label{eq:minimizer-of-toy-model}
    w_{\text{hom}} \propto \mu, \qquad w_{\text{quasi-hom}} \propto \sqrt{\frac{1-r^{-2}\rho_\mu^2}{1-\rho_\mu^2}}P\mu + r^{-1}P_{\perp}\mu,
    \end{equation}
    such that the robustness of these optimal classifiers is
    \begin{equation}
        \label{eq:linear-robustness}
        l(w_{\text{hom}}) = 1 - r, \qquad l(w_{\text{quasi-hom}})=\sqrt{1- r^{-2}\rho_\mu^2}\left(\sqrt{1- \rho_\mu^2} - \sqrt{r^2-\rho_\mu^2}\right).
    \end{equation}
\label{lemma:toy-model-quasi-hom}
\end{lemma}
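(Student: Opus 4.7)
The plan is to first reduce the continuum of ball constraints to a single inequality on $w$, recognize both problems as convex, solve each (the homogeneous case by Cauchy--Schwarz and the quasi-homogeneous case by a dimension reduction plus KKT), and finally verify the robustness formulas by direct computation. For any $w$, $\min_{x \in B(\mu,r)} \langle w, x\rangle = \langle w, \mu\rangle - r\|w\|$ is attained at $x = \mu - rw/\|w\|$, and the analogous bound over $B(-\mu, r)$ collapses both families of margin constraints to the single inequality $\langle w, \mu\rangle - r\|w\| \ge 1$. Since $w \mapsto \langle w, \mu\rangle - r\|w\|$ is concave, the feasible set is convex, making both problems convex (with convex objectives $\|w\|$ and $\|Pw\|$).

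For the homogeneous problem, Cauchy--Schwarz gives $\langle w, \mu\rangle \le \|w\|$, so feasibility forces $\|w\|(1-r) \ge 1$. Equality requires $w = \mu/(1-r)$, which is feasible, establishing it as the unique minimizer with $w_{\text{hom}} \propto \mu$ and $\|w_{\text{hom}}\| = 1/(1-r)$.

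For the quasi-homogeneous problem, the key step is a geometric reduction to the two-dimensional subspace $V = \mathrm{span}\{P\mu, P_\perp\mu\}$. Decompose $w = a u_P + b u_\perp + z$ with $u_P = P\mu/\sqrt{1-\rho_\mu^2}$, $u_\perp = P_\perp\mu/\rho_\mu$, and $z \perp V$; further split $z = z_P + z_\perp$ via $P$. Then $\langle w, \mu\rangle = a\sqrt{1-\rho_\mu^2} + b\rho_\mu$ is independent of $z$, while $\|w\|$ is strictly increasing in $(\|z_P\|, \|z_\perp\|)$ and $\|Pw\|^2 = a^2 + \|z_P\|^2$ is strictly increasing in $\|z_P\|$; hence any minimizer has $z = 0$. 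The problem reduces to minimizing $a$ subject to $a\sqrt{1-\rho_\mu^2} + b\rho_\mu - r\sqrt{a^2+b^2} \ge 1$. The KKT stationarity conditions for an active constraint yield $b/\sqrt{a^2+b^2} = \rho_\mu/r$ and $a/\sqrt{a^2+b^2} = \sqrt{1-r^{-2}\rho_\mu^2}$; conditional separability $r > \rho_\mu$ makes these well defined and separability $r < 1$ ensures a positive multiplier. Rewriting $w = (a/\sqrt{1-\rho_\mu^2})P\mu + (b/\rho_\mu)P_\perp \mu$ recovers the stated direction.

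To close, compute $l(w) = |\langle w, \mu\rangle|/\|w\| - r$. For $w_{\text{hom}} \propto \mu$ this is immediately $1 - r$. For the quasi-homogeneous direction, the squared-norm contributions from $P$ and $P_\perp$ sum to $(1-r^{-2}\rho_\mu^2) + r^{-2}\rho_\mu^2 = 1$, so $\|w_{\text{quasi-hom}}\| = 1$, and $\langle w_{\text{quasi-hom}}, \mu\rangle = \sqrt{(1-r^{-2}\rho_\mu^2)(1-\rho_\mu^2)} + \rho_\mu^2/r$. Using the identity $r - \rho_\mu^2/r = \sqrt{1-r^{-2}\rho_\mu^2}\sqrt{r^2-\rho_\mu^2}$ factors the robustness into the claimed product form. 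I expect the main obstacle to be a rigorous justification of the two-dimensional reduction, since $\|Pw\|$ is only a seminorm and a naive projection-of-minimizers argument can fail; the decomposition above circumvents this by exploiting that the orthogonal components $z_P$ and $z_\perp$ strictly tighten both the objective and the constraint while contributing nothing to $\langle w, \mu\rangle$.
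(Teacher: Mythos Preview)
Your proof is correct and shares the paper's overall reduction: collapse the ball constraints to the single inequality $\langle w,\mu\rangle - r\|w\|\ge 1$, observe that the problem effectively lives in $\mathrm{span}\{P\mu,P_\perp\mu\}$, solve there, and then compute the robustness using $l(w)=\langle w,\mu\rangle/\|w\|-r$ together with the factorization $r-\rho_\mu^2/r=\sqrt{1-r^{-2}\rho_\mu^2}\,\sqrt{r^2-\rho_\mu^2}$.

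The execution differs in one place. The paper never invokes KKT; instead it exploits that the objective depends only on $w_1=Pw$ to perform a nested optimization: for fixed $w_1$ it \emph{maximizes} the slack $\langle w_2,P_\perp\mu\rangle-r\sqrt{\|w_1\|^2+\|w_2\|^2}$ over $w_2=P_\perp w$ (a one–dimensional calculus problem after noting $w_2\parallel P_\perp\mu$), obtaining the effective constraint $\|P\mu\|\,\|w_1\|^{-1}\langle w_1,P\mu\rangle/\|P\mu\| - r\sqrt{1-\rho_\mu^2/r^2}\ge \|w_1\|^{-1}$, and then minimizes over $w_1$. You instead argue directly that the orthogonal component $z$ must vanish at any minimizer and then solve the two–variable problem via KKT stationarity. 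Both routes are equally short; your version has the mild advantage of making convexity and uniqueness explicit (the paper's nested argument leaves uniqueness slightly implicit), while the paper's version avoids having to justify that the unique KKT point is indeed the global minimizer. Either would be acceptable as a proof of the lemma.
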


\begin{wrapfigure}{r}{0.4\textwidth}
    \vspace{-20pt}
    \centering
    \includegraphics[width=0.38\textwidth]{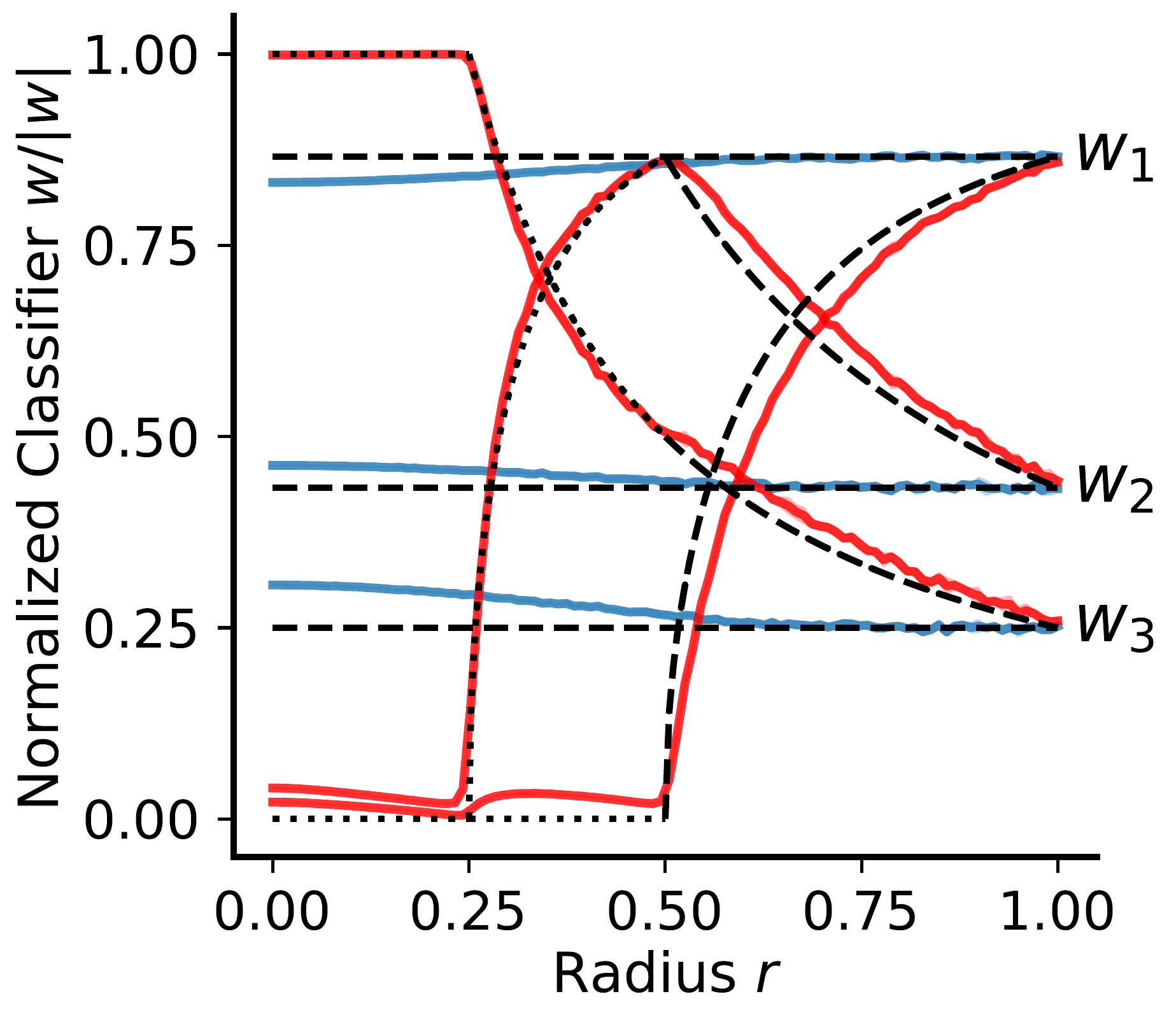}
    \includegraphics[width=0.38\textwidth]{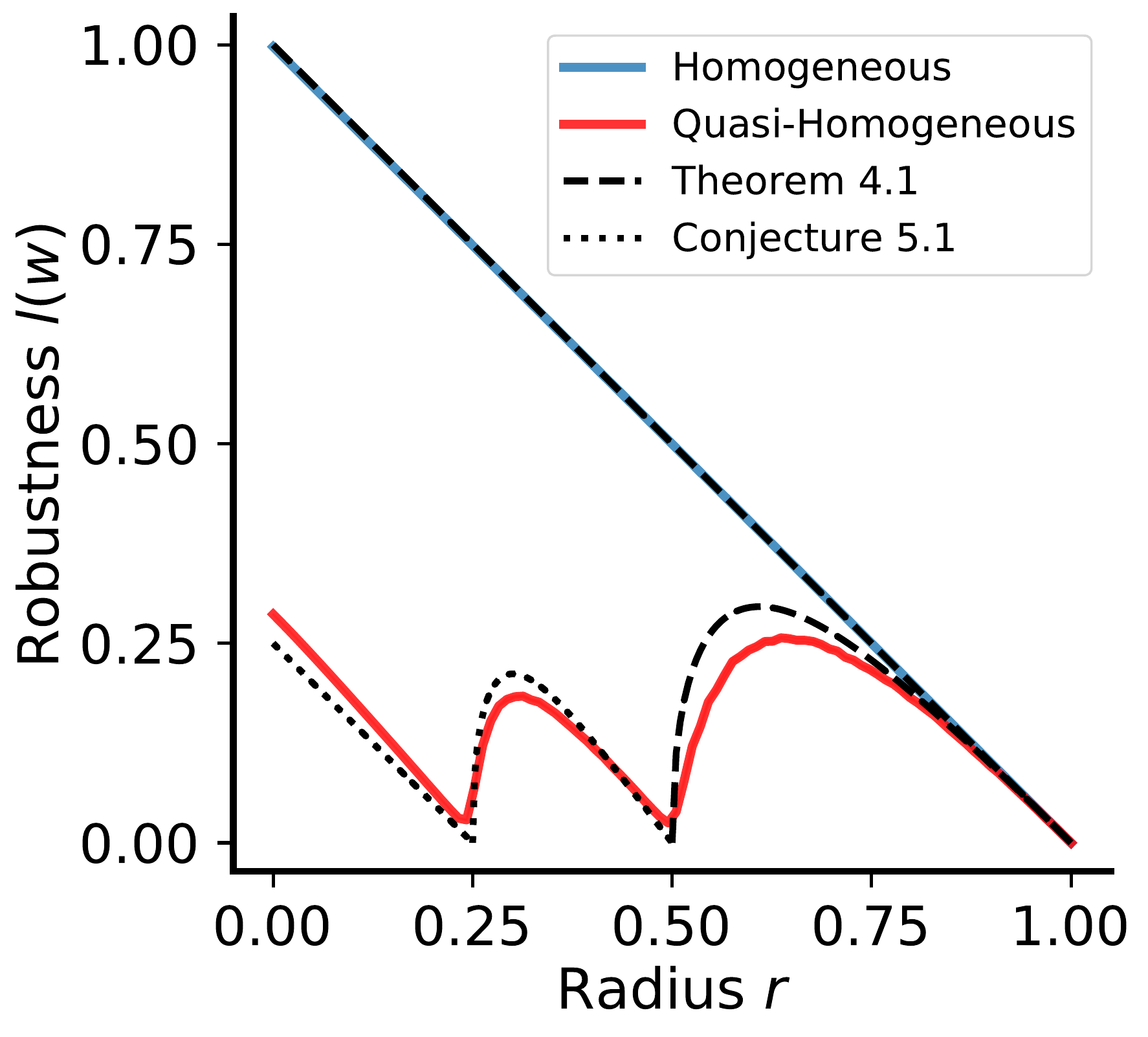}
    \caption{
        \textbf{Asymmetric maximum-margin can collapse robustness.}
        Tracking the elements of a classifier $w$ and its robustness $l(w)$ for a homogeneous and quasi-homogeneous model, trained by gradient flow on the binary classification problem in $\mathbb{R}^3$ for a sweep of radii $r$.
        As predicted by Lemma~\ref{lemma:toy-model-quasi-hom}, for the homogeneous model the classifier $w \propto \mu$ and robustness is linear in $r$, while for the quasi-homogeneous model the highest-rate parameters and the robustness collapses when $r = \rho_\mu = 0.5$.
        The value of $\mu = [0.87, 0.43, 0.25]$ and $\Lambda = [1, 0.2, 0.1]$.
        See App.~\ref{apx:experiment} for experimental details.
    }
    \label{fig:sweep-gf}
    \vspace{-45pt}
\end{wrapfigure}

From these expressions it is easy to confirm that $l(w_{\text{quasi-hom}}) \le l(w_{\text{hom}})$ for all $\rho_\mu < r < 1$.
%
%
%
For a fixed $\rho_\mu$, the gap in robustness between the homogeneous and quasi-homogeneous models increases as $r \downarrow \rho_\mu$.
These expressions demonstrate that the quasi-homogeneous maximum-margin bias can lead to a solution with vanishing robustness in function space.
To confirm this conclusion, we train $f_{\text{hom}}$ and $f_{\text{quasi-hom}}$ with gradient flow and keep track of the classifier $w$ and robustness $l(w)$ for the two models, while sweeping the radius from $\rho_\mu$ to $1$.
As shown in Fig.~\ref{fig:sweep-gf}, we see a sharp drop in the highest-rate parameters ($w_1$) and the robustness of the quasi-homogeneous model as $r\downarrow\rho_\mu (=0.5)$, while for the homogeneous model, the parameters are stable and the robustness is linear\footnote{If we consider higher-order homogeneous models, such as a deep linear network, then the resulting maximum margin bias would prefer sparse solutions, which could erode the robustness.} in $r$, as expected from Lemma~(\ref{lemma:toy-model-quasi-hom}).

So far we have restricted our analysis to the setting $r > \rho_\mu$, such that we can be certain the conditional separability assumption is met.
But what happens to the performance of the quasi-homogeneous model below this threshold $r \le \rho_\mu$?  
As shown in Fig.~\ref{fig:sweep-gf}, it appears that the model learns to discard the highest-rate parameters once they are unnecessary and the maximum-margin bias continues on the resulting sub-model.
In Fig.~\ref{fig:sweep-gf}, when $r \le 0.5$, the second highest-rate parameters ($w_2$) for the quasi-homogeneous model begins to collapse and the robustness curve repeats another swell, eventually collapsing again when $r = 0.25$.
%
%
%
%
Based on this, we conjecture a stronger version of Theorem~\ref{theorem:KKT} without the conditional separability assumption.
This conjecture is very similar to an informal conjecture discussed in \cite{nacson2019lexicographic} for ensembles of homogeneous models.

\begin{conjecture}[Cascading Minimization]
Under assumptions A1 to A6, there exists a $\tilde\lambda\in \mathbb{R}_+$ and an $\alpha \in \mathbb{R}$ such that $\psi_\alpha(\lim_{t \to \infty}\hat{\theta}(t))$ is a first-order KKT point of the optimization problem:
\begin{equation*}
    \begin{split}
        \text{minimize} &\qquad \frac{1}{2}\|\theta\|_{\Lambda I_{\tilde\lambda}}^2 \\
        \text{subject to} &\qquad y_if(x_i;\theta) \ge 1 \quad \forall i \in [n]\\
        &\qquad\theta_{l} = 0 \quad \forall \lambda_l > \tilde \lambda,
    \end{split}
\end{equation*}
where $I_{\tilde \lambda}$ is a diagonal matrix whose entry $(I_{\tilde \lambda})_{ii}$ is 1 if $\lambda_i = \tilde \lambda$ and $0$ otherwise.
\label{conj:cascading}
\end{conjecture}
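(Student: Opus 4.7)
The plan is to identify a cascading threshold $\tilde\lambda$ and reduce the conjecture to Theorem~\ref{theorem:KKT} applied to a sub-model in which the higher-rate coordinates are suppressed. I would define $\tilde\lambda$ as the largest value in the finite set $\{\lambda_i\}$ for which the projected model, obtained by fixing $\theta_l=0$ whenever $\lambda_l>\tilde\lambda$, both separates the training data and satisfies the analogue of conditional separability A\ref{as:cond-sep} at rate $\tilde\lambda$. Such a $\tilde\lambda$ exists because the full model separates the data by A\ref{as:sep}; descending from $\lambda_{\max}$, some rate is the smallest at which separation is still achievable, and that rate is the desired threshold.

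The central technical step is to show that the higher-rate coordinates are asymptotically inactive, meaning $(\hat\theta_\infty)_l=0$ for every $l$ with $\lambda_l>\tilde\lambda$. Lemma~\ref{lemma:dynamics} already forces $\|\theta\|_\Lambda\to\infty$, but one must show this divergence is absorbed only by the rates at or below $\tilde\lambda$. Heuristically the $\lambda_l>\tilde\lambda$ components are redundant: any margin they provide can be replicated by rescaling the $\tilde\lambda$-components, so growing them incurs a penalty in $\|\theta\|_\Lambda$ without a compensating gain in $\tilde\gamma$. To formalize this, I would compare the actual trajectory against a modified competitor that zeros out the higher-rate coordinates, apply Lemma~\ref{lemma:dynamics} to both, and combine the monotonicity of $\tilde\gamma$ with an upper bound on the competitor's $\tilde\gamma$ analogous to Lemma~\ref{lemma:gamma-bound} at rate $\tilde\lambda$ to conclude.

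Once the higher-rate coordinates vanish, the asymptotic dynamics coincide with gradient flow on the restricted $\Lambda I_{\tilde\lambda}$-quasi-homogeneous sub-model. This sub-model inherits A\ref{as:quasi}--A\ref{as:sep} and satisfies A\ref{as:cond-sep} at rate $\tilde\lambda$ by construction, so Theorem~\ref{theorem:KKT} applies and produces a first-order KKT point of the asymmetric max-margin problem in the restricted space. The additional KKT multipliers for the equality constraints $\theta_l=0$ with $\lambda_l>\tilde\lambda$ follow from the vanishing argument, recovering the full statement of the conjecture.

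The hard part is the vanishing step. Gradient flow contains no explicit mechanism that discards unused parameters, and one must rule out degenerate trajectories in which higher-rate components drift along non-informative directions of $\nabla_\theta f$. Ruling this out requires carefully exploiting the exponential-tail structure of the loss, which produces strong pull only along directions that actively reduce the margin constraints. A secondary subtlety is that A\ref{as:conv} is stated with respect to the $\Lambda$-seminorm, so $\hat\theta_\infty$ automatically localizes on the dominant rate present in $\theta$; proving that this dominant rate is exactly $\tilde\lambda$, rather than some stranded higher rate, may require strengthening A\ref{as:conv} to stipulate convergence of $e^{-\tilde\lambda\tau}\theta(t)$, or importing a Kurdyka-Lojasiewicz-style directional convergence result along the lines of \cite{ji2020directional}.
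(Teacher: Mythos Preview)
The statement is explicitly a \emph{conjecture} in the paper; the authors do not supply a proof. They motivate it empirically via Fig.~\ref{fig:sweep-gf} and, in their limitations discussion, sketch only a rough strategy: the challenge is to show that a version of Lemma~\ref{lemma:dynamics} continues to hold once the parameter space is restricted so that the $\lambda_{\max}$ parameters are zero, and they anticipate the argument would proceed by induction on the highest-rate parameters and their relevance to the classification task. So there is no ``paper's own proof'' to compare against.

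Your proposal is broadly consistent with the authors' informal roadmap---identify a threshold rate $\tilde\lambda$, argue the higher-rate coordinates vanish asymptotically, then invoke Theorem~\ref{theorem:KKT} on the restricted sub-model---and you correctly pinpoint the vanishing step as the crux. Two points deserve emphasis. First, your definition of $\tilde\lambda$ (``the largest rate for which the projected model separates and satisfies conditional separability'') is data-dependent, which matches the paper's framing, but you have not argued that this $\tilde\lambda$ coincides with the rate that actually dominates the gradient-flow trajectory; the paper's empirical evidence suggests it does, yet nothing in Lemma~\ref{lemma:dynamics} forces it. Second, the restricted sub-model is not obtained by following gradient flow on a smaller parameter space: the full flow still updates the $\lambda_l>\tilde\lambda$ coordinates, and those updates feed back into the lower-rate dynamics through $f$. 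Your ``competitor trajectory'' idea does not by itself control this coupling, and the monotonicity of $\tilde\gamma$ in Lemma~\ref{lemma:dynamics} is for the full $\Lambda$-seminorm, not for a truncated one. The authors flag exactly this gap when they say the challenge is to re-establish Lemma~\ref{lemma:dynamics} after restriction. In short, your outline is a reasonable plan of attack aligned with the paper's own hints, but it is a plan, not a proof, and the paper offers nothing stronger.
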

As shown in Fig.~\ref{fig:sweep-gf}, we find evidence of a cascading minimization of the first and then second highest-rate parameters as the radius drops below the respective thresholds that make these parameters necessary.

\vspace{-8pt}
\section{A Mechanism Behind Neural Collapse}
\label{sec:neural-collapse}
\vspace{-8pt}

In this section, we move away from linear models and consider the implications the quasi-homogeneous maximum-margin bias has in the setting of highly-expressive neural networks used in practice.
We identify that for sufficiently expressive neural networks with normalization layers, the asymmetric norm minimization drives the network to \textit{Neural Collapse}, an intriguing empirical phenomenon of the last layer parameters and features recently reported by \cite{papyan2020prevalence}.
In their paper, they demonstrate that the following four properties can be universally observed in the learning trajectories of deep neural networks once the training error converges to zero: 
(1) The last-hidden-layer feature vector converges to a single point for all the training data with the same class label.
(2) The convex hull of the convergent feature vectors forms a \textit{regular ($C-1$)-simplex}\footnote{A regular $(C-1)$-simplex is the convex hull of $C$ points where the distance between any pair is the same. \cite{papyan2020prevalence} refer to this simplex centered at the origin as a \textit{general simplex Equiangular Tight Frame}.} centered at the origin, where $C$ is the number of possible class labels.
(3) The last-layer weight vector for each class label converges to the corresponding feature vector up to re-scaling.
(4) For a new input, the neural network classifies it as the class whose convergent feature vector is closest to the feature vector of the given input.

A considerable amount of effort has been made to theoretically understand this mysterious phenomenon. 
\cite{han2021neural, poggio2019generalization, mixon2022neural,rangamani2022neural} studied Neural Collapse in the setting of mean-squared loss and \cite{fang2021exploring, tirer2022extended,weinan2022emergence, zhu2021geometric, ji2021unconstrained} introduced toy models to explain Neural Collapse in the setting of cross-entropy loss. 
These toy models are optimization problems over the last-hidden-layer feature vectors and the last-layer parameters, but not including parameters in the earlier layers.
Many of these works introduced unjustified explicit regularizations or constraints on the feature vectors in their model.
A recent work, \cite{ji2021unconstrained}, showed how gradient dynamics on the space of the last-hidden-layer feature vectors and last-layer weights, without any explicit regularization, would lead to Neural Collapse as a result of the implicit maximum-margin bias.
However, the real gradient dynamics of neural networks happen in the space of all parameters of the model, and hence it is not clear how an implicit bias that leads the model to Neural Collapse, can be induced by the \textit{parameter gradient dynamics}.
In this section, we show that the parameter gradient dynamics of any present-day neural networks can universally show Neural Collapse as long as they are sufficiently expressive, apply normalization to the last hidden layer, and are trained with the cross-entropy loss.
Our theoretical analysis is based on the regularization by normalization and the quasi-homogeneous maximum-margin bias.
Note that in \cite{papyan2020prevalence}, all the neural networks showing Neural Collapse are trained with the cross-entropy loss and have normalization. 
Specifically, we consider the $C$-class classification model $f_c(x) = w_c^T h(x,\theta') + b_c$ where the last layer weights $w_{c}\in\mathbb{R}^d$ and bias $b_c$ for $c\in [C]$ and the last-layer feature $h(x,\theta')\in\mathbb{R}^d$.
%
The feature vector $h(x,\theta')$ is obtained with layer normalization\footnote{Here we use layer normalization, but similar theorems would hold for other normalization schemes, such as batch normalization.}, and therefore it satisfies
\begin{equation}
    \sum_{j=1}^d h_j(x_i,\theta') = 0, 
    \quad \sum_{j=1}^{d} h_j^2(x_i,\theta') = 1
    \quad \forall i\in [n],
    \label{eq:condition-batch-norm}
\end{equation} 
where $\{(x_i,y_i)\}_{i\in[n]}$ is the training data.
This model is quasi-homogeneous with $\lambda=1$ for the $w_c$ and $b_c$, and $\lambda=0$ for parameters in the earlier layers $\theta'$.
Thanks to this quasi-homogeneity, our result for multi-class classification tasks (see App. \ref{apx:multiclass}) reveals that the rescaled parameters converge to a first-order KKT point of the following optimization problem:
\begin{small}
    \begin{equation}
        \min_{(w,b,\theta')} \sum_{c\in[C]} \left|w_c\right|^2 + \left|b\right|^2 \text{ s.t. } \min_{i\in[n]} \left[(w_{y_i})^T h(x_i, \theta') + b_{y_i} - \max_{c\neq y_i}\left[(w_{c})^T h(x_i, \theta')+b_{c}
        \right]\right]\geq 1.
        \label{eq:nc-implicit-optimization}
    \end{equation}
\end{small}
We further make the following assumptions on expressivity and data distribution:
\begin{compactitem}
    \item \begin{assumption}[Sufficient Expressivity]
        For any  $\{(x'_i,h'_i)\}_{i\in[n]}$ satisfying $\sum_{j} (h'_i)_j = 0$ and $\sum_{j} (h'_i)^2_j = 1
    \quad \forall i\in [n]$, there exists $\theta'$ satisfying $h(x'_i,\theta') = h'_i$ for any $i\in[n]$.
        \label{as:over-param}
    \end{assumption}
    \item \begin{assumption}[Existence of All Labels]
        For each class $c\in [C]$, there exists at least one data point in $\{(x_i,y_i)\}_{i\in [n]}$ whose label $y_i$ belongs to $c$.
        \label{as:data-dist}
    \end{assumption}
\end{compactitem}
The first assumption is to eliminate the possibility that any parameter configuration $\theta'$ cannot realize Neural Collapse.
Under these assumptions, the global minimum satisfies Neural Collapse: 

\begin{wrapfigure}{r}{0.4\textwidth}
    \vspace{-25pt}
    \begin{center}
    \includegraphics[width=0.38\textwidth]{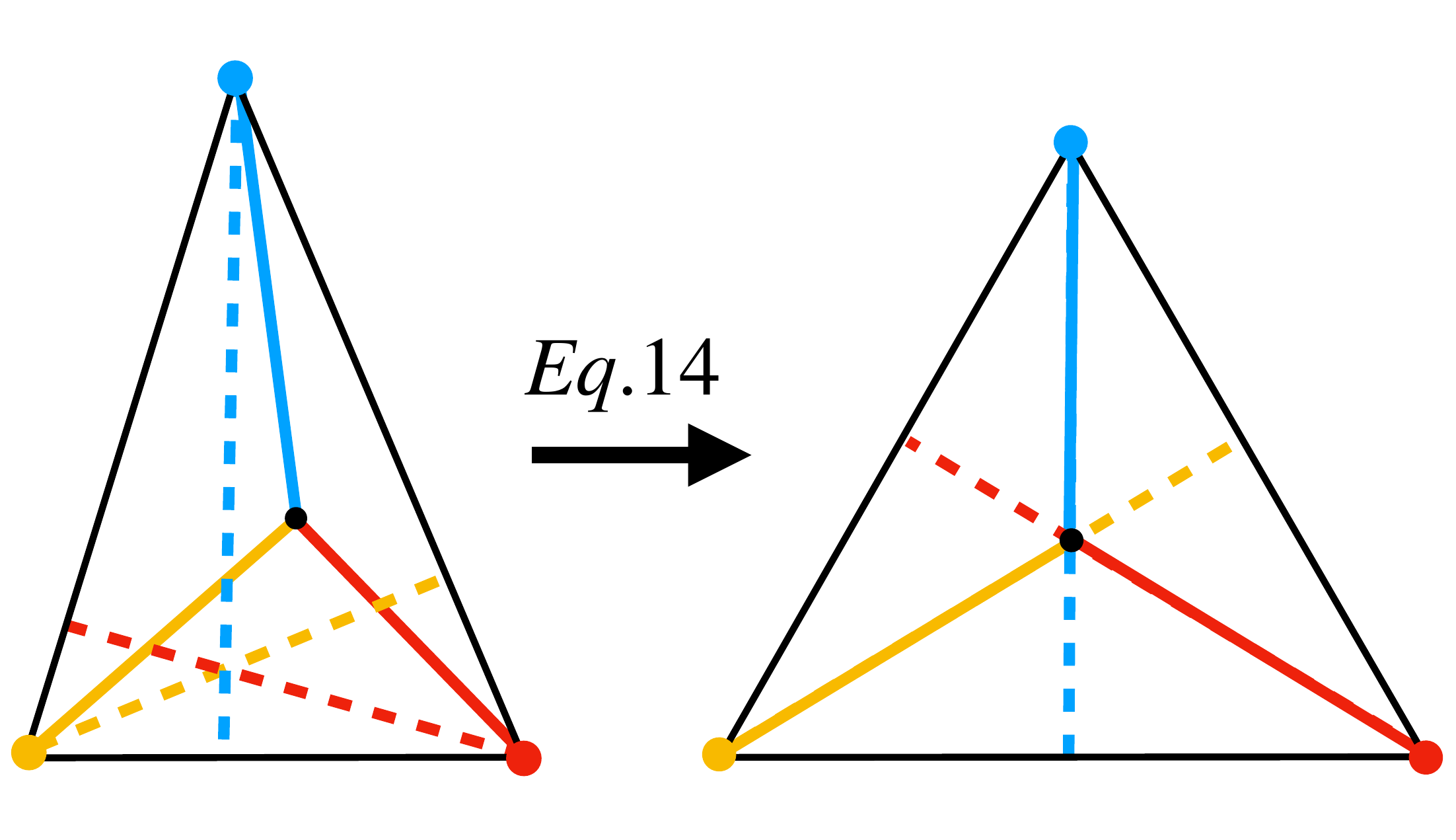}
    \end{center}
    \vspace{-5pt}
    \caption{
    \textbf{Geometric intuition.}
    An illustration of Eq.~\ref{eq:relaxed-neural-collapse-opt}.
    The black circle represents the origin, the solid lines represent the class vectors $w_c$, and the dotted lines represent the distance $L_c$.
    Intuitively, minimizing the lengths of the solid lines while maintaining a minimum length of the dotted lines will result in a regular simplex centered at the origin.
    }
    \label{fig:collapse}
    \vspace{-30pt}
\end{wrapfigure}
\begin{theorem}[Neural Collapse, short version]
\label{theorem:neural-collapse}
Under assumptions A\ref{as:over-param}, A\ref{as:data-dist}, and $d\geq C$, any global optimum of Eq.\ref{eq:nc-implicit-optimization} satisfies the four properties of Neural Collapse.
\end{theorem}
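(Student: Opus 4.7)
The plan is to exploit the sufficient-expressivity assumption A\ref{as:over-param} to eliminate the nonlinear feature map $\theta'$ from~\eqref{eq:nc-implicit-optimization}, reducing it to a finite-dimensional problem in the weights $\{w_c,b_c\}$ and free feature variables $\{h_i\}$, and then to derive a lower bound on the objective that is saturated precisely by Neural Collapse configurations.

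First, since A\ref{as:over-param} guarantees that any collection $\{h_i\}$ satisfying $\mathbf{1}^\top h_i = 0$ and $\|h_i\|=1$ is realizable by some $\theta'$, problem~\eqref{eq:nc-implicit-optimization} is equivalent to
\begin{equation*}
    \min_{w,b,\{h_i\}}\sum_c(|w_c|^2+b_c^2)\quad\text{s.t.}\quad\mathbf{1}^\top h_i=0,\;\|h_i\|=1,\;w_{y_i}^\top h_i+b_{y_i}-w_c^\top h_i-b_c\ge 1\;\forall i,\; c\ne y_i.
\end{equation*}
This reduces the problem to a constrained quadratic program over finitely many variables on which symmetry- and KKT-based reasoning become available.

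Second, I would obtain a lower bound on $\sum_c(|w_c|^2+b_c^2)$ by summing and manipulating the margin constraints. Aggregating $w_{y_i}^\top h_i + b_{y_i} - w_c^\top h_i - b_c \ge 1$ over $i$ and over $c\ne y_i$ (using A\ref{as:data-dist} so every class contributes at least one constraint), simplifying the bias terms using $\mathbf{1}^\top h_i=0$ where relevant, and applying Cauchy--Schwarz against the unit-norm constraint on each $h_i$, should yield an inequality of the form $\sum_c(|w_c|^2+b_c^2)\ge V_\star$ whose simultaneous equality conditions force: (i) within-class feature collapse $h_i = \mu_{y_i}$; (ii) the class means $\{\mu_c\}$ forming an equiangular tight frame with $\sum_c\mu_c=0$ and $\mu_c^\top\mu_{c'}=-1/(C-1)$ for $c\ne c'$; and (iii) $w_c = \alpha\mu_c$ for a common positive scalar $\alpha$. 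The hypothesis $d\ge C$ enters here to ensure that a regular $(C-1)$-simplex of unit vectors can actually be embedded in the codimension-one hyperplane $\{\mathbf{1}^\top h=0\}\subset\mathbb R^d$, so that the lower bound is attained inside the feasible set. Any global optimum of~\eqref{eq:nc-implicit-optimization} must therefore lie in an NC configuration, which gives properties~(1)--(3). Property~(4) follows as an immediate corollary: since $\|\mu_c\|$ is constant in $c$, $w_c=\alpha\mu_c$, and $b_c$ is constant in $c$, the linear rule $\arg\max_c[w_c^\top h+b_c]$ coincides with the nearest-$\mu_c$ classifier by Pythagoras.

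The main obstacle I anticipate is pinning down the exact chain of inequalities whose equality conditions are \emph{jointly} equivalent to NC properties~(1)--(3). Related SDP-duality computations appear in \cite{zhu2021geometric, ji2021unconstrained} for unconstrained-feature models, but here the features live on the unit sphere intersected with the zero-mean hyperplane rather than inside a Frobenius ball, so the bound must be reworked in this geometry. A secondary difficulty is handling the combinatorial $\max_{c\ne y_i}$ inside each margin constraint: showing at the optimum that all $C-1$ off-diagonal margins are simultaneously tight likely requires a KKT argument exploiting the class-permutation symmetry of the reduced problem that A\ref{as:data-dist} makes available.
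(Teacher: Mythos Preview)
Your first reduction step---using A\ref{as:over-param} to replace the feature map by free variables $h_i$ on the unit sphere in the zero-mean hyperplane---matches the paper exactly. After that the two arguments diverge.

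The paper does \emph{not} aggregate constraints and apply Cauchy--Schwarz. Instead it (i) drops the zero-mean constraint on $h_i$, (ii) for fixed $(w,b)$ maximizes over $h_i$ and, via a minimax swap, rewrites the feasibility condition as $\min_c L_c\ge 1$ where $L_c$ is essentially the distance from $w_c$ to the opposite face $\Delta_c=\mathrm{conv}\{w_{c'}:c'\neq c\}$, (iii) further relaxes $L_c$ to its value $Z_c$ at the barycenter of $\Delta_c$, which by translation invariance and $\sum_c(b_c-\bar b)=0$ collapses to $\sum_c\|w_c\|\ge C-1$, and (iv) climbs back up the relaxation chain, showing that any optimizer of the tighter problem also optimizes the looser one and hence inherits the regular-simplex structure. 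The zero-mean constraint is reinstated only at the very end.

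Your direct route can be made to work and is arguably shorter, but the step you flag as the obstacle is genuinely the crux and is not resolved by Cauchy--Schwarz alone: equality in the summed bound forces $\sum_c w_c=0$, $b=0$, $\|w_c\|=(C-1)/C$, and $h_i\propto w_{y_i}$, but does \emph{not} by itself pin down the off-diagonal inner products. To finish you must feed $h_i=\tfrac{C}{C-1}w_{y_i}$ back into the \emph{unsummed} margin constraints to obtain $w_c^\top w_{c'}\le -(C-1)/C^2$ for all $c\neq c'$, and then combine this with $\|\sum_c w_c\|^2=0$, which forces the average of these $C(C-1)$ inner products to equal that same upper bound; hence all are equal and the ETF follows. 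The paper's barycenter relaxation avoids this feasibility-backsubstitution step at the price of a longer chain of lemmas. One minor correction: the zero-mean constraint $\mathbf 1^\top h_i=0$ does not simplify the bias terms as you suggest; the biases drop out upon summing over classes because $\sum_c(b_c-\bar b)=0$, independently of the feature constraint.
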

Note that we do not exclude the possibility that Eq.\ref{eq:nc-implicit-optimization} has saddles or local minima. 
Therefore, depending on the initialization of the learning dynamics, it may end up with those sub-optimal first-order KKT points, which may not show Neural Collapse.

Essentially, the proof of Theorem ~\ref{theorem:neural-collapse} relies on first relaxing Eq.~\ref{eq:nc-implicit-optimization} to the optimization problem
\begin{equation}
    \min_{(w)} \sum_c \left|w_c\right|^2 \text{ s.t. } \min_{c\in[C]} L_c \geq 1,
    \label{eq:relaxed-neural-collapse-opt}
\end{equation}
where $L_c$ is the minimum distance from $w_c$ to the $(C-2)$-simplex formed by the convex hull of $\{w_{c'}\}_{c'\in[C]/\{c\}}$.
With accordance to our geometric intuition, the minimizer of this optimization problem is a regular $(C-1)$-simplex.
See Fig.~\ref{fig:collapse} for a visual depiction of this relaxed optimization problem and App.~\ref{apx:neural-collapse} for the details of the proof.


\vspace{-8pt}
\section{Conclusion}
\label{sec:conclusion}
\vspace{-8pt}

In this work, we extend and generalize a long line of literature studying the maximum-margin bias of gradient descent to quasi-homogeneous networks.
We show that after reaching a point of separability, the gradient flow dynamics are driven by a competition between maximizing the margin in function space and minimizing the $\Lambda_{\max}$-seminorm in parameter space.
We demonstrate, with a simple linear example, how this strong favoritism for the highest-rate parameters can degrade the robustness of quasi-homogeneous models and conjecture that this process, when possible, will reduce the model to a sparser parameterization.
Additionally, by applying our theorem to sufficiently expressive neural networks with normalization layers, we reveal a universal mechanism behind Neural Collapse.
Here we propose some future directions for this work.

\textbf{Discretization effect.}
In this work, we only considered gradient flow, but generalizing the theoretical results to (stochastic) gradient descent is an important future step. 
In particular, it is well understood that the discretization effect introduced by a finite learning rate has empirically measurable effects for parameters that are scale-invariant, such as those before normalization layers.
While gradient flow would predict the norm of these parameters to be constant through training, gradient descent predicts that they monotonically diverge, as demonstrated by \cite{kunin2020neural}.
Thus, extending our results to the setting of gradient descent could reshape Theorem~\ref{theorem:KKT}.

\textbf{Optimality of convergence points.}
We are only able to guarantee by Theorem~\ref{theorem:KKT} that the learning dynamics will converge to a first-order KKT point of the constrained optimization problem, but not whether this point is locally or globally optimal.
Better understanding the landscape of this optimization problem and determining when stronger statements can be made is a promising direction for future work.
Works such as \cite{chizat2020implicit, ji2020directional, vardi2021margin, lyu2021gradient} have made progress in this direction for simple homogeneous networks and could provide a strategy for investigating more complex quasi-homogeneous models.

\textbf{Influence of initialization.}
A major limitation of analyzing the maximum-margin bias of gradient flow is that the dynamics in this terminal phase of training are slow to converge or only become evident at extremely unpractical training loss levels.
Motivated by this limitation, \cite{woodworth2020kernel} and \cite{moroshko2020implicit} studied the gradient flow trajectories for diagonal linear networks and showed that there is a transition from a ``kernel" regime to a ``rich" regime controlled by the scale of the initialization and the final training loss level.
Extending this analysis to quasi-homogeneous networks would be a valuable future direction.

\textbf{Impact on performance.}
An important takeaway from our work is that the maximum-margin bias can actually degrade the performance of a quasi-homogeneous model.
The benefit depends on the parameterization of a model and its relationship to the geometry of the data.
Better understanding this interaction could be essential for diagnosing performance gaps of modern neural networks and provide a route towards designing robust architectures.

\clearpage

\section*{Acknowledgments}
We thank Kaifeng Lyu, Ben Sorscher, Daniel Soudry, and Hidenori Tanaka for helpful discussions. 
D.K. thanks the Open Philanthropy AI Fellowship for support. 
A.Y. thanks the Masason Foundation for support.
S.G. thanks the James S. McDonnell and Simons Foundations, NTT Research, and an NSF CAREER Award for support while at Stanford.

\bibliography{references}
\bibliographystyle{iclr2023_conference}

\clearpage
\appendix

\tableofcontents

\clearpage
\section{More Details on Quasi-homogeneous Models}
\label{apx:quasi-homogeneous}

\textbf{$\Lambda$-normalization.}
Here we provide more details on $\Lambda$-normalization as discussed in section~\ref{sec:quasi-homogeneous}.
The $\Lambda$-normalized parameters $\hat{\theta}$ are given by 
\begin{equation*}
    \hat{\theta} = \left(e^{-\tau \lambda_1}\theta_1, \dots, e^{-\tau \lambda_m}\theta_m\right) \quad \mathrm{s.t.} \quad \|\hat{\theta}\|_\Lambda^2 = 1
\end{equation*}
The value of $\tau$ is implicitly defined through the constraint $\|\hat{\theta}\|_\Lambda^2 = 1$.
Only in a select number of cases does an explicit expression for $\tau$ exist.
For example, in the homogeneous setting when $\Lambda = I$, $\tau = \log (\|\theta\|)$ and $\hat{\theta} = \frac{\theta}{\|\theta\|}$, as would be expected.

\begin{lemma}
    \label{lemma:uniquness-of-normalization}
    For all $\theta \in \mathbb{R}^m$ such that $\|\theta\|_\Lambda > 0$, the $\Lambda$-normalized parameters $\hat{\theta}$ are unique.
\end{lemma}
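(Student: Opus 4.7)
The plan is to reduce the claim to a one-dimensional monotonicity argument. By definition, $\hat\theta = \psi_{-\tau}(\theta) = e^{-\tau\Lambda}\theta$, so uniqueness of $\hat\theta$ is equivalent to uniqueness of the scalar $\tau \in \mathbb{R}$ satisfying the implicit equation $\|e^{-\tau\Lambda}\theta\|_\Lambda^2 = 1$. Since $\Lambda$ is diagonal with entries $\lambda_i \ge 0$, I would write the constraint explicitly as
\[
    g(\tau) \;:=\; \|e^{-\tau\Lambda}\theta\|_\Lambda^2 \;=\; \sum_{i : \lambda_i > 0} \lambda_i\, e^{-2\tau\lambda_i}\, \theta_i^2,
\]
noting that indices with $\lambda_i = 0$ contribute nothing to the $\Lambda$-seminorm.

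Next, I would show that $g$ is smooth, strictly decreasing, and surjects onto $(0,\infty)$. Differentiating term-by-term gives $g'(\tau) = -2\sum_{i:\lambda_i>0} \lambda_i^2\, e^{-2\tau\lambda_i}\,\theta_i^2$. The hypothesis $\|\theta\|_\Lambda > 0$ guarantees the existence of at least one index $i^\star$ with $\lambda_{i^\star} > 0$ and $\theta_{i^\star} \neq 0$, so the corresponding term in $g'$ is strictly negative and dominates, yielding $g'(\tau) < 0$ for all $\tau \in \mathbb{R}$. For the limiting behavior, every term in $g(\tau)$ satisfies $e^{-2\tau\lambda_i} \to \infty$ as $\tau \to -\infty$ and $\to 0$ as $\tau \to +\infty$ (using $\lambda_i > 0$), so $g(\tau) \to \infty$ and $g(\tau) \to 0$ at the two ends, respectively.

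By the intermediate value theorem combined with strict monotonicity, there is exactly one $\tau^\star \in \mathbb{R}$ with $g(\tau^\star) = 1$, and hence a unique $\hat\theta = e^{-\tau^\star \Lambda}\theta$. I do not anticipate any real obstacle here; the only point requiring care is the role of indices with $\lambda_i = 0$ (they are invisible to both $g$ and $g'$) and the observation that the hypothesis $\|\theta\|_\Lambda > 0$ is precisely what rules out the degenerate case where $g \equiv 0$ and no normalization is possible.
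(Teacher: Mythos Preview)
Your proof is correct and follows the same overall strategy as the paper's: reduce uniqueness of $\hat\theta$ to uniqueness of the scalar $\tau$, then show the defining equation has exactly one real solution. The execution differs in a minor but noteworthy way. The paper substitutes $z = e^{-2\tau}$ to rewrite the constraint as $\sum_i \lambda_i\theta_i^2\, z^{\lambda_i} - 1 = 0$ and then invokes Descartes' rule of signs (one sign change among the coefficients, hence exactly one positive root), whereas you work directly with $g(\tau)$ and use strict monotonicity together with the intermediate value theorem. Your route is arguably a bit cleaner: it avoids the slight abuse of calling $\sum_i \lambda_i\theta_i^2\, z^{\lambda_i}$ a ``polynomial'' when the $\lambda_i$ need not be integers (Descartes' rule does extend to such generalized polynomials, but that requires a separate justification), and it makes the role of the hypothesis $\|\theta\|_\Lambda > 0$ just as explicit.
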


\begin{proof}
    Proving uniqueness of $\hat{\theta}$ is equivalent to proving uniqueness of $\tau$.
    For a given $\theta$ and $\Lambda$, then $\tau = \log(1/\sqrt{z})$ where $z$ is the positive root of the polynomial $\sum_i \lambda_i\theta_i^2 z^{\lambda_i} - 1 = 0$.
    The coefficients $\lambda_i\theta_i^2 \ge 0$ are non-negative, and because $\|\theta\|_\Lambda > 0$, we know there exists at least one positive coefficient $\lambda_i\theta_i^2 >0$.
    Thus, there is exactly one sign change in the coefficients of this polynomial, which by Descartes' rule of signs, implies the polynomial has exactly one positive root, and thus $\tau$ is unique.
\end{proof}

\textbf{Locally Lipschitz Quasi-homogeneous models.}
To apply our analysis and Theorem~\ref{theorem:KKT} to many deep neural network settings including those with non-smooth ReLU activations, we here consider quasi-homogeneous functions with local Lipschitz property. For such functions $f(\theta):\mathbb{R}^d\to\mathbb{R}$, Clarke's subdifferential $\partial^\circ_\theta$ is defined as follows \cite{clarke2008nonsmooth}.
\begin{definition}[Clarke's subdifferential]
\begin{equation}
    \partial^{\circ}_\theta f(\theta):=\operatorname{conv}\left\{\lim _{k \rightarrow \infty} \nabla _\theta f\left(\theta_k\right): \lim_{k\to\infty}\theta_k = \theta, f \text { is differentiable at } \theta_k\right\}.
\end{equation}
\end{definition}
Similar to Theorem~B.2 in \cite{lyu2019gradient}, we can show that $\partial^\circ_\theta f(\theta)$ satisfies a scaling property and a version of Euler's theorem.
\begin{lemma}
\label{lemma:df-scaling-property}
Let $f(\theta):\mathbb{R}^d\to\mathbb{R}$ be locally Lipschitz and $\Lambda$-quasi-homogeneous. $\partial^\circ_\theta f$ satisfies the following scaling property:
\begin{equation}
\label{eq:df-scaling-property-lipschitz}
    \partial^\circ_\theta f( \psi_\alpha(\theta))= \left\{e^{\alpha(I-\Lambda)}h: h\in\partial^\circ_\theta f(\theta)\right\}
\end{equation}
for any $\alpha>0$ and $\theta\in\mathbb{R}^d$.
\end{lemma}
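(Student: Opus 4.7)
The plan is to lift the scaling identity for $\nabla_\theta f$ (valid at points of differentiability for any $\Lambda$-quasi-homogeneous function) to Clarke's subdifferential via the definition as a convex hull of limits of gradients. Throughout, the key structural fact is that the transformation $\psi_\alpha$ is a linear diffeomorphism of $\mathbb{R}^d$ with $\psi_\alpha^{-1} = \psi_{-\alpha}$ and Jacobian $e^{\alpha\Lambda}$, which maps points of differentiability to points of differentiability and carries convergent sequences to convergent sequences.

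First I would establish the pointwise scaling of the gradient. At any $\theta$ where $f$ is differentiable, the quasi-homogeneity identity $f(\psi_\alpha(\theta)) = e^\alpha f(\theta)$ is likewise differentiable in $\theta$, so applying the chain rule to both sides gives
\begin{equation*}
    e^{\alpha\Lambda}\,\nabla_\theta f(\psi_\alpha(\theta)) = e^\alpha \nabla_\theta f(\theta),
\end{equation*}
and hence $\nabla_\theta f(\psi_\alpha(\theta)) = e^{\alpha(I-\Lambda)} \nabla_\theta f(\theta)$, since $\Lambda$ is diagonal so $e^{\alpha\Lambda}$ is invertible with inverse $e^{-\alpha\Lambda}$.

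Next I would transfer this pointwise identity to the Clarke subdifferential. The main observation is that, because $\psi_\alpha$ is a $C^\infty$ diffeomorphism, $f$ is differentiable at a point $\theta_k$ if and only if it is differentiable at $\psi_\alpha(\theta_k)$, and $\theta_k \to \theta$ iff $\psi_\alpha(\theta_k) \to \psi_\alpha(\theta)$. Given any sequence $\theta_k \to \theta$ of differentiability points with $\nabla_\theta f(\theta_k) \to h$, the points $\psi_\alpha(\theta_k) \to \psi_\alpha(\theta)$ are differentiability points, and by the gradient identity above $\nabla_\theta f(\psi_\alpha(\theta_k)) = e^{\alpha(I-\Lambda)}\nabla_\theta f(\theta_k) \to e^{\alpha(I-\Lambda)} h$. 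This shows that every element of $\{e^{\alpha(I-\Lambda)} h : h \in \partial^\circ_\theta f(\theta)\}$ (before taking the convex hull) arises as a limit of gradients at $\psi_\alpha(\theta)$, and running the argument with $\alpha$ replaced by $-\alpha$ starting from a sequence approaching $\psi_\alpha(\theta)$ yields the reverse inclusion. Finally, since $h \mapsto e^{\alpha(I-\Lambda)} h$ is a linear bijection, it commutes with the convex hull operation used in the definition of $\partial^\circ_\theta$, giving equality of the convex hulls and hence the claim.

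The only mildly subtle step is step two, the equivalence of differentiability at $\theta_k$ and at $\psi_\alpha(\theta_k)$. The hard direction uses that a locally Lipschitz function composed (on either side) with a $C^\infty$ diffeomorphism is differentiable at a point precisely when the original function is differentiable at the image point; applying this to the identity $e^\alpha f(\theta) = f(\psi_\alpha(\theta))$, with $\psi_\alpha$ smooth and invertible, yields the equivalence. Once that is in place, the remaining pieces are routine: pushing limits through the continuous linear map $e^{\alpha(I-\Lambda)}$ and using that this map commutes with taking convex hulls.
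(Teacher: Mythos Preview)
Your proposal is correct and follows essentially the same approach as the paper: both establish the gradient scaling $\nabla_\theta f(\psi_\alpha(\theta)) = e^{\alpha(I-\Lambda)}\nabla_\theta f(\theta)$ at differentiability points via the chain rule and the diffeomorphism $\psi_\alpha$, transfer this identity to the set of gradient limits by observing that $\psi_\alpha$ bijects differentiability points and convergent sequences, and finally pass to convex hulls using that $e^{\alpha(I-\Lambda)}$ is linear. The only difference is organizational---you isolate the gradient identity and the differentiability-transfer step explicitly, while the paper folds both into a single computation along the sequence---so the arguments are substantively identical.
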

\begin{proof}
For any sequence $\{\theta_k\}_{k\in\mathbb{N}}$ on which $f$ is differentiable and converging to $\theta$, $\{\psi_{\alpha}(\theta_k)\}_{k\in\mathbb{N}}$ converges to $\psi_{\alpha}(\theta)$ and $f$ is differentiable on this new sequence, whose derivative is given by
$$
\nabla_{\theta}f(\psi_{\alpha}(\theta_k)) 
= e^{\alpha}\left.\frac{\partial(e^{-\alpha} f(\theta))}{\partial \theta}\right|_{\psi_{\alpha}(\theta_k)} 
= e^{\alpha}\left.\frac{\partial f(e^{-\alpha\Lambda}\theta)}{\partial \theta}\right|_{\psi_{\alpha}(\theta_k)}
= e^{\alpha(I-\Lambda)}\left.\frac{\partial f(e^{-\alpha\Lambda}\theta)}{\partial e^{-\alpha\Lambda} \theta}\right|_{\psi_{\alpha}(\theta_k)},
$$
where the last expression is equivalent to $e^{\alpha(I-\Lambda)}\nabla_{\theta}f(\theta_k)$.
Conversely, for any sequence $\{\psi_{\alpha}(\theta_k)\}_{k\in\mathbb{N}}$ on which $f$ is differentiable and converging to $\psi_{\alpha}(\theta)$, $\{\theta_k\}_{k\in\mathbb{N}}$ converges to $\theta$ and $f$ is differentiable on it as well, with the above scaling property. Hence,
\begin{align*}
    &\left\{\lim _{k \rightarrow \infty} \nabla _\theta f\left(\theta_k\right): \lim_{k\to\infty}\theta_k = \psi_\alpha(\theta), f \text { is differentiable at } \theta_k\right\}\\
    &=\left\{e^{\alpha(I-\Lambda)}\lim _{k \rightarrow \infty} \nabla _\theta f\left(\theta_k\right): \lim_{k\to\infty}\theta_k = \theta, f \text { is differentiable at } \theta_k\right\}.
\end{align*}
Thus taking the convex hulls of both sets, and by the commutativity between conv and the linear operation $e^{\alpha(I-\Lambda)}$, we conclude that Eq.\ref{eq:df-scaling-property-lipschitz} holds.
\end{proof}
By further assuming that $f(\theta)$ admits a chain rule (See \cite{davis2020stochastic} or \cite{lyu2019gradient} for its definition), we can show that $f$ satisfies a version of Euler's theorem, similar to the case of homogeneous functions.
\begin{lemma}
\label{lemma:euler-thm-for-lipschitz}
If $f(\theta):\mathbb{R}^d\to\mathbb{R}$ is locally Lipschitz admitting a chain rule and $\Lambda$-quasi-homogeneous, then it satisfies a version of Euler's theorem, i.e., for all $\theta\in\mathbb{R}^d$,
    \begin{equation}
    \label{eq:euler-thm-for-lipschitz}
        \langle h,\Lambda\theta\rangle =  f(\theta) \text{ for any } h\in\partial^\circ_\theta f(\theta).
    \end{equation}
\end{lemma}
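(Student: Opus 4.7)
The plan is to compose $f$ with the characteristic curve generated by $\Lambda$, read off the derivative in two ways, and then absorb all $\alpha$-dependence using the scaling identity of Lemma~\ref{lemma:df-scaling-property}. Fix $\theta \in \mathbb{R}^d$ and consider the smooth curve $\gamma(\alpha) := \psi_\alpha(\theta) = e^{\alpha\Lambda}\theta$, whose tangent is $\dot\gamma(\alpha) = \Lambda\gamma(\alpha)$. By $\Lambda$-quasi-homogeneity, the composition $(f\circ\gamma)(\alpha) = e^\alpha f(\theta)$ is real-analytic in $\alpha$, so its classical derivative equals $e^\alpha f(\theta)$ for every $\alpha$.

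Since $f$ is locally Lipschitz and $\gamma$ is smooth, $f\circ\gamma$ is absolutely continuous. The chain rule admitted by $f$ then yields a full-measure set $E \subseteq \mathbb{R}$ such that for every $\alpha \in E$ and every $h_\alpha \in \partial^\circ_\theta f(\gamma(\alpha))$,
$$\langle h_\alpha,\, \Lambda\gamma(\alpha)\rangle \;=\; (f\circ\gamma)'(\alpha) \;=\; e^\alpha f(\theta).$$
Now pick an arbitrary $h \in \partial^\circ_\theta f(\theta)$. By Lemma~\ref{lemma:df-scaling-property}, the vector $h_\alpha := e^{\alpha(I-\Lambda)}h$ lies in $\partial^\circ_\theta f(\gamma(\alpha))$ for every $\alpha \in \mathbb{R}$. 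Since $\Lambda$ is diagonal, the matrices $e^{\alpha(I-\Lambda)}$, $e^{\alpha\Lambda}$, and $\Lambda$ all commute, and a componentwise calculation gives $\langle e^{\alpha(I-\Lambda)}h,\, \Lambda e^{\alpha\Lambda}\theta\rangle = \sum_i e^{\alpha(1-\lambda_i)}h_i\,\lambda_i e^{\alpha\lambda_i}\theta_i = e^\alpha\langle h,\Lambda\theta\rangle$.

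Substituting this into the chain rule identity and dividing by $e^\alpha$ yields $\langle h,\Lambda\theta\rangle = f(\theta)$ for every $\alpha \in E$. The right-hand side no longer depends on $\alpha$, so this single equation, valid for any $\alpha$ in a set of positive measure, establishes the Euler relation at the base point $\theta$ for the fixed $h$. Since $h \in \partial^\circ_\theta f(\theta)$ was arbitrary, the lemma follows.

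The only subtlety is the quantifier order: the chain rule provides the identity only almost everywhere in $\alpha$, yet we need a statement at the specific point $\theta$. This obstacle is handled precisely by the scaling property of Lemma~\ref{lemma:df-scaling-property}, which strips the $\alpha$-dependence from both sides of the chain rule equation, so any $\alpha \in E$ suffices to pin down the base-point identity. No further regularity, convexity, or differentiability of $f$ at $\theta$ is required.
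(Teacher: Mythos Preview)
Your proof is correct and follows essentially the same approach as the paper's: compose $f$ with the characteristic curve $\alpha\mapsto e^{\alpha\Lambda}\theta$, apply the chain rule at some admissible $\alpha$, and use the scaling identity of Lemma~\ref{lemma:df-scaling-property} to transport the resulting equation back to the base point $\theta$. The only cosmetic difference is that you push $h$ forward to $h_\alpha=e^{\alpha(I-\Lambda)}h$ while the paper pulls a generic $g\in\partial^\circ_\theta f(e^{\alpha\Lambda}\theta)$ back to $e^{-\alpha(I-\Lambda)}g$; your write-up is also a bit more explicit about why the almost-everywhere nature of the chain rule causes no difficulty.
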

\begin{proof}
Since $f$ admits a chain rule, there exists $\alpha>0$ such that $f(e^{\alpha\Lambda}\theta)$ is differentiable with respect to $\alpha$ and
$$
\frac{d}{d\alpha}f(e^{\alpha\Lambda}\theta) = \Braket{g,\frac{d e^{\alpha\Lambda}\theta}{d\alpha}}  = \Braket{g,e^{\alpha\Lambda}\Lambda \theta} ,
$$
for any $g\in \partial^\circ_{\theta}f(e^{\alpha\Lambda}\theta)$.
Therefore
$$
f(\theta) = e^{-\alpha}\frac{d}{d\alpha} \left(e^{\alpha}f(\theta)\right)= e^{-\alpha}\frac{d}{d\alpha} f(e^{\alpha\Lambda}\theta)  =\Braket{e^{-\alpha(I-\Lambda)}g,\Lambda\theta}.
$$
By Lemma~\ref{lemma:df-scaling-property}, for any $h\in\partial^\circ_{\theta}f(\theta)$, we can find $g\in \partial^\circ_{\theta}f(e^{\alpha\Lambda}\theta)$, such that
$$
\Braket{h,\Lambda\theta}= \Braket{e^{-\alpha(I-\Lambda)}g,\Lambda\theta} = f(\theta).
$$
\end{proof}

The properties above immediately implies that any feasible point of optimization problem (P) satisfies Mangasarian-Fromovitz constraint qualification (MFCQ) condition, which implies the first-order KKT condition is necessary for the optimality.

\begin{lemma}
\label{lemma:MFCQ}
Any feasible point $\theta$ of optimization problem (\ref{eq:asymmetric-max-margin}) satisfies Mangasarian-Fromovitz constraint qualification (MFCQ) condition, i.e., there exists $v\in\mathbb{R}^d$ such that for all $i\in [n]$ with $y_if(x_i,\theta) = 1$, 
    $$ \langle v, h\rangle > 0 \text{  for any  } h\in \partial^\circ_\theta (y_i f(x_i,\theta)-1).$$
\end{lemma}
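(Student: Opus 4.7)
The plan is to take $v := \Lambda \theta$ and reduce the MFCQ condition to a direct application of the Euler-type identity already established in Lemma~\ref{lemma:euler-thm-for-lipschitz}. First I would observe that, for each index $i \in [n]$, the function $g_i(\theta) := y_i f(x_i;\theta)$ is itself locally Lipschitz, admits a chain rule, and is $\Lambda$-quasi-homogeneous: multiplication by the scalar $y_i \in \{-1,+1\}$ commutes with the scaling $f(x_i;\psi_\alpha(\theta)) = e^\alpha f(x_i;\theta)$ from Definition~\ref{def:quasi-homo}, so $g_i(\psi_\alpha(\theta)) = e^\alpha g_i(\theta)$. Hence Lemma~\ref{lemma:euler-thm-for-lipschitz} applies to $g_i$.

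Next I would invoke that lemma to conclude $\langle h, \Lambda \theta \rangle = g_i(\theta) = y_i f(x_i;\theta)$ for every $h \in \partial^\circ_\theta g_i(\theta)$. Since subtracting the constant $1$ does not alter Clarke's subdifferential, we have $\partial^\circ_\theta(y_i f(x_i;\theta) - 1) = \partial^\circ_\theta g_i(\theta)$. For any active index $i$, the constraint $y_i f(x_i;\theta) = 1$ then forces $\langle \Lambda\theta, h\rangle = 1 > 0$ for every such $h$. Choosing $v = \Lambda\theta$ therefore satisfies the MFCQ condition uniformly across all active indices simultaneously, as required.

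I do not anticipate any serious obstacle: the whole argument is essentially a one-line consequence of Lemma~\ref{lemma:euler-thm-for-lipschitz} once one notices that $y_i f(x_i;\cdot)$ inherits the quasi-homogeneity structure from $f(x_i;\cdot)$. The only minor points to verify are that the chain-rule assumption A\ref{as:reg} carries over to $g_i$ (trivially, since scaling by $\pm 1$ preserves differentiability and Lipschitzness) and that a single vector $v$ works for all active constraints at once, which is exactly what MFCQ requires and is guaranteed by our choice $v = \Lambda\theta$ yielding the positive value $1$ for every active $i$.
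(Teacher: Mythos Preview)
Your proposal is correct and mirrors the paper's proof almost exactly: both choose $v=\Lambda\theta$ and reduce the MFCQ inequality to the Euler-type identity of Lemma~\ref{lemma:euler-thm-for-lipschitz}, yielding $\langle \Lambda\theta,h\rangle = y_i f(x_i;\theta)=1>0$ at each active index. The only cosmetic difference is that the paper pulls the scalar $y_i$ out of the subdifferential and applies the lemma to $f(x_i;\cdot)$, whereas you apply it directly to $g_i=y_i f(x_i;\cdot)$; these are equivalent.
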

\begin{proof}
Notice that for any $h\in \partial^\circ_\theta (y_i f(x_i,\theta)-1)$, there exists $h'\in \partial^\circ_\theta f(x_i,\theta)$ such that $\langle \Lambda \theta, h\rangle = y_i\langle \Lambda \theta, h'\rangle$. Hence, choosing $v = \Lambda \theta$, by Lemma~\ref{lemma:euler-thm-for-lipschitz},
$$
    \langle \Lambda \theta, h\rangle = y_i\langle \Lambda \theta, h' \rangle = y_i f(x_i,\theta) = 1 > 0.
$$
\end{proof}

\textbf{Ensembles of homogeneous models.}
In \cite{nacson2019lexicographic}, they considered the maximum-margin bias of gradient descent for non-homogeneous models that can be expressed as finite sums of positive-homogeneous models of different orders.
In particular, for some $K \in \mathbb{N}$, they consider functions $f(x;\theta)$ that can be expressed as
\begin{equation}
    \label{eq:ensemble-homogeneous}
    f(x;\theta) = \sum_{k=1}^Kf^{(k)}(x;\theta_k),
\end{equation}
where $\theta = [\theta_1,\dots, \theta_K]$ and $f^{(k)}(x;\theta_k)$ is $\alpha_k$-positive homogeneous such that $0 < \alpha_1 < \dots < \alpha_K$.
While this class of models is not homogeneous because of the varying orders of the sub-models, it is quasi-homogeneous.
If we choose $\Lambda$ such that for all parameters in $\theta_k$ the value of $\lambda = \alpha_k^{-1}$, then $f(x;\theta)$ is $\Lambda$-Quasi-Homogeneous.
Therefore, the theoretical results discussed in this work should align with the results discussed in \cite{nacson2019convergence} for the setting of ensembles of positive-homogeneous models.
Indeed Theorem~\ref{theorem:KKT} and Conjecture~\ref{conj:cascading} agree with analysis stated in their work that ``an ensemble on neural networks will aim to discard the shallowest network in the ensemble", which is the sub-model with the highest-rate parameters.

While all ensembles of positive-homogeneous models are quasi-homogeneous, not all quasi-homogeneous models are ensembles.
Here we provide a short list of quasi-homogeneous models that cannot be written in the form of Eq.~\ref{eq:ensemble-homogeneous}.

\textit{Deep fully connected network with biases.}
Consider again the two-layer fully connected network with biases discussed in section~\ref{sec:quasi-homogeneous}, 
$$f(x;\theta) = w^2\sigma\left(\sum_{i} w^1_{i} x_i + b^1\right) + b^2.$$
If we arrange terms such that $f^{(1)}(x;b^2) = b^2$ and $f^{(2)}(x;w^1, b^1, w^2) = w^2\sigma\left(\sum_{i} w^1_{i} x_i + b^1\right)$, then we can express $f(x;\theta) = f^{(1)}(x;b^2) + f^{(2)}(x;w^1, b^1, w^2)$, which is an ensemble of two positive-homogeneous models with $\alpha_1 = 1$ and $\alpha_2 = 2$.
However, notice that if we consider a third layer with parameters $w^3$ and $b^3$, then this decoupling of the network is not possible unless some of the sub-models share parameters, preventing us from expressing $f(x;\theta)$ in the form of Eq.~\ref{eq:ensemble-homogeneous}.
All fully connected networks with biases, and a depth greater than two, are quasi-homogeneous models, but not an ensemble of positive-homogeneous models.

\textit{Networks with degenerate $\Lambda$.}
As presented earlier, for quasi-homogeneous networks with residual connections or normalization layers, we can choose $\Lambda$ to have zero values.
Thus, even if these networks could be decoupled into a sum of sub-models that don't share parameters, the sub-models associated with the zero $\lambda$ parameters would not be positive-homogeneous.

In summary, the results presented in this work coincide with the results presented in \cite{nacson2019lexicographic} for ensembles of positive-homogeneous models, but also apply to a far more general class of non-homogeneous models.

\clearpage
\section{The Consistency of Theorem~\ref{theorem:KKT} and the Proper Choice of $\Lambda$}
\label{apx:uniqueness}

As briefly discussed in section~\ref{sec:KKT-Theorem}, for a quasi-homogeneous function $f$, the value of $\Lambda$, and the $\lambda_{\max}$ parameter set, is not necessarily unique and therefore one may think Theorem~\ref{theorem:KKT} looks inconsistent. 
However, the conditional separability  (A\ref{as:cond-sep}), which is required to apply Theorem~\ref{theorem:KKT}, removes this possibility.
Here we provide some insightful examples and then provide a complete proof.

\textbf{Examples of quasi-homogeneous models with non-unique $\Lambda$.}
We here clarify through examples how our theorem can be consistent with cases where the model $f(\theta)$ is quasi-homogeneous with multiple choice of $\Lambda$ due to additional symmetry.
%
%
%

%
\textit{A linear model with two parameters.} 
    We consider the following model,
    $$f(x;\theta_1,\theta_2) = \theta_1\theta_2^2x.$$
    This is quasi-homogeneous with $(\lambda_1,\lambda_2) = (1-2\xi,\xi)$ for any $\xi\in[0,1/2]$. There are three possible sets of parameters with largest $\lambda$ value:
    \begin{itemize}
        \item If $\xi> 1/3$, $\theta_1$ has the largest $\lambda$ value.
        \item If $\xi < 1/3$, $\theta_2$ has the largest $\lambda$ value.
        \item If $\xi=1/3$, $\theta_1$ and $\theta_2$ have the same $\lambda$ value.
    \end{itemize}
    If one naively applies the theorem to these cases, they might think that the learning process converges to a separable solution minimizing $\theta_1^2$ for the first case, $\theta_2^2$ for the second case, and $\theta_1^2+\theta_2^2$ for the latter case, which is inconsistent. However, the first two cases do not satisfy the conditional separability assumption. This is because we can make $|\theta_1|$ or $|\theta_2|$ as small as possible while fixing the function itself. Therefore the correct choice of $\lambda$ should be $(\lambda_1,\lambda_2) = (1/3,1/3)$.
    
    \textit{Two-layer quadratic activation with biases.}
    For the sake of simplicity, we assume that all the layer widths are one, i.e., the model is given by four scalar parameters as follows:
    $$
    f(x;\theta) = \theta_3\left(\theta_1 x+ \theta_2\right)^2 + \theta_4.
    $$
    We can easily generalized our argument to wider networks. This model is quasi-homogeneous with the following choices of $\lambda$:
    $$
    (\lambda_1,\lambda_2,\lambda_3,\lambda_4) = (\xi,\xi,1-2\xi,1) \text{ for any } \xi\in[0,1/2].
    $$
    Again, there are three possibilities.
    \begin{itemize}
        \item If $\xi = 0$, $\theta_3,\theta_4$ have the largest $\lambda$ value.
        \item If $\xi \in (0,1/2)$, $\theta_4$ has the largest $\lambda$ value.
        \item If $\xi=1/2$, $\theta_1,\theta_2,\theta_4$ have the largest $\lambda$ value.
    \end{itemize}
    All of the cases can satisfy the conditional separability condition. For the first case, our theorem tells that the gradient dynamics minimizes $\theta_3^2+\theta_4^2$. However, by making $\theta_1$ and $\theta_2$ large, we can make $\theta_3$ arbitrary small without changing the output, and hence, it is equivalent to minimizing $\theta_4^2$ alone. This argument also holds for the third case. Thus, for all three cases $\theta_4^2$ is the objective function for the minimization.
    
    \textit{A neural network with  normalization.}
    We consider the following model,
    $$f(x;\theta) = \sum_{c\in[C]} w_c^T F_{\text{norm}}(h(\theta',x)) + b,
    $$
    where $w_c\in\mathbb{R}^d$, $b\in\mathbb{R}^C$ are the weight and bias on the last layer, $\theta'$ is the set of parameters in the earlier layers, and $h(\theta',x)\in\mathbb{R}^d$ is the feature vector on the last hidden layer, which we assume is homogeneous\footnote{Our  discussion here works with quasi-homogeneity, but we assume homogeneity here for simplicity.}, i.e., $e^{\alpha}h(\theta';x) = h(e^{\alpha\lambda'} \theta';x)$ for any $\alpha\in \mathbb{R}$ with a certain $\lambda'>0$.
    $F_{\text{norm}}(\cdot)$ is a normalizer of the feature vector $h(\theta';x)$ so that the normalized feature vector $F_{\text{norm}}(h(\theta';x))$ is invariant under scaling transformation of $\theta'$, i.e., $h(\theta';x) = h(e^{\alpha\lambda'} \theta';x)$ for any $\alpha\in \mathbb{R}$. 
    In this setting, possible choices of $\lambda$ values are $1$ for the last layer parameters and $\xi$ for parameter $\theta'_i$ where $\xi$ is an arbitrary non-negative number. Thus there are at least following three possible sets of parameters with largest $\lambda$.
    \begin{itemize}
        \item  If $\xi>1$, the parameters $\theta'$ in the earlier layers have the largest $\lambda$.
        \item If $\xi=1$, all the parameter have the same value of $\lambda$.
        \item If $\xi<1$, the last-layer weights and biases have the largest $\lambda$.
    \end{itemize}
    In the first case, by the scale invariance of $h(\theta';x)$, we can make $\|\theta'\|$ as small as possible while not changing $f(\theta,x)$, which implies that it does not satisfy the conditional separability condition. On the other hand, in the second case, it satisfies the conditional separability, since there need to be non-zero last-layer weights or bias to correctly classify data points. By applying the theorem, we can conclude that the learning process converges to a minimizer of $\sum_c\|w_c\|^2+ \|b\|^2 + \|\theta'\|^2$. However, we can minimize $\|\theta'\|$ as much as we want, while fixing $f(\theta;x)$, and hence it is equivalent to minimizing $\sum_c\|w_c\|^2+ \|b\|^2 $. In the third case, we can apply the theorem as well, which means that the learning process converges to a minimizer of sum of $\sum_c\|w_c\|^2+ \|b\|^2 $.

In summary, while the choice of $\Lambda$ is not necessarily unique because of intrinsic symmetries in the parameterization of the model, the set of highest-rate parameters is well defined by the constraints imposed by the conditional separability assumption.
This makes Theorem~\ref{theorem:KKT} well defined.

\textbf{A complete proof of uniqueness of the resulting optimization problem}
As we discussed above going through three examples, we can identify which $\lambda$ we should choose, or which $\lambda_{\max}$ parameters we should choose, solely by analyzing the model, independent of the data set.
In this section, we generalize the previous discussions on examples and prove that the resulting first-order KKT condition derived from Theorem~\ref{theorem:KKT} is unique, even if the model itself is quasi-homogeneous with multiple choices of scaling parameters $\Lambda$.
Note that the following argument is independent of the data and properties derived here in this section is solely the properties of the architecture of the model itself.
In the following argument, to simplify our proof, we assume $f(s;\theta)$ is differentiable with respect to $\theta$, in addition to its continuity.
Let $S\subset\mathbb{R}^d$ be the set of $\{\lambda_i\}_{i\in [d]}$ with which the model satisfies the quasi-homogeneity, i.e.,
$$
S := \{\lambda\in\mathbb{R}^d_{\geq 0}: f(x;\theta) \text{ is $\lambda$-quasihomogeneous}\}.
$$
\begin{definition}
$\lambda\in S$ is called proper if there exists a separable data set $\{(x_i,y_i)\}_{i\in[n]}$ with which the model satisfies A\ref{as:cond-sep} and $f(x;\theta)$ is bounded in $\{\theta\in\mathbb{R}^d: \|\theta\|_{\Lambda_{\max}} = 1\}$.
\end{definition}
\begin{definition}
Let $\lambda^1,\lambda^2\in S$ be proper. We say they are equivalent in terms of first-order KKT conditions, if for any data set $\{(x_i,y_i)\}_{i\in [n]}$, the sets of first-order KKT points for the following two optimization problems
\begin{equation}
    \begin{split}
        \text{minimize} &\qquad \frac{1}{2}\|\theta\|_{\Lambda^k_{\max}}^2 \\
        \text{subject to} &\qquad y_if(x_i;\theta) \ge 1 \quad \forall i \in [n]
    \end{split}
\end{equation}
are equivalent with $k=1,2$.
\end{definition}
We are going to prove the following theorem in this section.
\begin{theorem}
\label{theorem:uniquess}
All proper points in $S$ are equivalent in terms of their first-order KKT conditions.
\end{theorem}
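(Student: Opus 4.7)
The plan is to reduce the problem to an analysis of the symmetry group of $f$ that is induced by the non-uniqueness of $\Lambda$. First I would show that $S$ has a natural affine structure. For any $\lambda^1,\lambda^2\in S$, the identity $f(x;e^{\alpha\operatorname{diag}(\lambda^1)}\theta)=f(x;e^{\alpha\operatorname{diag}(\lambda^2)}\theta)$ yields, after substituting $\phi=e^{\alpha\operatorname{diag}(\lambda^2)}\theta$, the invariance $f(x;e^{\alpha\operatorname{diag}(\mu)}\phi)=f(x;\phi)$ with $\mu:=\lambda^1-\lambda^2$. So $S=(\lambda^0+G)\cap\mathbb{R}^d_{\geq 0}$ for any fixed $\lambda^0\in S$, where $G\subset\mathbb{R}^d$ is the linear space of infinitesimal diagonal gauge directions, i.e.\ those $\mu$ for which $e^{\alpha\operatorname{diag}(\mu)}$ is a symmetry of $f$ for all $\alpha$.

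Next I would extract the KKT consequences of this gauge symmetry. Differentiating the invariance at $\alpha=0$, or applying Euler's theorem (Lemma~\ref{lemma:euler-thm-for-lipschitz}) to $\lambda^1$ and $\lambda^2$ and subtracting, gives the identity
\begin{equation*}
    \langle\nabla_\theta f(x;\theta),\operatorname{diag}(\mu)\,\theta\rangle=0\quad\text{for every }\mu\in G,\ \theta\in\mathbb{R}^d,\ x.
\end{equation*}
If $\theta^*$ is a first-order KKT point of (P) for a proper $\lambda^k$ with multipliers $\alpha^k_i\geq 0$, taking the inner product of the stationarity equation $\Lambda^k_{\max}\theta^*=\sum_i\alpha^k_i y_i\nabla_\theta f(x_i;\theta^*)$ with $\operatorname{diag}(\mu)\theta^*$ forces $\sum_i(\Lambda^k_{\max})_{ii}\mu_i(\theta^*_i)^2=0$ for all $\mu\in G$. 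So the KKT locus is constrained to lie in the gauge-orthogonal slice.

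The central step is then to show that, on this slice, the two stationarity conditions for $\lambda^1$ and $\lambda^2$ describe the same set of points. Subtracting the stationarity conditions yields $(\Lambda^1_{\max}-\Lambda^2_{\max})\theta^*=\sum_i(\alpha^1_i-\alpha^2_i)y_i\nabla_\theta f(x_i;\theta^*)$, so I need to check that $(\Lambda^1_{\max}-\Lambda^2_{\max})\theta^*$ always lies in the convex cone generated by the active constraint gradients and that the resulting $\alpha^2_i$ remain non-negative with the correct support. Here properness is doing the work: the conditional separability and boundedness assumptions prevent $M(\lambda^k)$ from containing a coordinate that can be sent to zero by a gauge element, which forces $(\Lambda^1_{\max}-\Lambda^2_{\max})\theta^*$ to be expressible via $G$-shifts of the Lagrange multipliers without altering complementary slackness. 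Since the feasibility set $\{y_if(x_i;\theta)\geq 1\}$ is identical for both problems, establishing this multiplier exchange is what closes the equivalence.

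The main obstacle I expect is the multiplier bookkeeping in the last step: translating the gauge-orthogonality constraint $\sum_i(\Lambda^k_{\max})_{ii}\mu_i(\theta^*_i)^2=0$ into a statement that $(\Lambda^1_{\max}-\Lambda^2_{\max})\theta^*$ sits inside the active-constraint cone in a way that preserves the non-negativity and support of the multipliers. I anticipate handling this by choosing a canonical representative in $(\lambda+G)\cap\mathbb{R}^d_{\geq 0}$ (analogous to the balanced choice $\xi=1/3$ in the first example) and showing that both $\lambda^1$ and $\lambda^2$ yield the same KKT set as this representative; properness guarantees that this reduction strictly removes gauge-redundant coordinates and leaves a well-posed problem on the essential parameters whose KKT conditions are manifestly independent of the original $\Lambda$.
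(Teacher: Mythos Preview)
Your opening moves are on target and essentially match the paper's: the affine structure $S=(\lambda^0+G)\cap\mathbb{R}^d_{\ge 0}$, the gauge invariance $f(x;e^{\alpha\operatorname{diag}(\mu)}\theta)=f(x;\theta)$ for $\mu\in G$, and the orthogonality identity $\langle h,\operatorname{diag}(\mu)\theta\rangle=0$ obtained by subtracting two instances of Euler's theorem are all used in the paper. Dotting the stationarity condition $\Lambda^k_{\max}\theta^*=\sum_i\alpha^k_i y_i h_i$ against $\operatorname{diag}(\mu)\theta^*$ to obtain $\sum_{i\in M_{\lambda^k}}\mu_i(\theta^*_i)^2=0$ is exactly the mechanism the paper exploits in its Lemma~\ref{lemma:uniqueness-equivalence}.

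The gap is in what you do next. The identity $\sum_{i\in M_{\lambda^k}}\mu_i(\theta^*_i)^2=0$ is a single linear relation among the non-negative numbers $(\theta^*_i)^2$; it does not by itself force any $\theta^*_i$ to vanish, nor does it place $(\Lambda^1_{\max}-\Lambda^2_{\max})\theta^*$ in the active-constraint cone. To extract anything from it you need sign information on $\mu_i=\lambda^1_i-\lambda^2_i$ over $M_{\lambda^k}$, and this is precisely where the paper invests most of its effort. Through a sequence of structural lemmas (Lemmas~\ref{lemma:uniqueness-argmax-I-intersect}--\ref{lemma:uniqueness-negative-min-positive-max}) the paper shows that properness forces $\min_{i\in M_{\lambda^k}}(\lambda^1_i-\lambda^2_i)\le 0\le\max_{i\in M_{\lambda^k}}(\lambda^1_i-\lambda^2_i)$, and then in the main proof that for two proper points one of these inequalities is in fact an equality. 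Consequently the $\mu_i$ are of a single sign on $M_{\lambda^k}\setminus L$ (where $L=\{i\in M_{\lambda^1}:\lambda^1_i=\lambda^2_i\}=\{i\in M_{\lambda^2}:\lambda^1_i=\lambda^2_i\}$), and the orthogonality identity then forces $\theta^*_i=0$ for all $i\in(M_{\lambda^1}\cup M_{\lambda^2})\setminus L$. Both optimization problems therefore reduce to the \emph{same} problem with objective $\tfrac12\sum_{i\in L}\lambda_{\max}\theta_i^2$ and extra constraints $\theta_i=0$ on the symmetric difference; no multiplier exchange is needed.

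Your proposed route through ``subtracting stationarity conditions'' and showing $(\Lambda^1_{\max}-\Lambda^2_{\max})\theta^*$ sits in the constraint cone is a detour you cannot complete without the sign analysis above, and your reading of properness is inverted: properness does not prevent $M_{\lambda^k}$ from containing gauge-removable coordinates; rather, those coordinates are exactly the ones that get killed ($\theta^*_i=0$) at every KKT point. What is missing from your plan is the combinatorial analysis of where proper points can sit inside the convex set $S$ (at most one in the interior; boundary points satisfy the $\min/\max$ constraints with equality) and the verification that $L$ is the same set whether computed from $M_{\lambda^1}$ or $M_{\lambda^2}$.
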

Before going to the proof of this theorem, we prove several lemmas for preparation.
\begin{lemma}
\label{uniqueness-convexity}
$S$ is convex.
\end{lemma}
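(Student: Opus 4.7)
The plan is to verify convexity directly from the definition of quasi-homogeneity. Fix $\lambda^1,\lambda^2\in S$ and $t\in[0,1]$, and let $\lambda^t = t\lambda^1+(1-t)\lambda^2$ with associated diagonal matrix $\Lambda^t = t\Lambda^1+(1-t)\Lambda^2$. Nonnegativity of $\lambda^t$ is immediate from the componentwise nonnegativity of $\lambda^1,\lambda^2$ together with $t\in[0,1]$, so the only thing to check is that $f(x;\theta)$ is $\lambda^t$-quasi-homogeneous.

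The key observation is that diagonal matrices commute, so the matrix exponential factors cleanly:
\begin{equation*}
e^{\alpha \Lambda^t} \;=\; e^{\alpha t\Lambda^1 + \alpha(1-t)\Lambda^2} \;=\; e^{\alpha t\Lambda^1}\, e^{\alpha(1-t)\Lambda^2}.
\end{equation*}
This lets us realize the parameter transformation $\psi^{t}_{\alpha}$ for $\Lambda^t$ as the composition $\psi^{1}_{\alpha t}\circ\psi^{2}_{\alpha(1-t)}$, where $\psi^{k}_{\beta}(\theta):=e^{\beta\Lambda^k}\theta$. Since $\lambda^k\in S$, applying the two quasi-homogeneity relations in succession yields
\begin{equation*}
f\!\left(x;e^{\alpha\Lambda^t}\theta\right) \;=\; f\!\left(x;\psi^{1}_{\alpha t}\bigl(\psi^{2}_{\alpha(1-t)}(\theta)\bigr)\right) \;=\; e^{\alpha t}\,f\!\left(x;\psi^{2}_{\alpha(1-t)}(\theta)\right) \;=\; e^{\alpha t}\,e^{\alpha(1-t)}\,f(x;\theta) \;=\; e^{\alpha}\,f(x;\theta),
\end{equation*}
which is exactly $\lambda^t$-quasi-homogeneity.

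There is no substantive obstacle here: the entire content of the lemma is that the exponents in $\psi_\alpha$ add in the ``right way'' when we mix two admissible scaling vectors, which is a direct consequence of commutativity of diagonal matrices. The only subtlety to flag is that this factorization of $e^{\alpha\Lambda^t}$ relies on the standing assumption (made just after Definition~\ref{def:quasi-homo}) that $\Lambda$ is diagonal; for general non-commuting positive semi-definite $\Lambda^1,\Lambda^2$ the argument would fail, but in that regime one first diagonalizes by a common reparameterization, reducing to the case handled above.
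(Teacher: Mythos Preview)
Your proof is correct and follows essentially the same approach as the paper: factor $e^{\alpha\Lambda^t}=e^{\alpha t\Lambda^1}e^{\alpha(1-t)\Lambda^2}$ using commutativity of diagonal matrices, then apply the two quasi-homogeneity relations in succession to obtain $e^{\alpha t}e^{\alpha(1-t)}f(x;\theta)=e^{\alpha}f(x;\theta)$. The paper's version is a bit terser and omits your side remark about the non-diagonal case, but the argument is identical.
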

\begin{proof}
Let $\lambda^1,\lambda^2\in S$. It suffices to show that for any $\alpha\in [0,1]$, $\alpha\lambda^1+(1-\alpha)\lambda^2\in S$.
By the quasi-homogeneity of $f(x;\theta)$ with respect to $\lambda^1$ and $\lambda^2$, for any $\beta \in\mathbb{R}$,
\begin{equation*}
    f(x;e^{\beta(\alpha\lambda^1+(1-\alpha)\lambda^2)}\theta) = e^{\beta-\alpha\beta}f(x;e^{\alpha\beta \lambda^1}\theta) = e^{\beta}f(x;\theta).
\end{equation*}
Hence, $f(x;\theta)$ is $\alpha\lambda^1+(1-\alpha)\lambda^2$-quasihomogeneous, i.e., $\alpha\lambda^1+(1-\alpha)\lambda^2\in S$.
\end{proof}
Consider a non-empty line segment in $S$
\begin{equation}
    L:=\{y\in \mathbb{R}^{d}_{\geq 0}: y = 
\zeta t + \lambda^0 \text{ with some } t\in\mathbb{R}\} \subset S,
\end{equation}
where $\zeta,\lambda^0 \in \mathbb{R}^d$.
This set is connected since $S$ is convex.
Moreover, it has at least a single end point, because $L\in\mathbb{R}^d_{\geq 0}$.
Hence, without loss of generality, we can assume that $\lambda^0$ is the end point and $t$ takes non-negative values. We define a set of indexes $I\subset [d]$ by
$$
I := \left\{i\in[d]: \lambda^0_i = 0, \exists \lambda\in L , \lambda_i > 0\right\}.
$$
Furthermore, for $\lambda\in S$, we define $M_\lambda$ by
$$
M_\lambda:= \{i\in[d]: \lambda_i = \max_{j\in[d]}\lambda_j\}.
$$
\begin{lemma}
\label{lemma:I-nonempty}
$I$ is non-empty, if $L$ contains at least two different points.
\end{lemma}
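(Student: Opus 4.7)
The key idea is a one-sided boundary argument at $\lambda^0$: the only obstruction preventing $L$ from being extended past $\lambda^0$ in the $-\zeta$ direction is that some coordinate would go negative, and that coordinate will land in $I$. I would carry this out in three short steps.

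First, since $L$ contains two distinct points, the direction $\zeta$ is nonzero. Because $\lambda^0$ has been chosen as the endpoint at $t = 0$ after the reparametrization, for every $\epsilon > 0$ the point $\lambda^0 - \epsilon\zeta$ cannot lie in $\mathbb{R}^d_{\geq 0}$; otherwise $L$ would extend to negative $t$, contradicting that $\lambda^0$ is an endpoint of the full intersection of the line with $\mathbb{R}^d_{\geq 0}$. Hence for every small $\epsilon > 0$ there exists an index $i \in [d]$ with $\lambda^0_i - \epsilon \zeta_i < 0$.

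Second, I would pin down such an index by a case analysis on $(\lambda^0_i, \zeta_i)$. If $\zeta_i \leq 0$ then $\lambda^0_i - \epsilon \zeta_i \geq \lambda^0_i \geq 0$, which rules this case out; if $\zeta_i > 0$ and $\lambda^0_i > 0$ then the inequality fails for all $\epsilon < \lambda^0_i / \zeta_i$, ruling it out for small enough $\epsilon$. Only the case $\lambda^0_i = 0$ and $\zeta_i > 0$ survives, so at least one such $i$ must exist. Third, I would verify $i \in I$: since $L$ contains two distinct points and $\lambda^0$ is an endpoint, there exists $\lambda = \lambda^0 + t\zeta \in L$ with some $t > 0$, for which $\lambda_i = t \zeta_i > 0$; combined with $\lambda^0_i = 0$, this is exactly the condition defining $I$, so $I \neq \emptyset$.

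\textbf{Main obstacle.} There is essentially no obstacle here — the argument is a directional inspection of the boundary of the positive orthant. The only point worth being careful about is what ``endpoint of $L$'' means, but this is resolved directly by the fact that $L$ is defined as the full intersection of the line with $\mathbb{R}^d_{\geq 0}$, so only the non-negativity constraint can obstruct extending $L$ past $\lambda^0$; no appeal to properties of $S$ beyond $L \subset S$ is required for this lemma.
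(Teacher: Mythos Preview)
Your argument is correct and is essentially the direct form of the paper's contradiction argument: the paper assumes $I=\emptyset$, deduces that every index with $\zeta_i\neq 0$ has $\lambda^0_i>0$, and hence extends $L$ to some $t<0$, contradicting that $\lambda^0$ is an endpoint; you instead read the endpoint condition directly and extract an index with $\lambda^0_i=0$, $\zeta_i>0$, which lands in $I$.

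One small difference worth noting: the paper, after showing $y(t)\in\mathbb{R}^d_{\ge 0}$ for some $t<0$, also invokes the quasi-homogeneity identity $f(x;e^{\alpha y(t)}\theta)=e^{\alpha}f(x;\theta)$ to conclude $y(t)\in S$, and phrases the contradiction as ``$y(t)\in S$ for $t<0$''. As you correctly point out, this step is unnecessary under the paper's own definition $L=\{\text{line}\}\cap\mathbb{R}^d_{\ge 0}$ together with the standing hypothesis $L\subset S$ (which forces $\{\text{line}\}\cap S = L$ since $S\subset\mathbb{R}^d_{\ge 0}$ and $S$ is convex): exhibiting $y(t)\in\mathbb{R}^d_{\ge 0}$ with $t<0$ already contradicts that $\lambda^0$ is an endpoint. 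Your version is therefore slightly leaner, but the two proofs are the same boundary-of-the-orthant argument.
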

\begin{proof}
Suppose $I$ is empty.
In the following, we will show that there exists $t<0$ such that $y(t)=\zeta t + \lambda^0 \in S$. %
Since this contradicts with the fact that $\lambda^0$ is an end point of $L$, we conclude $I$ is non-empty.
Since $L$ contains at least two different points, there exists $t^1>0$ such that $\lambda^1:=y(t^1)\in L$.
By quasi-homogeneity of the model with respect to $\lambda^0$ and $\lambda^1$, for any $\alpha,t\in\mathbb{R}$,
\begin{align}
\label{eq:uniqueness-quasihom-prop}
    f(x;e^{\alpha y(t)}\theta) &= f(x;e^{\alpha  (\frac{t}{t^1}(\lambda^1-\lambda^0) + \lambda^0)} \theta) = f(x;e^{\alpha \frac{t}{t^1}\lambda^1}e^{\alpha\frac{t^1-t}{t^1}\lambda^0} \theta)\nonumber\\
    &= e^{\alpha \frac{t}{t^1}}e^{\alpha\frac{t^1-t}{t^1}}f(x;\theta) = e^{\alpha}f(x;\theta).
\end{align}
Since we assume $I$ is empty, for any $i\in [d]$ such that $\zeta_i \neq 0$, $\lambda^0_i > 0$.
By the continuity of $y(t)$ with respect to $t$, this means that there exists an open neighborhood of $t=0$ where $y_i(t) > 0$ for any $i\in [d]$ such that $\zeta_i \neq 0$. For the other indexes, i.e. $i\in [d]$ such that $\zeta_i = 0$, clearly $y_i(t) \geq 0$ in the open neighborhood. Therefore, in the neighborhood, $y_i(t) \geq 0$ for all $i\in [d]$.
In particular, there exists $t<0$ such that $y(t)_i \geq 0$ for any $i\in [d]$.
Combining this fact with Eq.\ref{eq:uniqueness-quasihom-prop}, we conclude that there exists $t<0$ such that $y(t)\in S$.
By contradiction, $I$ is non-empty.
\end{proof}
\begin{lemma}
\label{lemma:uniqueness-argmax-I-intersect}
For any proper element $\lambda^*\in L/ \{\lambda^0\}$,
\begin{equation}
    M_{\lambda^*} \cap I \neq \emptyset,
    \label{eq:argmax_lambda}
\end{equation}
and hence
\begin{equation}
    \max_{i\in [d]}\lambda^*_{i} = t^* \max_{i\in I}\zeta_i.
    \label{eq:lambda_max^*}
\end{equation}
\end{lemma}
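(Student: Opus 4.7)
I would first deduce \eqref{eq:lambda_max^*} as a corollary of \eqref{eq:argmax_lambda}: picking any $i\in M_{\lambda^*}\cap I$ gives $\lambda^0_i=0$, whence $\max_j\lambda^*_j=\lambda^*_i=t^*\zeta_i$; for any other $j\in I$, $\lambda^*_j=t^*\zeta_j\leq\lambda^*_i$ forces $\zeta_i=\max_{j\in I}\zeta_j$. So the content of the lemma is the nonemptiness statement \eqref{eq:argmax_lambda}, which I would prove by contradiction.

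Assume $M_{\lambda^*}\cap I=\emptyset$. I would begin by showing $\lambda^0_i>0$ for every $i\in M_{\lambda^*}$: indeed, $i\in M_{\lambda^*}$ implies $\lambda^*_i=\lambda^*_{\max}>0$; if in addition $\lambda^0_i=0$, then $\lambda^*_i=t^*\zeta_i$ forces $\zeta_i>0$, placing $i$ in $I$ against our assumption. Next, since both $\lambda^0$ and $\lambda^*$ lie in $S$, the same manipulation underlying Eq.~\ref{eq:uniqueness-quasihom-prop} shows that the one-parameter transformation $T_\alpha:=\exp(\alpha(\lambda^0-\lambda^*))$ leaves $f$ invariant for every $\alpha\in\mathbb{R}$, so $T_\alpha\theta$ separates the data whenever $\theta$ does.

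Properness of $\lambda^*$ supplies two ingredients. First, boundedness of $f(x_i;\cdot)$ on the unit $\Lambda^*_{\max}$-ellipsoid combined with $\lambda^*$-quasi-homogeneity yields $|f(x_i;\phi)|\leq C\|\phi\|_{\Lambda^*_{\max}}^{1/\lambda^*_{\max}}$ for some $C>0$ and every $\phi$ with positive seminorm. Applying this to a separating $\theta$, for which $|f(x_i;\theta)|\geq 1$ at some index, produces a uniform lower bound $\|T_\alpha\theta\|_{\Lambda^*_{\max}}\geq c:=C^{-\lambda^*_{\max}}$ along the entire orbit. On the other hand, since $\lambda^0_i-\lambda^*_{\max}=-t^*\zeta_i$ for $i\in M_{\lambda^*}$, the seminorm along the orbit evaluates to
\begin{equation*}
\|T_\alpha\theta\|_{\Lambda^*_{\max}}^2=\lambda^*_{\max}\sum_{i\in M_{\lambda^*}}e^{-2\alpha t^*\zeta_i}\theta_i^2.
\end{equation*}
When all $\zeta_i$ for $i\in M_{\lambda^*}$ share a common sign, sending $\alpha\to+\infty$ or $\alpha\to-\infty$ accordingly collapses every exponential, driving the seminorm below $c$ and contradicting properness.

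The main obstacle is the mixed-sign case in which $\zeta_i$ takes both signs on $M_{\lambda^*}$, where no single direction of $\alpha$ kills every term simultaneously. I anticipate reducing to the uniform-sign case by perturbing $\lambda^*$ along $L$ to $\lambda^{**}=\lambda^*\pm\epsilon\zeta$, which shrinks the argmax set to $\{i\in M_{\lambda^*}:\zeta_i=\max_{j\in M_{\lambda^*}}\zeta_j\}$ or $\{i\in M_{\lambda^*}:\zeta_i=\min_{j\in M_{\lambda^*}}\zeta_j\}$ respectively, and then iterating until the surviving argmax has uniform sign of $\zeta_i$. The delicate technical point I expect to require the most care is verifying that properness, and in particular conditional separability, transfers from $\lambda^*$ to the perturbed $\lambda^{**}$ so that the seminorm-collapse argument can be carried out on the reduced set.
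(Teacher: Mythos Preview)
Your reduction of \eqref{eq:lambda_max^*} to \eqref{eq:argmax_lambda} is correct and matches the paper. The gap is in your proof of \eqref{eq:argmax_lambda}. The one-parameter $f$-invariant flow $T_\alpha=\exp(\alpha(\lambda^0-\lambda^*))$ is a natural idea, and your seminorm lower bound from boundedness of $f$ is fine, but the collapse step fails in more cases than you flag. Beyond mixed signs of $\zeta_i$ on $M_{\lambda^*}$, the case $\zeta_i=0$ for some $i\in M_{\lambda^*}$ (which is entirely consistent with $M_{\lambda^*}\cap I=\emptyset$, since then $\lambda^0_i=\lambda^*_{\max}>0$) leaves the term $\theta_i^2$ constant along the whole orbit, so no choice of $\alpha$ drives the seminorm to zero. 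Your proposed perturbation fix to $\lambda^{**}=\lambda^*\pm\epsilon\zeta$ does not get you out of this, because the seminorm lower bound you rely on comes from properness of $\lambda^*$, not $\lambda^{**}$; there is no mechanism in the paper (or in your argument) that transfers properness along $L$, and in fact the uniqueness results that follow this lemma show that in general only one point on $L$ is proper.

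The paper circumvents the sign problem altogether by using a \emph{two}-parameter transformation that is not $f$-invariant but still preserves separability: first apply $e^{-\alpha\lambda^0}$, then renormalize by $e^{-\beta\Lambda^*}$ so that $\|\theta'\|_{\Lambda^*}=1$. The first step has only nonnegative exponents, so it monotonically shrinks every $\theta_i$ with $\lambda^0_i>0$ and fixes the rest. The key dichotomy for $i\in[d]\setminus I$ is: either $\lambda^0_i>0$ (that coordinate is crushed) or $\lambda^0_i=0$, which for $i\notin I$ forces $\lambda^*_i=0$ (that coordinate drops out of the $\Lambda^*$-seminorm). Hence $\sum_{i\in[d]\setminus I}\lambda^*_i(e^{-\alpha\lambda^0_i}\theta_i)^2\to 0$, while the $I$-block keeps $\|e^{-\alpha\lambda^0}\theta\|_{\Lambda^*}$ bounded below (Lemma~\ref{lemma:I-nonempty} guarantees $I\neq\emptyset$). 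After renormalizing, since $M_{\lambda^*}\subset[d]\setminus I$ by assumption, $\|\theta'\|_{\Lambda^*_{\max}}^2\le\sum_{i\in[d]\setminus I}\lambda^*_i(\theta'_i)^2<\kappa$, contradicting conditional separability directly. No sign analysis of $\zeta$ is needed.
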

\begin{proof}
Since $\lambda^*$ is proper, there exists a data set with which the model satisfies the conditional separability, i.e., there exists $\kappa >0$ such that all the parameter values $\{\theta_i\}_{i\in [d]}$ which separate the data satisfies $\|\theta\|_{\Lambda^*_{\max}} > \kappa$. 
Let $\{\theta^*_i\}_{i\in [d]}$ be a parameter values separating the data.
By the continuity of the model, without loss of generality, we can assume that $\theta^*_i\neq 0$ for any $i\in [d]$.
In the following, we show $M_{\lambda^*}\cap I \neq \emptyset$ by contradiction.
Suppose $M_{\lambda^*} \cap I =\emptyset$.
We derive the contradiction by showing that there exists a parameter value $\theta'$ which correctly separates the data, but breaks the conditional separability condition with $\lambda^*$. This clearly contradicts with the fact that $\lambda^*$ is proper.
We consider the following transformation
\begin{align*}
    \theta^* &\to e^{-\alpha y(0)}\theta^*,
\end{align*}
with some $\alpha >0$.
This transformation does not change $\{\theta^*_i\}_{i\in I}$, but other parameters are scaled down to $\theta^*_i\to e^{-\alpha \lambda^0_i}\theta^*_i$.
Then $\sum_{i\in [d]/I} \lambda^*_i (e^{-\alpha \lambda^0_i}\theta^*_i)^2$ can be arbitrarily small by taking $\alpha\to\infty$.
(Notice that here we exploited the fact that if $\lambda_i^0 = 0,$ $\lambda^*_i=0$ for any $i\in[d]/I$.)
On the other hand, since $\lambda^*_i >0$ and $|\theta^*_i|>0$ for any $i\in I$, 
$$\|e^{-\alpha y(0)}\theta^*\|_{\Lambda^*} = \sum_{i\in [d]}\lambda^*_i(e^{-\alpha y(0)}\theta^*)^2\geq\sum_{i\in I}\lambda^*_i(e^{-\alpha y(0)}\theta^*)^2$$
is lower bounded by a positive constant. (Here we exploit the fact that $I$ is non-empty by Lemma~\ref{lemma:I-nonempty}.)
Hence by further transforming the parameter
\begin{align*}
    e^{-\alpha y(0)}\theta^* &\to \theta' := e^{-\beta \Lambda^*}e^{-\alpha y(0)}\theta^*,
\end{align*}
with a proper choice of $\beta>0$, we can normalize the $\Lambda^*$-seminorm $\|\theta'\|_{\Lambda^*}=1$ while keeping $\sum_{i\in [d]/I} \lambda^*_i (\theta'_i)^2$ arbitrarily small. In particular, it is smaller than $\kappa$ with large enough $\alpha>0$.
By assumption, we know $[d]/I \supset \operatorname{argmax}_{i\in [d]}\lambda^*_{i}$, and hence,
$$
\|\theta'\|_{\Lambda^*_{\max}}^2 \leq \sum_{i\in [d]/I} \lambda^*_i (\theta'_i)^2 < \kappa.
$$
By the quasi-homogeneity of the model, the model classifies the data set correctly even with this transformed parameter $\theta'$.
However, this means that it breaks the conditional separability condition, which contradicts with the fact that
$\lambda^*$ is proper.
Therefore, $M_{\lambda^*} \cap I \neq \emptyset$.
Lastly $\max_{i\in [d]}\lambda^*_{i}= t^* \max_{i\in I}\zeta_i$ can be derived as follows:
$$
\max_{i\in [d]}\lambda^*_{i} = \max_{i\in I}\lambda^*_{i} = t^* \max_{i\in I}\zeta_i.
$$
\end{proof}

\begin{lemma}
\label{lemma:uniqueness-argmax-Ic-intersect}
For any proper element $\lambda^*\in L/ \{\lambda^0\}$,
$$
M_{\lambda^*} \cap ([d]/I) \neq \emptyset.
$$
\end{lemma}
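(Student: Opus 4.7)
My strategy mirrors Lemma~\ref{lemma:uniqueness-argmax-I-intersect} but inverts its geometric picture: rather than using the quasi-homogeneous scaling $\theta\mapsto e^{-\alpha\lambda^0}\theta$ to shrink coordinates outside $I$, I will exploit the fact that $\zeta$ generates an \emph{exact symmetry} of $f$ in order to shrink coordinates \emph{inside} $I$. Combined with the hypothesis $M_{\lambda^*}\subset I$, this will drive $\|\theta\|_{\Lambda^*_{\max}}$ to zero along parameters that separate the data, contradicting conditional separability.

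The argument will be by contradiction. Assuming $M_{\lambda^*}\cap ([d]/I) = \emptyset$, i.e.\ $M_{\lambda^*}\subset I$, the properness of $\lambda^*$ furnishes a data set obeying A\ref{as:cond-sep} with some constant $\kappa>0$ and a separating $\theta^*$ with $\|\theta^*\|_{\Lambda^*_{\max}}\geq \kappa$. The first step is to verify that for every $\alpha\in\mathbb{R}$ the map $\theta\mapsto e^{\alpha\zeta}\theta$ is a symmetry of $f$. Using that both $\lambda^0$ and $\lambda^*=\lambda^0+t^*\zeta$ lie in $S$ and that their diagonal forms commute, quasi-homogeneity gives
\begin{equation*}
    f(x;e^{\alpha\zeta}\theta) = f\!\left(x;e^{(\alpha/t^*)\lambda^*}e^{-(\alpha/t^*)\lambda^0}\theta\right) = e^{\alpha/t^*}\cdot e^{-\alpha/t^*}f(x;\theta) = f(x;\theta).
\end{equation*}

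Next I will verify that $\zeta_i>0$ for every $i\in M_{\lambda^*}$. Since $M_{\lambda^*}\subset I$, I have $\lambda^0_i=0$ and hence $\lambda^*_i = t^*\zeta_i$; because $\lambda^*_i=\max_j\lambda^*_j>0$ (positivity follows from properness, which forces $\Lambda^*_{\max}\neq 0$), we obtain $\zeta_i>0$. Setting $\theta^{(\alpha)}:=e^{\alpha\zeta}\theta^*$ and letting $\alpha\to-\infty$, the symmetry gives $y_if(x_i;\theta^{(\alpha)}) = y_if(x_i;\theta^*)$ for every $i$, so $\theta^{(\alpha)}$ still separates the data, while
\begin{equation*}
    \|\theta^{(\alpha)}\|_{\Lambda^*_{\max}}^2 = \sum_{i\in M_{\lambda^*}}\lambda^*_ie^{2\alpha\zeta_i}(\theta^*_i)^2 \;\longrightarrow\; 0 \quad \text{as } \alpha\to-\infty,
\end{equation*}
since every $\zeta_i>0$ on $M_{\lambda^*}$. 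Choosing $\alpha$ sufficiently negative produces a separating parameter with $\|\theta^{(\alpha)}\|_{\Lambda^*_{\max}}<\kappa$, contradicting A\ref{as:cond-sep} and completing the proof.

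The main obstacle, which I do not expect to be deep, is correctly setting up the symmetry $\theta\mapsto e^{\alpha\zeta}\theta$ and tracking the signs of $\zeta_i$ on $I$ versus $[d]/I$; both follow immediately from the definition of $I$ and the parameterization $\lambda=\lambda^0+t\zeta$ of $L$.
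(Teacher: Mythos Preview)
Your route via the exact symmetry $\theta\mapsto e^{\alpha\zeta}\theta$ is different from the paper's and pleasingly direct. The paper instead composes two quasi-homogeneous scalings: it first produces an index $j\in[d]\setminus I$ with $\lambda^0_j\neq 0$ and $\theta^*_j\neq 0$ (arguing that otherwise $e\,f(x;\theta^*)=f(x;e^{\lambda^*}\theta^*)=f(x;e^{2y(t^*/2)}\theta^*)=e^2 f(x;\theta^*)$, forcing $f\equiv 0$), then applies $e^{\alpha\lambda^0}$ to inflate coordinate $j$ and $e^{-\beta\lambda^*}$ to renormalize to unit $\Lambda^*$-seminorm. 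The growth at $j$ forces $\beta\to\infty$, which in turn drives every $I$-coordinate of the \emph{normalized} point to zero, so that $\|\theta'\|_{\Lambda^*_{\max}}<\kappa$ with $\|\theta'\|_{\Lambda^*}=1$.

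Your argument, however, does not yet reach a contradiction with A\ref{as:cond-sep}. That assumption bounds $\|\hat\theta\|_{\Lambda^*_{\max}}$ for the $\Lambda^*$-\emph{normalized} parameter (i.e.\ $\|\hat\theta\|_{\Lambda^*}=1$), not $\|\theta\|_{\Lambda^*_{\max}}$ for an arbitrary separating $\theta$. Under $e^{\alpha\zeta}$ with $\alpha\to-\infty$, every $I$-coordinate shrinks (since $\zeta_i>0$ there), so $\|\theta^{(\alpha)}\|_{\Lambda^*}$ can tend to zero at the same rate as $\|\theta^{(\alpha)}\|_{\Lambda^*_{\max}}$; after renormalizing to $\|\hat\theta^{(\alpha)}\|_{\Lambda^*}=1$ you have no control on $\|\hat\theta^{(\alpha)}\|_{\Lambda^*_{\max}}$. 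Concretely, if every index $i$ with $\lambda^*_i>0$ and $\theta^*_i\neq 0$ happens to lie in $I$, the normalized $\hat\theta^{(\alpha)}$ is essentially independent of $\alpha$ and $\|\hat\theta^{(\alpha)}\|_{\Lambda^*_{\max}}$ stays constant. The paper avoids this precisely by first anchoring the $\Lambda^*$-seminorm at a coordinate $j\notin I$ before renormalizing. You can repair your argument either by inserting such a normalization step, or more slickly by invoking the second clause in the definition of ``proper'' (boundedness of $f$ on $\{\|\theta\|_{\Lambda^*_{\max}}=1\}$): rescale $\theta^{(\alpha)}$ by $e^{\gamma\Lambda^*}$ so that $\|e^{\gamma\Lambda^*}\theta^{(\alpha)}\|_{\Lambda^*_{\max}}=1$, note $\gamma\to+\infty$, and observe that $f(x;e^{\gamma\Lambda^*}\theta^{(\alpha)})=e^{\gamma}f(x;\theta^*)$ is unbounded, contradicting properness.
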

\begin{proof}
Suppose $M_{\lambda^*} \cap ([d]/I) = \emptyset$, i.e., $M_{\lambda^*} \subset I$. Let $\{\theta^*_i\}_{i\in [d]}$ be a parameter values separating the data.
Notice that there exists $i\in [d]/I$ such that $\lambda^0_i\neq 0$ and $\theta^*_i\neq 0$. Otherwise, 
$$
ef(x;\theta^*) = f(x;e^{y(t^*)}\theta^*) =f(x;e^{2y(t^*/2)}\theta^*) = e^2 f(x;\theta^*),
$$
which is a contradiction.
We denote such an index by $j$.
We consider the following transformation
$$
\theta^* \to \theta' := e^{-\beta y(t^*)}e^{\alpha y(0)}\theta^*,
$$
for some $\alpha >0$. $\beta>0$ here is chosen to satisfy $\|\theta'\|_{\Lambda^*_{\max}} = 1$.
By the quasi-homogeneity of the model, the model still correctly classifies the data with this transformed parameters.
By taking $\alpha$ arbitrarily large, $(e^{\alpha y(0)}\theta^*)_j=e^{\alpha\lambda^0_j}\theta^*_j$ becomes arbitrarily large, and thus $\beta$ needs to be arbitrarily large to renormalize the $\Lambda^*$-seminorm.
Therefore for any $i \in I$, $\theta'_i = e^{-\beta \zeta_i}\theta^*_i$ can be arbitrarily small.
Therefore, we can find a large enough $\alpha$ such that 
$$
\|\theta'\|^2_{\Lambda^*_{\max}} = \sum_{i\in \operatorname{argmin}\lambda^*_i} \lambda^*_i (\theta^*_i)^2\leq \sum_{i\in I} \lambda^*_i (\theta^*_i)^2 <\kappa
$$
This contradicts with the fact that $\lambda^*$ is proper. Therefore, $M_{\lambda^*} \cap ([d]/I) \neq \emptyset$.
\end{proof}
\begin{lemma}
\label{lemma:uniqueness-interior}
If there exists a proper $\lambda\in \operatorname{Int} S$, it is unique, where $\operatorname{Int}S$ is the interior of $S$.
\end{lemma}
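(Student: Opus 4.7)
\textbf{Proof plan for Lemma \ref{lemma:uniqueness-interior}.}
I will argue by contradiction. Suppose two distinct proper points $\lambda^1,\lambda^2\in\operatorname{Int} S$ exist. Because both lie in the interior, the affine line through them intersects $S$ in a segment that extends strictly beyond each of them in both directions. Walking in the direction from $\lambda^2$ toward $\lambda^1$ and further, I reach a point $\lambda^0$ on the boundary of $S$ (some coordinate of $\lambda^0$ vanishes there). Reparametrize the segment so that $\lambda^0$ corresponds to $t=0$ and $\lambda^*(t)=\lambda^0+t\zeta$ with $\lambda^1=\lambda^*(t^1)$, $\lambda^2=\lambda^*(t^2)$ for some $0<t^1<t^2$. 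This places me in the setting of Lemmas~\ref{lemma:uniqueness-argmax-I-intersect} and~\ref{lemma:uniqueness-argmax-Ic-intersect}.

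Let $M:=\max_{i\in I}\zeta_i$, which is strictly positive since every $i\in I$ satisfies $\lambda^0_i=0$ but $\lambda^*_i>0$ for $t>0$ small. By Lemma~\ref{lemma:uniqueness-argmax-I-intersect} applied at $t^1$ and at $t^2$, the largest coordinate of $\lambda^k$ equals $t^k M$ for $k=1,2$. By Lemma~\ref{lemma:uniqueness-argmax-Ic-intersect}, I can choose indices $j_1,j_2\notin I$ with $\lambda^1_{j_1}=t^1M$ and $\lambda^2_{j_2}=t^2M$. Writing $\lambda^0_{j_k}=\lambda^k_{j_k}-t^k\zeta_{j_k}=t^k(M-\zeta_{j_k})$, I can evaluate $\lambda^1_{j_2}$ and $\lambda^2_{j_1}$ in closed form along the segment.

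The contradiction then comes from comparing these cross-evaluations to the respective maxima. Using $\lambda^2_{j_1}\le\lambda^2_{j_2}=t^2M$ yields $\zeta_{j_1}\le M$; using $\lambda^1_{j_2}\le\lambda^1_{j_1}=t^1M$ and $t^2>t^1$ yields $M\le\zeta_{j_2}$. Combined with $\lambda^0_{j_2}=t^2(M-\zeta_{j_2})\ge 0$, this forces $\zeta_{j_2}=M$ and $\lambda^0_{j_2}=0$. But then at any small $t>0$ one has $\lambda^*_{j_2}(t)=tM>0$, which by the definition of $I$ puts $j_2\in I$, contradicting $j_2\notin I$. A parallel (and simpler) argument handles the degenerate case $j_1=j_2$: the two linear equations $\lambda^0_j+t^k\zeta_j=t^kM$ force $\zeta_j=M$ and $\lambda^0_j=0$, again dropping $j$ into $I$.

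The only subtle step is setting up the line properly: I need to be sure that interiority of $\lambda^1,\lambda^2$ permits extending the segment to a boundary point $\lambda^0\in S$ (rather than leaving $S$), and that $\lambda^0\in\mathbb{R}^d_{\ge 0}$ so that the sign constraint $\lambda^0_{j_2}\ge 0$ can be invoked. Both follow from convexity of $S$ (Lemma~\ref{uniqueness-convexity}) together with $S\subset\mathbb{R}^d_{\ge 0}$: the segment from an interior point in direction $-\zeta$ stays in $S$ until its first coordinate hits zero, and that first-hit point is the desired $\lambda^0$. With this setup secured, the algebraic contradiction above is routine and completes the proof.
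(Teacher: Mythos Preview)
Your argument is correct and genuinely different from the paper's. Both proofs start the same way---set up the line $L$ with endpoint $\lambda^0$ and invoke Lemmas~\ref{lemma:uniqueness-argmax-I-intersect} and~\ref{lemma:uniqueness-argmax-Ic-intersect}---but diverge from there. The paper splits into two cases according to whether $L$ has one or two endpoints: in the two-endpoint case it applies Lemma~\ref{lemma:uniqueness-argmax-I-intersect} from \emph{both} ends to derive an explicit formula $t^*=\frac{\max_{J}\zeta}{\max_{J}\zeta+\max_{I}\zeta}\,t_{\max}$ that pins down the unique proper parameter; in the one-endpoint case it argues via an inequality that $t^*$ must equal the first time at which $\operatorname{argmax}_i y(t)_i$ is entirely contained in $I$. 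You instead apply Lemmas~\ref{lemma:uniqueness-argmax-I-intersect} and~\ref{lemma:uniqueness-argmax-Ic-intersect} at \emph{both} proper points $\lambda^1,\lambda^2$, and combine the resulting inequalities $\lambda^1_{j_2}\le t^1M$ and $\lambda^0_{j_2}\ge 0$ to squeeze $\zeta_{j_2}=M$ and $\lambda^0_{j_2}=0$, forcing $j_2\in I$. This avoids the case split entirely and is shorter; the paper's route has the side benefit of producing an explicit formula for where the proper point must sit.

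One small point: your justification that ``the segment from an interior point in direction $-\zeta$ stays in $S$ until its first coordinate hits zero'' does not follow from convexity of $S$ alone. What you need is that the affine line through any two points of $S$, intersected with $\mathbb{R}^d_{\ge 0}$, lies in $S$; this is exactly the content of Eq.~\ref{eq:uniqueness-quasihom-prop} (used in the proof of Lemma~\ref{lemma:I-nonempty}), and citing that closes the gap. With the endpoint $\lambda^0\in L\subset S\subset\mathbb{R}^d_{\ge 0}$ thus secured, Lemma~\ref{lemma:I-nonempty} gives $I\neq\emptyset$ and hence $M>0$, and your contradiction goes through cleanly.
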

\begin{proof}
Suppose there exists two different proper point $\lambda^1,\lambda^2\in\operatorname{Int}S$.
In the following argument, we will show that $\lambda^1=\lambda^2$, which is a contradiction.
We consider a line including the two points $\lambda^1$ and $\lambda_2$.
Without loss of generality it can be represented as
\begin{equation}
    L :=\{y\in \mathbb{R}^d_{\geq 0}: y = 
\zeta t + \lambda^0 \text{ with some } t\geq 0\},
\end{equation}
where $\zeta = \lambda^2 -\lambda^1$ and $\lambda^0$ is an end point of this line.
Let $\lambda^* = y(t^*)$ be a proper point in the interior of the line. $\lambda^*$ can be either $\lambda^1$ or $\lambda^2$.
In the following, we will derive an explicit formula which uniquely determines $t^*$, implying $\lambda^1 = \lambda^2$. %
By applying Lemma~\ref{lemma:uniqueness-argmax-I-intersect}, we obtain
$$
\max_{i\in [d]}\lambda^*_{i} = t^* \max_{i\in I}\zeta_i.
$$
Suppose the line has the other end point $t=t_{\max}>0$.
We can apply Lemma~\ref{lemma:uniqueness-argmax-I-intersect} again with this other end point, and we can derive the corresponding equality
$$
\lambda^*_{\max} = (t_{\max}- t^*) \max_{i\in J}\zeta_i,
$$
where $J\subset [d]$ is given by
$$
J := \left\{j\in[d]: y(t_{\max})_j = 0, \exists \lambda\in L , \lambda_j > 0\right\}.
$$
By comparing these two equality, we obtain
$$
t^* = \frac{\max_{i\in J} \zeta_i}{\max_{i\in J}  \zeta_i + \max_{i\in I} \zeta_i}t_{\max}.
$$
This uniquely determines $t^*$ and hence $\lambda^1 = \lambda^2$.
Next, we consider the other case where $L$ does not have end point other than $\lambda^0$.
Notice that 
\begin{equation}
\max_{i\in I} \zeta_i > \max_{i\in [d]/I} \zeta_i.
\label{eq:ineq-on-zeta}
\end{equation}
This is due to the following reason:
Suppose there exists $j\in [d]/I$ such that $\zeta_j \geq \max_{i\in I} \zeta_i$.
Since $\zeta_j>0$, $j\not\in I$ implies that $\lambda^0_j>0$.
Hence, $y(t^*)_j = t^* \zeta_j + \lambda^0_j > \lambda_{\max}^*$, which is clearly a contradiction.
The inequality Eq.\ref{eq:ineq-on-zeta} means that for any $j\in [d]/I$, 
$$
\{t\geq 0: \max_{i\in I} y(t)_i > y(t)_j\} = [t'_j,\infty)
$$
with some $t'_j\geq0$.
Hence
$$t^* \in \{t\geq 0: \max_{i\in I} y(t)_i = \max_{i\in[d]}y(t)_i \} = \cap_{j\in [d]/J} \{t\geq 0: y(t)_J > y(t)_j\} = [t',\infty),
$$
where $t' = \max_{j\in[d]/I} t'_j$.
In the region $(t',\infty)$, $\operatorname{argmax}_{i\in [d]}y(t)_i \subset I$, and hence by Lemma~\ref{lemma:uniqueness-argmax-Ic-intersect}, $t^* \not\in (t',\infty)$.
Hence $t^* = t'$. The uniqueness of $t'$ implies that $\lambda^1 = \lambda^2$.
The argument above shows that in any case $\lambda^1 = \lambda^2$, which contradicts with the assumption that they are different.
Therefore, the claim follows. 
\end{proof}
\begin{lemma}
\label{lemma:uniqueness-negative-min-positive-max}
For any proper element $\lambda^*\in S$ and any $\lambda\in S$,
$$
\min_{i\in M_{\lambda^*}}(\lambda_i - \lambda^*_i) \leq 0, \quad \max_{i\in M_{\lambda^*}}(\lambda_i - \lambda^*_i) \geq 0.
$$
\end{lemma}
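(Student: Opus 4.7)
My plan is to prove both inequalities by contradiction, by constructing an $f$-preserving flow along which the $\Lambda^*_{\max}$-seminorm of a separating parameter is driven to zero; this contradicts the boundedness clause in the definition of properness. The key observation is that any two scaling vectors $\lambda,\lambda^*\in S$ differ by a direction along which $f$ is constant.

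First, applying Euler's theorem (Lemma~\ref{lemma:euler-thm-for-lipschitz}) with both $\Lambda^*$ and $\Lambda$ and subtracting yields $\langle h,(\Lambda^*-\Lambda)\theta\rangle=0$ for every $h\in\partial^\circ_\theta f(\theta)$. Setting $C=\Lambda^*-\Lambda$ and $\theta(s)=e^{sC}\theta^*$, the chain rule then gives $\frac{d}{ds}f(x;\theta(s))\equiv 0$, so $f(x;\theta(s))=f(x;\theta^*)$ for all $s\in\mathbb{R}$ and all inputs $x$. Independently, the boundedness clause in properness says $|f(x;\tilde\theta)|\leq C_x$ whenever $\|\tilde\theta\|_{\Lambda^*_{\max}}=1$; writing any $\theta$ with $\|\theta\|_{\Lambda^*_{\max}}>0$ as $e^{\sigma\Lambda^*}\tilde\theta$ with $\sigma=\log\|\theta\|_{\Lambda^*_{\max}}/\lambda^*_{\max}$ and using $\Lambda^*$-quasi-homogeneity produces the growth bound
\begin{equation*}
|f(x;\theta)|\leq C_x\,\|\theta\|_{\Lambda^*_{\max}}^{1/\lambda^*_{\max}}.
\end{equation*}

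Now fix any separating $\theta^*$; conditional separability forces $\|\theta^*\|_{\Lambda^*_{\max}}>0$. Suppose the first inequality fails, so $\lambda_i>\lambda^*_i$ and hence $c_i:=\lambda^*_i-\lambda_i<0$ for every $i\in M_{\lambda^*}$. Then $\|\theta(s)\|_{\Lambda^*_{\max}}^2=\lambda^*_{\max}\sum_{i\in M_{\lambda^*}}e^{2sc_i}(\theta^*_i)^2\to 0$ as $s\to+\infty$, and the growth bound forces $|f(x_i;\theta(s))|\to 0$ for each training index $i$. But flow invariance gives $f(x_i;\theta(s))=f(x_i;\theta^*)\neq 0$ by the separation property, a contradiction. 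Negating the second inequality instead yields $c_i>0$ on $M_{\lambda^*}$, and running the same flow with $s\to-\infty$ produces the identical contradiction by symmetry.

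The principal technical care needed is the justification of flow invariance in the nonsmooth setting, which is handled by Lemma~\ref{lemma:euler-thm-for-lipschitz} for locally Lipschitz $f$ admitting a chain rule; the rest is elementary asymptotics. Notably, it is the boundedness clause in properness—rather than conditional separability—that carries the weight of the proof, by converting exponential decay of the $\Lambda^*_{\max}$-seminorm along the flow into decay of $|f|$ on the training data.
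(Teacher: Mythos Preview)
Your proof is correct and rests on the same mechanism as the paper's: exploit the extra scaling direction furnished by two distinct elements of $S$, together with the boundedness clause in properness, to force a contradiction. The execution differs slightly. The paper parametrizes by $e^{\alpha y(t)}$ with $y(t)=t(\lambda^*-\lambda)+\lambda$, chooses $t$ so that $y_i(t)\le 0$ on $M_{\lambda^*}$, and sends $\alpha\to\infty$: then $f$ blows up like $e^{\alpha}$ while $\|\cdot\|_{\Lambda^*_{\max}}$ stays bounded. You instead run the single $f$-preserving flow $\theta(s)=e^{s(\Lambda^*-\Lambda)}\theta^*$ and send $\|\theta(s)\|_{\Lambda^*_{\max}}\to 0$ while $f$ remains constant, appealing to the explicit growth bound $|f(x;\theta)|\le C_x\|\theta\|_{\Lambda^*_{\max}}^{1/\lambda^*_{\max}}$ that you derive from boundedness on the level set. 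Your version is arguably tidier (one-parameter flow, explicit growth estimate); the paper's two-parameter version ties in more visibly with the affine structure of $S$ used in the surrounding lemmas. One minor simplification: the $f$-invariance of your flow follows immediately from $f(x;e^{s\Lambda^*}e^{-s\Lambda}\theta^*)=e^{s}e^{-s}f(x;\theta^*)$ by commutativity of diagonal exponentials, so the detour through Euler's theorem and the nonsmooth chain rule is not actually needed.
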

\begin{proof}
we show this by contradiction.
Suppose there exist  $\lambda, \lambda^*\in S$ such that $\min_{i\in M_{\lambda^*}}(\lambda_i - \lambda^*_i) > 0$ or $\max_{i\in M_{\lambda^*}}(\lambda_i - \lambda^*_i) < 0$.
Let $\theta^*$ be  parameter  values with which the model can separate a data set.
We consider a transformation $\theta\to e^{\alpha y(t)}\theta$ with
$$
y(t) = t(\lambda^* - \lambda) + \lambda
$$
with some $\alpha,t \in\mathbb{R}$.
By quasihomogeneity of the model with respect to $\lambda$ and $\lambda^*$, we know that $f(x;e^{\alpha y(t)}\theta) = e^{\alpha}f(x;\theta)$.
By assumption, there exists $t\in\mathbb{R}$ such that $y_i(t)\leq 0$ for any $i\in M_{\lambda^*}$.
This implies that $\{f(x;e^{\alpha y(t)}\theta):\|e^{\alpha y(t)}\theta\|_{\Lambda^*_{\max}} \leq \|\theta\|_{\Lambda^*_{\max}}, \alpha,t \in \mathbb{R}\}$ is unbounded, which contradicts with our assumption. Hence, the claim follows.

\end{proof}

\begin{lemma}
\label{lemma:uniqueness-equivalence}
Let $\lambda^1,\lambda^2\in S$ be proper. 
If the following two conditions are met, $\lambda^1$ and $\lambda^2$ are equivalent in terms of the first-order KKT condition.
\begin{equation*}
    \begin{cases}
    \text{Either } \min_{i\in M_{\lambda^k}}(\lambda^1_i - \lambda^2_i) = 0\text{ or }\max_{i\in M_{\lambda^k}}(\lambda^1_i - \lambda^2_i) = 0 \text{ holds for both }k=1,2\\
    \{i\in M_{\lambda^1}: \lambda^1_i=\lambda^2_i\} = \{i\in M_{\lambda^2}: \lambda^1_i=\lambda^2_i\}
    \end{cases}
\end{equation*}
\begin{proof}
Let $L = \{i\in M_{\lambda^1}: \lambda^1_i=\lambda^2_i\} (= \{i\in M_{\lambda^2}: \lambda^1_i=\lambda^2_i\})$.
It suffices to show that set of the first-order KKT points of the following problem
\begin{equation}
\label{eq:uniqueness-KKT-lambda-k}
    \begin{split}
        \text{minimize} &\qquad \frac{1}{2}\|\theta\|_{\Lambda^k_{\max}}^2 \\
        \text{subject to} &\qquad y_if(x_i;\theta) \ge 1 \quad \forall i \in [n]
    \end{split}
\end{equation}
is equivalent to the set of the first-order KKT points of the following problem
\begin{equation}
\label{eq:uniqueness-KKT-L}
    \begin{split}
        \text{minimize} &\qquad \frac{1}{2}\sum_{i\in L}\lambda_{\max}^k\theta_i^2 \\
        \text{subject to} &\qquad y_if(x_i;\theta) \ge 1 \quad \forall i \in [n]\\
        &\qquad \theta_i = 0\quad \forall i \in (M_{\lambda^1}\cup M_{\lambda^2})/L.
    \end{split}
\end{equation}
for any data set $\{(x_i,y_i)\}_{i\in [n]}$ for $k=1,2$.
Since the second optimization problem is clearly a restriction of the first optimization problem with some additional constraints, it suffices to show that any first-order KKT point of the first problem satisfies the constraint $\theta_i = 0$ for any $ i\in (M_{\lambda^1}\cup M_{\lambda^2})/L$.
To prove this statement, first we show that for $k=1,2$, and any $\theta\in\mathbb{R}^d$, a one-parameter family of parameter values $\{\Theta(\alpha):= e^{-\alpha(\Lambda^1 - \Lambda^2)}\theta:\alpha \in\mathbb{R}\}$ satisfies
\begin{itemize}
    \item $\frac{d}{d\alpha} f(x;\Theta(\alpha)) = 0$
    \item If $\left.\frac{d}{d\alpha}\right|_{\alpha=0} \|\Theta(\alpha)\|_{\Lambda^k_{\max}} = 0$, $\theta_i = 0$ for any $i\in M_{\lambda^k}/L$.
\end{itemize}
The first point can be easily seen by the quasi-homogeneity of the model with respect to $\lambda^1$ and $\lambda^2$. Indeed, for any $\alpha\in\mathbb{R}$,
\begin{equation*}
    f(x;\Theta(\alpha)) = f(x; e^{-\alpha\Lambda^1} e^{\alpha\Lambda^2}\theta) = e^{-\alpha}f(x; e^{\alpha\Lambda^2}\theta) = f(x;\theta).
\end{equation*}
Regarding the second point, first notice that for any $i\in L$, $\Theta_i(\alpha) = \theta_i$ for any $\alpha\in\mathbb{R}^d$ and hence
$$
\left.\frac{d}{d\alpha}\right|_{\alpha=0} \|\Theta(\alpha)\|_{\Lambda^k_{\max}} =\lambda^k_{\max}\left.\frac{d}{d\alpha}\right|_{\alpha=0} \sum_{i\in M_{\lambda^k}/L} \Theta^2_i(\alpha).
$$
If $\min_{i\in M_{\lambda^1}}(\lambda^1_i - \lambda^2_i) = 0$, $\lambda^1_i - \lambda^2_i <0$ for any $i\in M_{\lambda^k}/L$, and thus, unless $\theta_i=0$ for any $i\in M_{\lambda^k}/L$,
$$
\left.\frac{d}{d\alpha}\right|_{\alpha=0} \sum_{i\in M_{\lambda^k}/L} \Theta^2_i(\alpha)
= -\sum_{i\in M_{\lambda^k}/L} \alpha (\lambda^1_i-\lambda^2_i)\theta_i^2
> 0
$$.
On the other hand, if $\max_{i\in M_{\lambda^1}}(\lambda^1_i - \lambda^2_i) = 0$, $\lambda^1_i - \lambda^2_i >0$ for any $i\in M_{\lambda^k}/L$, and thus, unless $\theta_i=0$ for any $i\in M_{\lambda^k}/L$, $$\left.\frac{d}{d\alpha}\right|_{\alpha=0} \sum_{i\in M_{\lambda^k}/L} \Theta^2_i(\alpha)< 0.$$
Therefore, in either case, the second claim holds.
Let $\theta$ be a first-order KKT point of Eq.\ref{eq:uniqueness-KKT-lambda-k} with a KKT multiplier $\mu\in\mathbb{R}^n$.
The stationary condition along the one-parameter family $\Theta(\alpha)$ is
$$
\left.\frac{d}{d\alpha}\right|_{\alpha=0} \frac{1}{2}\|\Theta(\alpha)\|_{\Lambda^k_{\max}}^2 +\sum_{j\in [n]}\mu_j y_j \left.\frac{d}{d\alpha}\right|_{\alpha=0} f(x_j;\Theta(\alpha)) = 0.
$$
Hence by the two properties above with both $k=1,2$, we obtain $\theta_i = 0$ for any $i\in (M_{\lambda^1} \cup M_{\lambda^2})/L$.
Therefore $\theta$ is a first-order KKT point of Eq.\ref{eq:uniqueness-KKT-L}.
\end{proof}
\end{lemma}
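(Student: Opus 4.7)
The starting point is the observation that quasi-homogeneity with respect to both $\lambda^1$ and $\lambda^2$ yields a one-parameter family of parameter transformations along which $f$ itself is preserved, not merely rescaled. Concretely, set $\Theta_\theta(\alpha) := e^{-\alpha(\Lambda^1-\Lambda^2)}\theta$; combining the two quasi-homogeneity laws gives
\[
    f(x;\Theta_\theta(\alpha)) = e^{-\alpha} f(x;e^{\alpha\Lambda^2}\theta) = f(x;\theta),
\]
so every feasibility constraint $y_i f(x_i;\theta)\ge 1$ is invariant along this curve. Let $L := \{i\in M_{\lambda^1}: \lambda^1_i = \lambda^2_i\} = \{i\in M_{\lambda^2}: \lambda^1_i = \lambda^2_i\}$, the set from hypothesis~(2). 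My plan is to show that every first-order KKT point of either $k$-problem $\min \tfrac{1}{2}\|\theta\|^2_{\Lambda^k_{\max}}$ subject to $y_i f(x_i;\theta)\ge 1$ additionally satisfies $\theta_i = 0$ for $i\in (M_{\lambda^1}\cup M_{\lambda^2})\setminus L$; once established, both $k$-problems reduce to the same restricted problem obtained by imposing these extra equality constraints, so their KKT sets coincide.

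The central step is a stationarity computation along $\Theta_\theta$. At a KKT point $\theta$ of the $k$-problem, the Lagrangian is stationary in every direction, and along $\Theta_\theta(\alpha)$ the constraint-multiplier terms vanish identically by the $f$-invariance derived above, leaving
\[
    \frac{d}{d\alpha}\Big|_{\alpha=0}\tfrac{1}{2}\|\Theta_\theta(\alpha)\|^2_{\Lambda^k_{\max}} = -\lambda^k_{\max}\sum_{i\in M_{\lambda^k}}(\lambda^1_i - \lambda^2_i)\,\theta_i^2 = 0.
\]
Hypothesis~(1) is exactly what I need: on $M_{\lambda^k}$ either the min or the max of $\lambda^1_i - \lambda^2_i$ equals zero, and by Lemma~\ref{lemma:uniqueness-negative-min-positive-max} the other extremum has the opposite sign constraint, so the differences $\{\lambda^1_i-\lambda^2_i\}_{i\in M_{\lambda^k}}$ all share one sign. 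A vanishing sum of same-sign nonnegative terms forces every summand to vanish, so for each $i\in M_{\lambda^k}$ either $i\in L$ or $\theta_i = 0$. Applying this for both $k=1$ and $k=2$ yields $\theta_i=0$ on $(M_{\lambda^1}\cup M_{\lambda^2})\setminus L$, as required.

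The main obstacle I anticipate is the nonsmooth bookkeeping: the stationarity invoked above has to be phrased via Clarke subgradients under A\ref{as:reg}, and the cancellation of the multiplier terms along $\Theta_\theta$ requires the scaling identity of Lemma~\ref{lemma:df-scaling-property} together with the Euler-type identity of Lemma~\ref{lemma:euler-thm-for-lipschitz} applied to each active constraint $y_i f(x_i;\cdot)$, so that the Lagrangian admits a chain rule along this specific curve. Once that is settled, the reverse inclusion is immediate, since the reduced problem is just the $k$-problem with additional equality constraints that are implied by the KKT conditions themselves; comparing the two sets of stationarity conditions on the common face $\{\theta : \theta_i=0\text{ for } i\in (M_{\lambda^1}\cup M_{\lambda^2})\setminus L\}$ shows they are identical, establishing the claimed equivalence for both $k=1$ and $k=2$.
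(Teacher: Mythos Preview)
Your proposal is correct and follows essentially the same route as the paper: both use the $f$-invariant one-parameter family $\Theta(\alpha)=e^{-\alpha(\Lambda^1-\Lambda^2)}\theta$, project the KKT stationarity condition onto this curve to force $\theta_i=0$ on $(M_{\lambda^1}\cup M_{\lambda^2})\setminus L$, and then observe that on this face the two objectives have proportional gradients. Two small remarks: your appeal to Lemma~\ref{lemma:uniqueness-negative-min-positive-max} is unnecessary (hypothesis~(1) alone already forces all the $\lambda^1_i-\lambda^2_i$ on $M_{\lambda^k}$ to share a sign), and the nonsmooth bookkeeping you flag is moot here since the paper explicitly assumes differentiability of $f$ in this appendix.
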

By exploiting all of the lemmas above, we are now going to prove Theorem~\ref{theorem:uniquess}.
\begin{proof}[proof of Theorem~\ref{theorem:uniquess}]
Suppose there exists two different proper points $\lambda^1,\lambda^2\in S$. By Lemma~\ref{lemma:uniqueness-interior}, at least either of $\lambda^1$ or $\lambda^2$ is on the boundary $\partial S$ of $S$. Without loss of generality, we assume that $\lambda^1\in \partial S$. We consider a line segment in $S$
\begin{equation}
    L:=\{y(t)\in \mathbb{R}^{d}_{\geq 0}: y(t) = 
\zeta t + \lambda^1, t>0\},
\end{equation}
where $\zeta = \lambda^2-\lambda^1\in\mathbb{R}^d$.
We first consider the case where $L$ has a single end point $\lambda^1$. 
We will show that $\lambda^1$ and $\lambda^2$ are equivalent in terms of the resulting optimization problem, by applying Lemma~\ref{lemma:uniqueness-equivalence}.
Hence we are going to verify each condition required in the lemma.
Notice that the fact that $L$ has a single end point implies that $\zeta_i\geq 0$ for any $i\in [d]$.
Combined with  Lemma~\ref{lemma:uniqueness-negative-min-positive-max}, this means that
$$
\min_{i\in M_{\lambda^k}}(\lambda^2_i -\lambda^1_i) = 0$$
for $k=1,2$.
Furthermore, clearly $ \{i\in M_{\lambda^2}: \lambda^1_i= \lambda^2_i\}=M_{\lambda^1}$. This is because for any $i\in \{i\in M_{\lambda^2}: \lambda^1_i= \lambda^2_i\}$ and $j\in [d]$,
$$
\lambda^1_i = \lambda^2_i \geq \lambda^2_j \geq \lambda^2_j - \zeta_j t = \lambda^1_j,
$$
where the inequality above hold as an equality if and only if $j\in \{i\in M_{\lambda^2}: \lambda^1_i= \lambda^2_i\}$. 
$$ \{i\in M_{\lambda^2}: \lambda^1_i= \lambda^2_i\}=\{i\in M_{\lambda^1}: \lambda^1_i= \lambda^2_i\}$$
immediately follows from $ \{i\in M_{\lambda^2}: \lambda^1_i= \lambda^2_i\}=M_{\lambda^1}$.
Hence by applying Lemma~\ref{lemma:uniqueness-equivalence}, we obtain that $\lambda^1$ and $\lambda^2$ are equivalent.
Next, we consider the other case where $L$ has two end points. Let $y(t_{\max})$ denote the other end point. Suppose $\lambda^2$ is an interior point, i.e., $\lambda^2\neq y(t_{\max})$. 
By exploiting Lemma~\ref{lemma:uniqueness-argmax-I-intersect}, we obtain
\begin{equation}
\label{eq:uniqueness-lambda1max>lambda2max}
    \lambda^1_{\max} = t_{\max} \max_{i\in J}\zeta_i >  (t_{\max}- t^2) \max_{i\in J}\zeta_i = \lambda^2_{\max},
\end{equation}

where $t^2$ is given by $y(t^2) = \lambda^2$ and $J\subset [d]$ is given by
$$
J := \left\{j\in[d]: y(t_{\max})_j = 0, \exists \lambda\in L , \lambda_j > 0\right\}.
$$

On the other hand, by applying Lemma~\ref{lemma:uniqueness-negative-min-positive-max} at $\lambda^1$, we know that there exists $i\in M_{\lambda^1}$ such that $\zeta \geq 0$.
Hence $\lambda^2_i \geq \lambda^1_i = \lambda^1_{\max}$.
This clearly contradicts with Eq.\ref{eq:uniqueness-lambda1max>lambda2max}.
Hence, $\lambda^2 = y(t^{\max})$.
Lastly, we show that $\lambda^1$ and $\lambda^2=y(t^{\max})$ are equivalent in terms of the resulting optimization problem by applying Lemma~\ref{lemma:uniqueness-equivalence}.
Hence, we are going to verify the conditions required in the lemma.  
Since $\min_{i\in M_{\lambda^2}}\zeta_i \leq 0$ by Lemma~\ref{lemma:uniqueness-negative-min-positive-max}, there exists $j\in M_{\lambda^2}$ such that $\zeta_j \leq 0$, and hence
$\lambda^2_{\max} = \lambda^2_j = \lambda^1_j + \zeta_j t_{\max} < \lambda^1_j \leq \lambda^1_{\max}$.
By applying Lemma~\ref{lemma:uniqueness-negative-min-positive-max} at $\lambda^1$, similarly we obtain $\lambda^1_{\max} \leq \lambda^2_{\max}$.
Therefore 
$$\lambda^1_{\max}=\lambda^2_{\max}.$$
Suppose there exists $i\in M_{\lambda^1}$ such that $\zeta_i >0$.
Then, $\lambda^1_{\max} = \lambda^1_i = \lambda^2_i - \zeta_it_{\max} = \lambda^2_i< \lambda^2_{\max}$.
This is a contradiction, and hence $\max_{i\in M_{\lambda^1}}\zeta_i \leq 0$. 
Combining this with Lemma~\ref{lemma:uniqueness-negative-min-positive-max}, $\max_{i\in M_{\lambda^1}}\zeta_i = 0$.
Similarly, suppose there exists $i\in M_{\lambda^2}$ such that $\zeta_i < 0$.
Then, $\lambda^2_{\max} = \lambda^2_i = \lambda^1_i + \zeta_it_{\max} < \lambda^1_{\max}$.
This is a contradiction, and hence $\min_{i\in M_{\lambda^2}}\zeta_i = 0$.
Let $k\in\{i\in M_{\lambda^1}: \lambda^1_i=\lambda^2_i\}$. Then $\lambda^2_k = \lambda^1_k = \lambda^1_{\max} = \lambda^2_{\max}$. Hence $k\in\{i\in M_{\lambda^2}: \lambda^1_i=\lambda^2_i\}$.
Similarly if $k\in\{i\in M_{\lambda^2}: \lambda^1_i=\lambda^2_i\}$, $\lambda^1_k = \lambda^2_k = \lambda^2_{\max} = \lambda^1_{\max}$. Hence $k\in\{i\in M_{\lambda^1}: \lambda^1_i=\lambda^2_i\}$. Therefore
$$
\{i\in M_{\lambda^1}: \lambda^1_i=\lambda^2_i\}= \{i\in M_{\lambda^2}: \lambda^1_i=\lambda^2_i\}.
$$
Now, all the conditions in Lemma~\ref{lemma:uniqueness-equivalence} are satisfied, and hence $\lambda^1$ and $\lambda^2$ are equivalent.
In summary, regardless of the finiteness of the line $L$, $\lambda^1$ and $\lambda^2$ are equivalent in terms of the first-order KKT condition.
\end{proof}

\clearpage
\section{Limitations of our Current Assumptions}

\textbf{A\ref{as:quasi}.} 
While, many modern neural network architectures are $\Lambda$-quasi-homogeneous, this class of models cannot represent models with non-homogeneous activations such as the hyperbolic tangent function or the sigmoid function.
While these activation functions are less common in practice, it would be interesting if there exists a direction towards generalizing our analysis to models using these functions.
This could be of interest to the computational biology community as these monotonic activations that saturate are much more biologically-plausible then non-saturating homogeneous activations. 
One route could be studying the properties of \textit{homothetic functions}, which are monotonic transformations of homogeneous functions.
This class of functions has the same ordinal properties of homogeneous functions and is used extensively in economics \cite{simon1994mathematics}.

\textbf{A\ref{as:reg} - A\ref{as:sep}.}
These assumptions are equivalent, up to order, to the assumptions presented in \cite{lyu2019gradient} and are all quite standard in the literature studying the maximum margin bias of gradient descent.
The strongest of these assumptions is A\ref{as:gf}, which assumes the training dynamics of the model are governed by the first-order ODE gradient flow.
As discussed in section~\ref{sec:conclusion}, an important future step would be to generalize our results to stochastic gradient descent (SGD).
It is very possible that the hyperparameters of SGD, such as the learning rate and batch size, play an important role in determining the forces driving the limiting dynamics.

\textbf{A\ref{as:conv}.}
This assumption implies the convergence of the decision boundary and is equivalent to directional convergence for a homogeneous model.
This assumption is necessary to show that the model's prediction is bounded on the normalized training trajectory (Lemma~\ref{lemma:bounded-normalized-trajectory}) and for a technical reason to show the alignment of $\frac{d\theta}{dt}$ and $\Lambda \theta$ (Lemma~\ref{lemma:alignment}).
While a necessary assumption, we expect that this assumption can be weakened by exploiting the argument in \cite{ji2020directional}, which was applied for homogeneous functions.
This could be an important step for future work as it is possible to construct settings where gradient flow will violate this assumption.
See Appendix J of \cite{lyu2019gradient} for an example of a smooth homogeneous function where the limiting dynamics of gradient flow provably don't converge after normalization, but move along a circle.
Another, more practical example, occurs for models where the residual block diverges.
Because these parameters necessarily have a $\lambda$ value of $0$, then the normalized parameter $\hat \theta$ diverge as well.
The divergence of a residual block essentially means that the skip connection of the model is effectively negligible and does not play an important role in the classification.
Thus, we believe that if the skip connections play an important role for the classification task, then residual block does not diverge, and this assumption is satisfied.
Additionally, like A\ref{as:cond-sep}, this assumption restricts the possible proper values of $\Lambda$ as discussed in App.~\ref{apx:uniqueness}.

\textbf{A\ref{as:cond-sep}.}
This assumption implies that $\lambda_{\max}$ parameters play a role in the classification task.
This is the strongest of the assumptions we introduce in our work.
This assumption is a restriction on the interaction of the parameterization of a model and the dataset, and thus it is difficult to assert its validity for general settings, without directly modeling the data as in section~\ref{sec:separating-balls}.
That said, there are some standard settings where we can assert that this assumption is always true.
First, it is trivially true that for a homogeneous model where $\|\hat\theta\|_{\Lambda_{\max}} = \|\hat\theta\|$ that A\ref{as:cond-sep}
is always true.
Additionally, for all models with batch normalization on the last hidden layer, such as a ResNet-18 model, then A\ref{as:cond-sep}
is also true, as the last layer parameters are $\Lambda_{\max}$ parameters, and thus necessary for classification.
However, there are other settings where it is less clear that A\ref{as:cond-sep} is satisfied.
For example, for a fully-connected network with bias parameters the validity of A\ref{as:cond-sep} will depend on the data.
This limitation is why we introduced Conjecture~\ref{conj:cascading}, which does not involve A\ref{as:cond-sep}, and provided empirical evidence supporting its claim in section~\ref{sec:separating-balls}.  
The challenge to proving Conjecture~\ref{conj:cascading} will be showing that a version of Lemma~\ref{lemma:dynamics} still holds once the parameter space is restricted such that the $\Lambda_{\max}$ parameters are zero.
It is likely the case that an argument in this direction will require a proof by induction on the highest-rate parameters and their importance to the classification task.

\clearpage
\section{Proof of Lemma~\ref{lemma:dynamics}}
\label{apx:proof-lemma:dynamics}
Throughout this section, we assume A1 to A4.
To simplify notation in our proof, we define
\begin{equation}
    \nu := \frac{1}{2}\frac{d}{dt}\|\theta\|_\Lambda^2.
\end{equation}

We make use of the following two simple statements:

\begin{lemma}
    \label{lemma:dL/dt}
    The derivative of loss is given by 
    \begin{equation}
    \label{eq:dL/dt}
        \frac{d\mathcal{L}}{dt} = - \left\|\frac{d\theta}{dt}\right\|^2,
    \end{equation}
    for almost every $t>0$. Hence, it is non-increasing for $t>0$.
\end{lemma}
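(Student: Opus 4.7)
The plan is to apply the nonsmooth chain rule for locally Lipschitz functions that admit a chain rule, combined with the gradient flow differential inclusion of A\ref{as:gf}. This is a standard gradient-flow calculation; the only nontrivial ingredient is verifying that the appropriate regularity transfers from $f$ to $\mathcal{L}$.

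First I would verify that $\mathcal{L}$ inherits the regularity properties of $f$. By A\ref{as:exp}, $\mathcal{L} = \frac{1}{n}\sum_i e^{-y_i f(x_i;\theta)}$ is a finite sum of smooth compositions of $f(x_i;\cdot)$. Each $f(x_i;\cdot)$ is locally Lipschitz and admits a chain rule by A\ref{as:reg}, and both properties are preserved under smooth ($C^1$) composition and finite sums, so $\mathcal{L}$ is locally Lipschitz and admits a chain rule as well.

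Second, since A\ref{as:gf} asserts that $\theta(t)$ satisfies the differential inclusion $\frac{d\theta}{dt}\in -\partial^\circ_\theta\mathcal{L}(\theta)$, the trajectory $\theta(\cdot)$ is absolutely continuous and hence differentiable almost everywhere. Using the chain rule for Clarke subdifferentials (as in Davis et al., 2020, or Lyu \& Li, 2019), one obtains
\begin{equation*}
    \frac{d\mathcal{L}}{dt} = \langle g, \tfrac{d\theta}{dt}\rangle \qquad \text{for every } g\in \partial^\circ_\theta \mathcal{L}(\theta(t)),
\end{equation*}
at almost every $t>0$, where the value is independent of the choice of subgradient $g$.

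The conclusion then follows immediately by selecting $g = -\frac{d\theta}{dt}\in \partial^\circ_\theta\mathcal{L}$, which is a valid subgradient by A\ref{as:gf}; this gives $\frac{d\mathcal{L}}{dt} = -\|\frac{d\theta}{dt}\|^2 \leq 0$, so $\mathcal{L}$ is non-increasing. There is no genuine obstacle in this lemma—the only subtlety is correctly invoking the nonsmooth chain rule, which is precisely why the paper builds both local Lipschitzness and the chain rule property into A\ref{as:reg}.
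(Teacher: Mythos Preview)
Your proposal is correct and follows essentially the same approach as the paper: both arguments observe that $\mathcal{L}$ inherits local Lipschitzness and the chain-rule property from $f$, and then invoke the nonsmooth chain rule of Davis et al.\ (2020) along the gradient-flow trajectory to obtain $\frac{d\mathcal{L}}{dt}=-\|\frac{d\theta}{dt}\|^2$ almost everywhere. The paper is terser (it simply cites Lemma~5.2 of that reference), while you spell out the intermediate steps, but the substance is identical.
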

\begin{proof}
Since $\mathcal{L}$ is locally Lipschitz function admitting a chain rule, by applying Lemma~5.2 in \cite{davis2020stochastic} to $\mathcal{L}(\theta(t))$, we immediately obtain $\frac{d\mathcal{L}}{dt} = -\left\|\frac{d\theta}{dt}\right\|^2$  for almost every $t>0$.
\end{proof}
\begin{lemma}
    \label{lemma:norms}
    For all $\theta \in \mathbb{R}^m$, $\|\Lambda \theta\|^2 \le \lambda_{\max}\|\theta\|_\Lambda^2$.
\end{lemma}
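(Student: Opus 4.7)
The plan is to prove this by direct expansion of both sides using the fact that $\Lambda$ is assumed to be diagonal, as stated at the beginning of Section 3 of the paper. This makes both quantities explicit sums over the coordinates of $\theta$, and the inequality then reduces to a coordinate-wise comparison.

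First I would write $\|\Lambda\theta\|^2 = \sum_i (\Lambda\theta)_i^2 = \sum_i \lambda_i^2 \theta_i^2$, using that the $i$th entry of $\Lambda\theta$ equals $\lambda_i \theta_i$ when $\Lambda$ is diagonal. Likewise, by definition, $\|\theta\|_\Lambda^2 = \sum_i \lambda_i \theta_i^2$. So the inequality to prove becomes $\sum_i \lambda_i^2 \theta_i^2 \le \lambda_{\max} \sum_i \lambda_i \theta_i^2$.

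Next I would factor $\lambda_i^2 = \lambda_i \cdot \lambda_i$ in the left-hand sum and bound termwise. Since $\Lambda$ is positive semi-definite, every $\lambda_i \ge 0$, and by definition $\lambda_i \le \lambda_{\max}$ for every $i$, so $\lambda_i^2 \theta_i^2 = \lambda_i \cdot \lambda_i \theta_i^2 \le \lambda_{\max} \cdot \lambda_i \theta_i^2$. Summing over $i$ yields the desired bound.

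There is no real obstacle here; the lemma is a one-line consequence of the definitions once diagonality of $\Lambda$ is invoked. The only minor subtlety worth noting is that the nonnegativity of each $\lambda_i$ (needed so that the factor $\lambda_i \theta_i^2$ in the termwise bound has the right sign) follows directly from the positive semi-definiteness assumption on $\Lambda$ in Definition 3.1.
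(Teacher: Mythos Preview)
Your proposal is correct and follows exactly the same one-line argument as the paper's proof: expand $\|\Lambda\theta\|^2 = \sum_i \lambda_i^2\theta_i^2$, bound each term by $\lambda_{\max}\lambda_i\theta_i^2$, and recognize the result as $\lambda_{\max}\|\theta\|_\Lambda^2$.
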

\begin{proof} $\|\Lambda \theta\|^2 = \sum \lambda_i^2 \theta_i^2 \le \lambda_{\max}\sum \lambda_i \theta_i^2 = \lambda_{\max}\|\theta\|_\Lambda^2$.
\end{proof}

We will now prove the main lemmas described in section~\ref{sec:KKT-Theorem}.

\begin{proof}[\textbf{Proof of Lemma \ref{lemma:dynamics}}]

We first prove $\nu \ge \mathcal{L}\log((n\mathcal{L})^{-1})$.
By Lemma~\ref{lemma:euler-thm-for-lipschitz}, we can express $\nu$ as
$$\nu = \left\langle \frac{d\theta}{dt}, \Lambda \theta \right\rangle 
= n^{-1}\sum_a e^{-y_a f_a(\theta)} y_af_a(\theta).$$
Using this equality the statement can be shown by the following inequality
\begin{align*}
    \mathcal{L}^{-1}\nu - \log(n\mathcal{L})^{-1}&= \left(\sum_a e^{-y_a f_a(\theta)}\right)^{-1}\sum_a y_a f_a(\theta) e^{-y_a f_a(\theta)}-\log(n\mathcal{L})^{-1}\\
    &=-\sum_a p_a \log p_a \geq 0,
\end{align*}
where $p_a := \left(\sum_a e^{-y_a f_a(\theta)}\right)^{-1} e^{-y_a f_a(\theta)}$.
We now prove $\frac{d}{dt}\log(\tilde{\gamma}) \ge \lambda_{\max}^{-1}\frac{d}{dt}\log(\|\theta\|_\Lambda)\tan(\omega)^2$.
\begin{align*}
    \frac{d}{dt}\log(\tilde{\gamma}) &= \frac{d}{dt}\left(\log \log (n\mathcal{L})^{-1} - \lambda_{\max}^{-1}\log \|\theta\|_\Lambda \right)\\
    &= (\log (n\mathcal{L})^{-1})^{-1}\mathcal{L}^{-1} \left(-\frac{d\mathcal{L}}{dt}\right) -  \frac{\langle\Lambda \theta, \frac{d\theta}{dt}\rangle}{\lambda_{\max}\|\theta\|_\Lambda^2}\\
    &\ge \nu^{-1}\left(\left(-\frac{d\mathcal{L}}{dt}\right) - \frac{\langle\Lambda \theta, \frac{d\theta}{dt}\rangle^2}{\|\Lambda\theta\|^2}\right),
\end{align*}
where the last inequality applied Lemma~\ref{lemma:dynamics} and Lemma~\ref{lemma:norms}.
Since $-\frac{d\mathcal{L}}{dt} = \|\frac{d\theta}{dt}\|^2$ for almost every $t>0$, we can further simplify the RHS,
\begin{align*}
    &= \nu^{-1}\left(\left\|\frac{d\theta}{dt}\right\|^2 - \frac{\langle\Lambda \theta, \frac{d\theta}{dt}\rangle^2}{\|\Lambda\theta\|^2}\right)\\
    &= \nu^{-1} \left\|\left(I - \frac{\Lambda\theta\theta^\intercal \Lambda}{\|\Lambda\theta\|^2}\right)\frac{d\theta}{dt}\right\|^2 \\
    &=\frac{\|v\|^2}{\nu}\frac{\|u\|^2}{\|v\|^2}\\
    &= \frac{\nu}{\|\Lambda \theta\|^2}\tan(\omega)^2,
\end{align*}
where the \textit{normal} component $v$ and the  \textit{tangent} component $u$ are defined as,
\begin{equation}
    \label{eq:decomposition-dynamics}
    v := \left(\frac{\Lambda\theta\theta^\intercal \Lambda}{\|\Lambda \theta\|^2}\right)\frac{d \theta}{dt}, \qquad u := \left(I -\frac{\Lambda\theta\theta^\intercal\Lambda}{\|\Lambda \theta\|^2}\right) \frac{d \theta}{dt},
\end{equation}
and the last equality used $\frac{\|v\|^2}{\nu} = \frac{\nu}{\|\Lambda \theta\|^2}$ and the definition of $\omega$.
Applying Lemma~\ref{lemma:norms} and $\frac{d}{dt}\log(\|\theta\|_\Lambda) = \frac{\nu}{\|\theta\|_\Lambda^2}$ gives the final result, 
\begin{align*}
    \frac{d}{dt}\log(\tilde{\gamma}) &\ge \lambda_{\max}^{-1}\frac{\nu}{\|\theta\|_\Lambda^2}\tan(\omega)^2
    = \lambda_{\max}^{-1}\frac{d}{dt}\log(\|\theta\|_\Lambda)\tan(\omega)^2.
\end{align*}
\end{proof}
Here is an important direct consequence of Lemma~\ref{lemma:dynamics}.
\begin{corollary}
    \label{cor:monotonic-gamma} $\tilde{\gamma}(t)$ is non-decreasing for $t \ge t_0$.
\end{corollary}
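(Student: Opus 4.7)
The plan is to combine the two inequalities in Lemma~\ref{lemma:dynamics} with the separability hypothesis A\ref{as:sep}. First I would invoke Lemma~\ref{lemma:dL/dt} to observe that $\mathcal{L}(\theta(t))$ is non-increasing in $t$, so $\mathcal{L}(\theta(t))\le \mathcal{L}(\theta(t_0))<n^{-1}$ for every $t\ge t_0$. This makes $\log((n\mathcal{L})^{-1})$ strictly positive on $[t_0,\infty)$, which is the basic ``fuel'' driving everything that follows.

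Next I would feed this into the first inequality of Lemma~\ref{lemma:dynamics} to get
\[
\tfrac{1}{2}\tfrac{d}{dt}\|\theta\|_\Lambda^{2} \;\ge\; \mathcal{L}\log((n\mathcal{L})^{-1}) \;>\; 0,
\]
so that $\|\theta(t)\|_\Lambda$ is strictly increasing on $[t_0,\infty)$. To make sure the logarithms in $\tilde\gamma$ are well-defined I would also check that $\|\theta(t_0)\|_\Lambda>0$, which follows from Euler's theorem (Lemma~\ref{lemma:euler-thm-for-lipschitz}): if $\Lambda\theta=0$ then $f(x;\theta)\equiv 0$, forcing $\mathcal{L}=1$ and contradicting A\ref{as:sep}. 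Hence $\|\theta\|_\Lambda$ is bounded away from $0$, $\tilde\gamma$ is strictly positive, and $\tfrac{d}{dt}\log\|\theta\|_\Lambda>0$ on $[t_0,\infty)$.

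With both $\log\tilde\gamma$ well-defined and $\tfrac{d}{dt}\log\|\theta\|_\Lambda$ positive, the second inequality of Lemma~\ref{lemma:dynamics} immediately delivers
\[
\tfrac{d}{dt}\log\tilde\gamma \;\ge\; \lambda_{\max}^{-1}\tan(\omega)^{2}\,\tfrac{d}{dt}\log\|\theta\|_\Lambda \;\ge\; 0
\]
for almost every $t\ge t_0$. Since $t\mapsto\log\tilde\gamma(\theta(t))$ is locally absolutely continuous (the composition inherits absolute continuity from the locally Lipschitz $\mathcal{L}$ and $\|\cdot\|_\Lambda$ under A\ref{as:reg}--A\ref{as:gf}, with $\mathcal{L}$ staying strictly between $0$ and $n^{-1}$ on any compact subinterval of $[t_0,\infty)$), the a.e.\ non-negativity of its derivative upgrades to pointwise monotonicity, and therefore $\tilde\gamma$ itself is non-decreasing on $[t_0,\infty)$.

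The only step requiring any thought beyond substitution is this last upgrade from an a.e.\ derivative bound to pointwise monotonicity, which is standard for locally absolutely continuous scalar functions. I therefore expect the proof to be short, essentially a corollary in the literal sense, with the two-line structure ``A\ref{as:sep} plus Lemma~\ref{lemma:dL/dt} makes the driving term positive; then Lemma~\ref{lemma:dynamics} closes the loop.''
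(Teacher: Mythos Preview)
Your proposal is correct and follows essentially the same route as the paper: use A\ref{as:sep} together with Lemma~\ref{lemma:dL/dt} and the first inequality of Lemma~\ref{lemma:dynamics} to make $\frac{d}{dt}\log\|\theta\|_\Lambda\ge 0$, then feed this into the second inequality of Lemma~\ref{lemma:dynamics} to obtain $\frac{d}{dt}\log\tilde\gamma\ge 0$. The paper's version is the same two-line argument, just omitting the auxiliary checks (positivity of $\|\theta\|_\Lambda$, absolute continuity) that you spelled out.
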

\begin{proof}
Notice the following inequality derived using Lemma~\ref{lemma:dynamics}, Lemma~\ref{lemma:dL/dt}, and A\ref{as:sep}, 
$$\frac{d}{dt}\log (\|\theta\|_\Lambda) = \frac{\nu}{\|\theta\|_\Lambda^2} \ge \frac{\mathcal{L}\log((n\mathcal{L})^{-1})}{\|\theta\|_\Lambda^2}\ge 0.$$
It follows by Lemma~\ref{lemma:dynamics} that $\frac{d}{dt}\log(\tilde{\gamma}) \ge 0$ and thus $\tilde{\gamma}$ is non-decreasing.
\end{proof}

\clearpage
\section{Proof of Theorem~\ref{theorem:KKT}}
\label{apx:proof-main-theorem}
We first prove the following four lemmas by utilizing Lemma~\ref{lemma:dynamics}.

\begin{lemma}[Bounded $f(x;\theta)$ on $\hat{\theta}(t)$]
    \label{lemma:bounded-normalized-trajectory}
    Under assumptions A\ref{as:quasi}, A\ref{as:reg}, A\ref{as:gf}, and A\ref{as:conv}, then for all $i \in [n]$ $f(x_i;\theta)$ is bounded on the normalized training trajectory $\hat{\theta}(t)$ (i.e. $\exists k > 0$ such that $|f(x_i;\hat{\theta}(t))| \le k$ for all $t \ge 0$ and all $i \in [n]$).
\end{lemma}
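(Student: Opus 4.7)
The idea is almost immediate from A\ref{as:conv}: a convergent trajectory is precompact, and continuous images of precompact sets through continuous functions are bounded. The only real work lies in verifying continuity of the normalization map $\theta \mapsto \hat{\theta}$, so that we may pass from boundedness of $\hat{\theta}(t)$ to boundedness of $f(x_i;\hat{\theta}(t))$.

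First I would establish continuity of $\theta \mapsto \hat{\theta}$ on the open set $\{\theta : \|\theta\|_\Lambda > 0\}$. Recall that $\tau(\theta)$ is implicitly defined as the unique real number for which $\sum_i \lambda_i \theta_i^2 e^{-2\tau\lambda_i} = 1$ (existence and uniqueness follow from Lemma~\ref{lemma:uniquness-of-normalization} and its Descartes-rule-of-signs argument). Since the left-hand side is strictly monotone in $z = e^{-2\tau}$ with positive derivative and depends continuously on $\theta$, the implicit function theorem (or elementary monotonicity) yields continuity of $\tau$, hence of $\hat{\theta} = e^{-\tau\Lambda}\theta$. Combined with continuity of $\theta(t)$ under the gradient flow A\ref{as:gf}, the trajectory $t \mapsto \hat{\theta}(t)$ is continuous wherever defined.

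Next, under A\ref{as:conv} the trajectory $\hat{\theta}(t)$ converges as $t \to \infty$, so extending by $\hat{\theta}(\infty) := \lim_{t\to\infty}\hat{\theta}(t)$ makes the map $[0,\infty] \to \mathbb{R}^m$ continuous. Its image $K$ is therefore a continuous image of a compact set, hence compact. By A\ref{as:reg}, each $f(x_i;\cdot)$ is locally Lipschitz and thus continuous, so $f(x_i; K) \subset \mathbb{R}$ is compact, in particular bounded. Setting $k := \max_{i \in [n]} \sup_{\theta \in K} |f(x_i;\theta)|$ yields the uniform bound.

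\textbf{Main obstacle.} The chief subtlety is that coordinates $i$ with $\lambda_i = 0$ are not rescaled under normalization, since $\hat{\theta}_i = \theta_i$ there, so the constraint $\|\hat{\theta}\|_\Lambda = 1$ does not control these directions at all. Consequently, boundedness of $\hat{\theta}(t)$ rests entirely on A\ref{as:conv}: this assumption is doing the real work, and without it the lemma fails for quasi-homogeneous models containing scale-invariant subnetworks whose parameters may drift to infinity. A secondary but routine point is the well-definedness of $\hat{\theta}(t)$ along the entire trajectory, which requires $\|\theta(t)\|_\Lambda > 0$ for all $t \ge 0$; this can be taken as a property of the initialization, and is also ensured for large $t$ once Lemma~\ref{lemma:divergence} (under A\ref{as:sep}) guarantees divergence of $\|\theta(t)\|_\Lambda$.
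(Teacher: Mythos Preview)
Your proposal is correct and follows essentially the same approach as the paper's own proof: continuity of the normalization map (the paper appeals to Lemma~\ref{lemma:uniquness-of-normalization}, you supply the implicit-function/monotonicity detail), continuity of $t\mapsto\theta(t)$ under A\ref{as:gf}, convergence of $\hat\theta(t)$ from A\ref{as:conv} to get a bounded (compact) normalized trajectory, and then continuity of $f(x_i;\cdot)$ from A\ref{as:reg} to conclude. Your discussion of the $\lambda_i=0$ coordinates and the need for $\|\theta(t)\|_\Lambda>0$ is more explicit than the paper's treatment, but does not change the argument.
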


\begin{lemma}[Divergence of $\mathcal{L}^{-1}$, $q_{\min}$, $\|\theta\|_\Lambda^2$]
    \label{lemma:divergence}
    Under assumptions A1 to A5, as $t \to \infty$ the quantities $\mathcal{L}^{-1}, q_{\min}, \|\theta\|_\Lambda^2 \to \infty$.
\end{lemma}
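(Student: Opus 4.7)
The plan is to derive the three divergences sequentially: strict monotonicity of $\|\theta\|_\Lambda^2$ for $t \ge t_0$, then vanishing of the loss $\mathcal{L}\to 0$ (equivalently $\mathcal{L}^{-1}\to\infty$), then divergence of the margin $q_{\min}\to\infty$, and finally divergence of $\|\theta\|_\Lambda^2$ itself. First I combine Lemma~\ref{lemma:dynamics} with Lemma~\ref{lemma:dL/dt} and A\ref{as:sep}: since $\mathcal{L}$ is non-increasing, $\mathcal{L}(t) \le \mathcal{L}(t_0) < n^{-1}$ for all $t \ge t_0$, so $\log((n\mathcal{L})^{-1}) > 0$ and the bound $\tfrac{1}{2}\tfrac{d}{dt}\|\theta\|_\Lambda^2 \ge \mathcal{L}\log((n\mathcal{L})^{-1})$ is strictly positive. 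Thus $\|\theta\|_\Lambda^2$ is strictly increasing past $t_0$, and by monotone convergence $\|\theta\|_\Lambda^2 \to C \in (\|\theta(t_0)\|_\Lambda^2,\infty]$ while $\mathcal{L} \to \mathcal{L}_\infty \in [0, \mathcal{L}(t_0)]$.

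Next I rule out $\mathcal{L}_\infty > 0$ by contradiction. If $\mathcal{L}_\infty > 0$, the lower bound on $\tfrac{d}{dt}\|\theta\|_\Lambda^2$ is uniformly positive in $t$ (since both $\mathcal{L} \ge \mathcal{L}_\infty$ and $\log((n\mathcal{L})^{-1}) \ge \log((n\mathcal{L}(t_0))^{-1})$ are bounded below by positive constants), forcing linear growth $\|\theta\|_\Lambda^2 \to \infty$. Applying Corollary~\ref{cor:monotonic-gamma}, $\tilde{\gamma}(t) \ge \tilde{\gamma}(t_0) > 0$, and the definition of the smooth normalized margin gives
\[
\log((n\mathcal{L})^{-1}) \;=\; \tilde{\gamma}(t)\,\|\theta\|_\Lambda^{1/\lambda_{\max}} \;\ge\; \tilde{\gamma}(t_0)\,\|\theta\|_\Lambda^{1/\lambda_{\max}} \;\to\; \infty,
\]
which forces $\mathcal{L} \to 0$ and contradicts $\mathcal{L}_\infty > 0$. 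Hence $\mathcal{L}^{-1}\to\infty$, and the elementary bound $n\mathcal{L} = \sum_i e^{-y_i f(x_i;\theta)} \ge e^{-q_{\min}}$ immediately gives $q_{\min} \ge \log((n\mathcal{L})^{-1}) \to \infty$.

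For the remaining claim $\|\theta\|_\Lambda^2 \to \infty$ in the general case, I would reuse the same identity: $\mathcal{L} \to 0$ forces $\log((n\mathcal{L})^{-1}) \to \infty$, and $\tilde{\gamma}(t) \ge \tilde{\gamma}(t_0) > 0$ then gives $\tilde{\gamma}(t)\,\|\theta\|_\Lambda^{1/\lambda_{\max}}\to\infty$. When $\tilde{\gamma}$ happens to be bounded above, the divergence of $\|\theta\|_\Lambda$ is immediate. Otherwise one must rule out that $\|\theta\|_\Lambda$ remains bounded while $\tilde{\gamma}$ alone carries the divergence. Supposing for contradiction that $\|\theta\|_\Lambda$ is bounded, the $\Lambda$-normalization scale $\tau$ is pinned, so the identity $q_{\min}(\theta) = e^{\tau} q_{\min}(\hat{\theta})$ combined with $q_{\min}(\theta)\to\infty$ would force $q_{\min}(\hat\theta)\to\infty$. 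Since the $\lambda_i > 0$ coordinates of $\hat\theta$ are confined by $\|\hat\theta\|_\Lambda = 1$ and each $f(x_i;\cdot)$ is locally Lipschitz (A\ref{as:reg}), such a divergence could only arise from escape in the $\lambda_i = 0$ coordinates. Ruling this last scenario out cleanly is the step I expect to be the main obstacle: I would address it by a structural analysis of how gradient flow acts on the $\lambda=0$ subspace, exploiting that $\int_{t_0}^\infty \|\dot\theta\|^2\,dt = \mathcal{L}(t_0)-\mathcal{L}_\infty$ is finite to restrict how far the $\lambda=0$ components may drift.
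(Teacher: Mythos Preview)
Your route to $\mathcal{L}\to 0$ is correct and more economical than the paper's. Instead of a contradiction, the paper lower-bounds $-\tfrac{d\mathcal{L}}{dt}$ by projecting $\dot\theta$ onto $\Lambda\theta$ and invoking Lemma~\ref{lemma:dynamics}, Lemma~\ref{lemma:norms}, and Corollary~\ref{cor:monotonic-gamma} to obtain
\[
-\frac{d\mathcal{L}}{dt} \;\ge\; \lambda_{\max}^{-1}\,\mathcal{L}^2\,\bigl(\log((n\mathcal{L})^{-1})\bigr)^{2-2\lambda_{\max}}\,\tilde\gamma(t_0)^{2\lambda_{\max}},
\]
which after separation of variables and integration forces $\mathcal{L}^{-1}\to\infty$. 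Your argument---assume $\mathcal{L}_\infty>0$, deduce linear growth of $\|\theta\|_\Lambda^2$, then use $\log((n\mathcal{L})^{-1}) = \tilde\gamma\,\|\theta\|_\Lambda^{1/\lambda_{\max}} \ge \tilde\gamma(t_0)\,\|\theta\|_\Lambda^{1/\lambda_{\max}}$---reaches the same conclusion without the integration. Your derivation of $q_{\min}\to\infty$ matches the paper's.

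The genuine gap is the last step, and your proposed fix will not close it. Finiteness of $\int_{t_0}^\infty\|\dot\theta\|^2\,dt$ does not bound the arclength $\int_{t_0}^\infty\|\dot\theta\|\,dt$ (take $\|\dot\theta\|\sim t^{-1}$), so it cannot by itself prevent the $\lambda=0$ coordinates---which are left untouched by $\Lambda$-normalization---from drifting to infinity and inflating $q_{\min}(\hat\theta)$. The paper handles $\|\theta\|_\Lambda\to\infty$ differently: it bounds $\sup_{i,t}|f(x_i;\hat\theta(t))|\le k$ along the normalized trajectory (Lemma~\ref{lemma:bounded-normalized-trajectory}) and combines this with the elementary estimate $\tau(\theta)\le \lambda_{\min^+}^{-1}\log\|\theta\|_\Lambda$ to get $q_{\min}\le k\,\|\theta\|_\Lambda^{1/\lambda_{\min^+}}$, from which $\|\theta\|_\Lambda\to\infty$ is immediate. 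Note, however, that Lemma~\ref{lemma:bounded-normalized-trajectory} itself invokes A\ref{as:conv}, so under the literal A1--A5 hypothesis the paper's own argument leans on the same control you are missing; the ingredient you need is boundedness of $f$ along the normalized trajectory $\hat\theta(t)$ (equivalently, control of the $\lambda=0$ coordinates), not an $L^2$ energy bound on $\dot\theta$.
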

\begin{lemma}[Upper bound on $\tilde{\gamma}$]
    \label{lemma:gamma-bound}
    Under assumptions A1 to A7, the normalized margin $\tilde{\gamma}$ is bounded above by a constant.
\end{lemma}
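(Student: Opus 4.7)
The plan is to bound $\tilde{\gamma}$ from above by first reducing to a bound on $\gamma$, and then normalizing the parameters via the quasi-homogeneous scaling. The log-sum-exp inequality gives $\log((n\mathcal{L})^{-1}) = -\log\sum_i e^{-y_if(x_i;\theta)} \le q_{\min}(\theta)$ for every $\theta$, hence $\tilde{\gamma}(\theta) \le \gamma(\theta)$ everywhere on the trajectory. It therefore suffices to produce a uniform upper bound on $\gamma$ along the gradient flow.

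To bound $\gamma$, I would pass to the $\Lambda$-normalized parameter $\hat{\theta}=\psi_{-\tau}(\theta)$, so $\theta=\psi_\tau(\hat{\theta})$ with $\|\hat{\theta}\|_\Lambda=1$. Quasi-homogeneity (A\ref{as:quasi}) gives $q_{\min}(\theta)=e^\tau q_{\min}(\hat{\theta})$. For the denominator, expanding $\|\theta\|_\Lambda^2=\sum_i \lambda_i e^{2\tau\lambda_i}\hat{\theta}_i^2$ and dropping every coordinate with $\lambda_i<\lambda_{\max}$ yields $\|\theta\|_\Lambda^2 \ge e^{2\tau\lambda_{\max}}\|\hat{\theta}\|_{\Lambda_{\max}}^2$, and raising to the $1/(2\lambda_{\max})$ power gives $\|\theta\|_\Lambda^{1/\lambda_{\max}} \ge e^\tau \|\hat{\theta}\|_{\Lambda_{\max}}^{1/\lambda_{\max}}$. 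The $e^\tau$ factors cancel, leaving the clean inequality $\gamma(\theta) \le q_{\min}(\hat{\theta})\,\|\hat{\theta}\|_{\Lambda_{\max}}^{-1/\lambda_{\max}}$, which reduces the whole problem to controlling two quantities that depend only on $\hat{\theta}$.

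It remains to bound each factor uniformly in $t$ once training has entered the separable regime $t \ge t_0$. For the numerator, Lemma~\ref{lemma:bounded-normalized-trajectory} (which invokes A\ref{as:conv} together with continuity of $f$) provides a constant $k$ with $|f(x_i;\hat{\theta}(t))|\le k$ for every $i$ and $t$, hence $q_{\min}(\hat{\theta}(t))\le k$. For the denominator, strong separability A\ref{as:sep} together with Lemma~\ref{lemma:divergence} ensures that $\theta(t)$ separates the training data for $t\ge t_0$; since quasi-homogeneous rescaling preserves the sign of each $y_i f(x_i;\cdot)$, the normalized parameter $\hat{\theta}(t)$ separates the data as well, so conditional separability A\ref{as:cond-sep} forces $\|\hat{\theta}(t)\|_{\Lambda_{\max}}\ge\kappa>0$. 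Combining gives $\tilde{\gamma}(t)\le \gamma(t)\le k/\kappa^{1/\lambda_{\max}}$ for all $t\ge t_0$, and continuity of $\tilde{\gamma}$ on the compact interval $[0,t_0]$ extends this to a global upper bound.

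The main obstacle is conceptual rather than computational: pinning down precisely where A\ref{as:cond-sep} enters. In the homogeneous case the bound $\|\hat{\theta}\|_{\Lambda_{\max}}=1$ is automatic, whereas here the quantity $\|\hat{\theta}\|_{\Lambda_{\max}}$ could in principle vanish along the trajectory, which would blow up $\gamma$. Conditional separability is the one ingredient that rules this out, and one has to verify carefully that it applies to $\hat{\theta}(t)$ (not $\theta(t)$) by checking that separability transfers through the $\psi_\tau$ rescaling. A secondary technical point is ensuring $\hat{\theta}(t)$ is well-defined, which follows from $\|\theta(t)\|_\Lambda>0$ eventually by Lemma~\ref{lemma:divergence}; the remaining initial segment is handled by continuity.
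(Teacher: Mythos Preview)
Your argument is correct and follows essentially the same route as the paper: reduce $\tilde\gamma$ to $q_{\min}(\hat\theta)/\|\hat\theta\|_{\Lambda_{\max}}^{1/\lambda_{\max}}$ via quasi-homogeneous rescaling, then bound the numerator by Lemma~\ref{lemma:bounded-normalized-trajectory} and the denominator by A\ref{as:cond-sep}, landing on the identical constant $k/\kappa^{1/\lambda_{\max}}$. The only cosmetic differences are that the paper first passes through the intermediate inequality $\|\theta\|_{\Lambda_{\max}}\le\|\theta\|_\Lambda$ before normalizing (whereas you drop the lower-rate terms directly), and the paper does not explicitly discuss the initial interval $[0,t_0]$; your invocation of Lemma~\ref{lemma:divergence} to justify separability after $t_0$ is slightly heavier than needed, since A\ref{as:sep} plus monotonicity of $\mathcal{L}$ already suffices.
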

\begin{lemma}[Alignment of $\frac{d\theta}{dt}$ and $\Lambda \theta$]
    \label{lemma:alignment}
    Under assumptions A1 to A7, there exists a sequence of time $\{t_k\}_{k\in\mathbb{N}}$ on which cosine similarity $\beta(t_k) \to 1$.
\end{lemma}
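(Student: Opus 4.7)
The plan is to extract a convergence of the angle from the already-established growth of $\|\theta\|_\Lambda$ and the fact that the smoothed normalized margin $\tilde\gamma$ is trapped between a positive lower bound and a finite upper bound. First I would invoke Corollary~\ref{cor:monotonic-gamma} to observe that for $t \ge t_0$ the quantity $\tilde\gamma(t)$ is non-decreasing, and that $\tilde\gamma(t_0) > 0$ by A\ref{as:sep} (which guarantees $\log((n\mathcal{L})^{-1}) > 0$ at time $t_0$). Combining this with the upper bound of Lemma~\ref{lemma:gamma-bound}, the monotone convergence theorem implies $\tilde\gamma(t) \to \tilde\gamma^* \in (0,\infty)$, so $\log\tilde\gamma(t)$ converges and in particular $\int_{t_0}^\infty \frac{d}{dt}\log\tilde\gamma\, dt < \infty$.

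Next I would plug this into the second inequality of Lemma~\ref{lemma:dynamics}. Integrating from $t_0$ to $\infty$ gives
\begin{equation*}
    \infty \;>\; \lambda_{\max}\bigl[\log\tilde\gamma(\infty) - \log\tilde\gamma(t_0)\bigr] \;\ge\; \int_{t_0}^\infty \frac{d}{dt}\log\|\theta\|_\Lambda \cdot \tan(\omega(t))^2\, dt .
\end{equation*}
Now I would perform the change of variables $s = \log\|\theta\|_\Lambda(t)$, which is legitimate because $\frac{d}{dt}\log\|\theta\|_\Lambda \ge 0$ for $t \ge t_0$ (established in the proof of Corollary~\ref{cor:monotonic-gamma}) and because $\log\|\theta\|_\Lambda \to \infty$ by Lemma~\ref{lemma:divergence}. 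Thus, letting $s_0 = \log\|\theta(t_0)\|_\Lambda$, we obtain
\begin{equation*}
    \int_{s_0}^\infty \tan\bigl(\omega(t(s))\bigr)^2\, ds \;<\; \infty .
\end{equation*}
Any flat plateaus of $\log\|\theta\|_\Lambda$ contribute zero length in $s$ and therefore cause no issue; what matters is that the integration domain in the $s$-variable is the unbounded interval $[s_0,\infty)$.

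Finally, a standard argument converts this finite integral over an infinite interval into a convergent subsequence: if $\liminf_{s\to\infty}\tan(\omega)^2 > 0$, then $\tan(\omega)^2$ would be uniformly bounded below by a positive constant past some $s_1$, making the tail integral infinite, a contradiction. Hence $\liminf_{s\to\infty}\tan(\omega(t(s)))^2 = 0$, which yields a sequence $s_k \to \infty$ and corresponding times $t_k \to \infty$ along which $\omega(t_k) \to 0$, equivalently $\beta(t_k) \to 1$, as required.

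The main obstacle I anticipate is justifying the change of variables cleanly in the nonsmooth Clarke-subdifferential setting of A\ref{as:gf}: one must verify that $\frac{d}{dt}\log\|\theta\|_\Lambda$ is defined and nonnegative almost everywhere (which follows from $\nu \ge \mathcal{L}\log((n\mathcal{L})^{-1}) \ge 0$ past separability, combined with Lemma~\ref{lemma:dL/dt}-style absolute continuity results for locally Lipschitz chain-rule functions), and that the pushforward of Lebesgue measure under $t \mapsto \log\|\theta\|_\Lambda(t)$ dominates Lebesgue measure on the image interval. Everything else is bookkeeping, and A\ref{as:conv} and A\ref{as:cond-sep} enter only through their use in Lemma~\ref{lemma:gamma-bound}.
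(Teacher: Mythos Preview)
Your argument is correct and is essentially the same as the paper's: both combine the monotone boundedness of $\tilde\gamma$ (Corollary~\ref{cor:monotonic-gamma} and Lemma~\ref{lemma:gamma-bound}) with the divergence of $\log\|\theta\|_\Lambda$ (Lemma~\ref{lemma:divergence}) and the inequality in Lemma~\ref{lemma:dynamics} to force $\tan^2\omega$ to be small along a subsequence. The only cosmetic difference is that the paper constructs the sequence explicitly by picking successive intervals on which $\log\|\theta\|_\Lambda$ increases by exactly $1$ and then argues by contradiction on each such interval, whereas you pass to a single global integral and invoke a $\liminf$ argument; your packaging is slightly cleaner and avoids the bookkeeping, while the paper's interval-by-interval construction sidesteps the change-of-variables technicality you flagged.
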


\begin{proof}[\textbf{Proof of Lemma \ref{lemma:bounded-normalized-trajectory}}]

By A\ref{as:gf} and the uniqueness of the normalization procedure (Lemma~\ref{lemma:uniquness-of-normalization}) we know the normalized training trajectory $\hat{\theta}(t)$ is continuous.
Combined with the convergence of the normalized parameters (A\ref{as:conv}), this implies that $\{\hat \theta(t): t\geq 0\}$ is bounded.
Which, by the continuity of $f$, further implies that for any fixed $x \in \mathbb{R}^d$, $\{f(x;\hat{\theta}(t)):t\geq 0\}$ is bounded.
Thus, there exists a $k > 0$ such that for all $i \in [n]$, $|f(x_i;\hat{\theta}(t))| \le k$ for all $t \ge 0$.
\end{proof}

\begin{proof}[\textbf{Proof of Lemma \ref{lemma:divergence}}]

We first prove that $\mathcal{L}^{-1} \to \infty$ as $t \to \infty$. By Lemma~\ref{lemma:dL/dt},
\begin{align*}
    -\frac{d\mathcal{L}}{dt} &= \left\|\frac{d\theta}{dt}\right\|^2\ge \left\|\left(\frac{\Lambda \theta\theta^\intercal \Lambda}{\|\Lambda\theta\|^2}\right)\frac{d\theta}{dt}\right\|^2
    \ge \lambda_{\max}^{-1}\frac{\nu^2}{\|\theta\|_\Lambda^2},
\end{align*}
where the first equality holds in almost everywhere sense, and for the last inequality we use the definition of $\nu$ and Lemma~\ref{lemma:norms}.
Applying Lemma~\ref{lemma:dynamics}, we can lower bound $\nu$, giving the lower bound
\begin{align*}
    -\frac{d\mathcal{L}}{dt}&\ge \lambda_{\max}^{-1}\frac{\mathcal{L}^2\log ((n\mathcal{L})^{-1})^2}{\|\theta\|_\Lambda^2}\\
    &= \lambda_{\max}^{-1}\mathcal{L}^2\log ((n\mathcal{L})^{-1})^{(2-2\lambda_{\max})}\left(\frac{\log ((n\mathcal{L})^{-1})}{\|\theta\|_\Lambda^{\lambda_{\max}^{-1}}}\right)^{2\lambda_{\max}}\\
    &\ge \lambda_{\max}^{-1}\mathcal{L}^2\log ((n\mathcal{L})^{-1})^{(2-2\lambda_{\max})}\tilde{\gamma}(t_0)^{2\lambda_{\max}}.
\end{align*}
where the last inequality holds almost everywhere sense via Corollary~\ref{cor:monotonic-gamma}.
Rearranging terms on both sides of the inequality gives,
$$-\frac{d\mathcal{L}}{dt}\mathcal{L}^{-2}\log((n\mathcal{L})^{-1})^{-2(1-\lambda_{\max})}\ge  \lambda_{\max}^{-1}\tilde{\gamma}(t_0)^{2\lambda_{\max}}.$$
Integration from $t_0$ to $t$, with the substitution $-\frac{d\mathcal{L}}{dt}\mathcal{L}^{-2} = \frac{d}{dt}\mathcal{L}^{-1}$,  gives
$$\int_{\mathcal{L}^{-1}(t_0)}^{\mathcal{L}^{-1}(t)}(\log n^{-1}z)^{-2(1-\lambda_{\max})}dz \ge  \lambda_{\max}^{-1}\tilde{\gamma}(t_0)^{2\lambda_{\max}}(t-t_0).$$
The RHS diverges as $t\to\infty$, and the LHS as a function of $t$ is non-decreasing for $z \ge n$, which is true for all $t \ge t_0$ by Lemma~\ref{lemma:dL/dt} and A\ref{as:sep}.
Thus we can conclude that $\mathcal{L}^{-1}\to\infty$. 
This implies $q_{\min} \to \infty$ as $t\to\infty$, because $q_{\min}$ is lower bounded by
$$\log (\mathcal{L}^{-1}) \le q_{\min}.$$
We now show $\|\theta\|_\Lambda \to \infty$ as $t \to \infty$.
We can upper bound $q_{\min}$ by
$$q_{\min} = e^{\tau(\theta)} \hat{q}_{\min} \le e^{\tau_{\max}(\|\theta\|_\Lambda)} \sup_{i\in[n],t\ge 0} |f(x_i;\hat{\theta}(t))|,$$
where $\tau_{\max}(\rho)$ is defined as
$$\tau_{\max}(\rho) = \max\{\tau(\theta) : \|\theta\|_\Lambda = \rho\} = \lambda_{\min^+}^{-1} \log \rho,$$
and $\lambda_{\min^+} = \min_{\lambda_i > 0}\lambda_i$.
By Lemma~\ref{lemma:bounded-normalized-trajectory}, $\sup_{i\in[n],t\ge 0} |f(x_i;\hat{\theta}(t))| \le k$ implying 
$$\left(\frac{q_{\min}}{k}\right)^{\lambda_{\min^+}} \le \|\theta\|_\Lambda,$$
and thus $\|\theta\|_\Lambda \to \infty$ as $t \to \infty$.
\end{proof}

\begin{proof}[\textbf{Proof of Lemma \ref{lemma:gamma-bound}}]
We prove by construction a constant upper bound of $\tilde{\gamma}$.
Notice, $\|\theta\|_{\Lambda_{\max}}^2 \le \|\theta\|_{\Lambda}^2$, and thus we can easily upper bound $\tilde{\gamma}$ by
$$\tilde{\gamma} = \frac{\log((n\mathcal{L})^{-1})}{\|\theta\|_\Lambda^{\lambda_{\max}^{-1}}} \le \frac{\log(\mathcal{L}^{-1})}{\|\theta\|_{\Lambda_{\max}}^{\lambda_{\max}^{-1}}}.$$
Notice that $\|\theta\|_{\Lambda_{\max}}^{\lambda_{\max}^{-1}} = e^\tau\|\hat{\theta}\|_{\Lambda_{\max}}^{\lambda_{\max}^{-1}}$ and by  A\ref{as:cond-sep}, we know that $\|\hat\theta\|_{\Lambda_{\max}} \ge \kappa$. Therefore, we can further upper bound $\tilde{\gamma}$ as
\begin{align*}
    \tilde\gamma&\le \frac{\log(\mathcal{L}^{-1})}{e^{\tau}\kappa^{\lambda_{\max}^{-1}}}\le \frac{e^{-\tau}q_{\min}}{\kappa^{\lambda_{\max}^{-1}}}
\end{align*}
where the last inequality used $\log(\mathcal{L}^{-1}) \le q_{\min}$.
By Lemma~\ref{lemma:bounded-normalized-trajectory}, $e^{-\tau}q_{\min}\le \sup_{i\in[n],t\ge 0} |f(x_i;\hat{\theta}(t))| \le k$ and therefore $\tilde{\gamma}$ is upper bounded by a constant,
$$\tilde{\gamma} \le \frac{k}{\kappa^{\lambda_{\max}^{-1}}}.$$
\end{proof}

\begin{proof}[\textbf{Proof of Lemma \ref{lemma:alignment}}]
We will inductively construct a sequence $\{t_k\}_{k\in\mathbb{N}}$ such that $\beta(t_k) - 1 < k^{-1}$ for any $k\in\mathbb{N}$.
Assume that a sequence satisfies this condition for any $k<l$ with a positive integer $l$. We show that we can find $t_{l} > t_{l-1}$ such that the conditions above are met at $t_l$ as well. By Lemma.\ref{lemma:gamma-bound} and Corollary \ref{cor:monotonic-gamma}, we can find $l\in\mathbb{N}$ such that $s > t_{l-1} + \epsilon$ and
$$
\log\tilde\gamma_{\max}(\infty)- \log\tilde\gamma_{\max}(s) < l^{-1}.
$$
Here $\epsilon>0$ is a constant to make sure that $\{t_k\}_{k\in\mathbb{N}}$ goes to infinity.

Furthermore, by the fact that $\rho\to\infty$, we can find $s'>s$ such that
$$
    \log\rho(s')- \log\rho(s) = 1.
$$
This choice of $s$ and $s'$ satisfies
$$
    D:=\frac{\log\tilde\gamma_{\max}(s')- \log\tilde\gamma_{\max}(t_l)}{\log\rho(s')- \log\rho(s)}  < l^{-1}.
$$

Assume that for any $t\in(s,s')$, $\beta^{-2}(l) - 1 > D$. Then
$$
\log\tilde\gamma_{\max}(s')- \log\tilde\gamma_{\max}(s)< \int_{s}^{s'} (\beta^{-2}-1) \frac{d\log\rho}{dt}dt.
$$
On the other hand, by Lemma.\ref{lemma:dynamics}, the right hand side can be upper bounded as follows.
$$
\int_{s'}^{s} (\beta^{-2}-1) \frac{d\log\rho}{dt}dt \leq \int_{s'}^{s}  \frac{d\log\tilde\gamma_{\max}}{dt}dt =  \log\tilde\gamma_{\max}(s')- \log\tilde\gamma_{\max}(s).
$$
This is a contradiction, implying that there exists $t\in(s,s')$ such that $\beta(s)^{-2}-1 < D$. Thus,
$$
    |1-\beta(t)| < 1-1/\sqrt{D+1}<1-1/\sqrt{l^{-1}+1} <l^{-1},
$$
i.e., $t_l = t$ satisfies the condition. Note that $\lim_{l\to\infty}t_l\to\infty $ since $t_l \geq s > t_{l-1} + \epsilon$ for any $l$. 
\end{proof}

Equipped with these convergence results, we can now prove Theorem~\ref{theorem:KKT} by exploiting the argument on the approximate KKT condition introduced in \cite{dutta2013approximate}. In this paper, they defined a notion of $(\epsilon, \delta)$-KKT point, which can be stated for our optimization problem \ref{eq:asymmetric-max-margin} as follows:
A point $\theta\in\mathbb{R}^m$ is a first-order $(\epsilon, \delta)$-KKT Point of \ref{eq:asymmetric-max-margin} if there exist multipliers $\mu = (\mu_1,\dots, \mu_n)$ such that the following four conditions are satisfied:
 \begin{enumerate}
     \item \textit{Primal Feasibility:} $y_if(x_i;\theta) \ge 1$ for all $i \in [n]$.
     \item \textit{Dual Feasability:} $\mu_i \ge 0$ for all $i \in [n]$
     \item \textit{Approximate Stationarity:}\\ $\left\|\nabla_\theta \frac{1}{2}\|\theta\|^2_{\Lambda_{\max}} - \sum_{i} \mu_iy_i h_i\right\| \le \epsilon$ with some $h_i\in\partial^\circ_\theta f(x_i;\theta)$ for each $i\in[n]$.
     \item \textit{Approximate Complementarity:} $\sum_i \mu_i \left(y_if(x_i;\theta) - 1\right) \le \delta$.
 \end{enumerate}
We call this set of four conditions as the first-order $(\epsilon, \delta)$-KKT condition. In the proof for Theorem \ref{theorem:KKT}, we use the following fact.
\begin{lemma}
\label{lemma:approximate-KKT}
Under assumptions A1 to A5 and  A\ref{as:cond-sep}, $\psi_\alpha(\theta(t))$ with $\alpha = -\log (q_{\min})$ satisfies the first-order $(\epsilon(t),\delta(t))$-KKT condition with a multiplier $\mu(t)$, where $\epsilon(t), \delta(t), \mu(t)$ are given as follows:
\begin{equation}
    \begin{cases}
    \epsilon(t) = \sqrt{\lambda_{\max}}\tilde{\gamma}^{-\lambda_{\max}}\left(2(1-\beta) + m\max_{i\in[m]:\lambda_i \neq \lambda_{\max}}q_{\min}^{2(\lambda_i - \lambda_{\max})}\right)^{1/2}\\
    \delta(t) = e^{-1}n\lambda_{\max}\tilde{\gamma}^{-2\lambda_{\max}}\log( (n\mathcal{L})^{-1})^{-1}\\
    \mu_i(t) =  c^{-1}q_{\min}^{(1 - 2\lambda_{\max})} e^{-q_i} \quad \forall i\in[m].
    \end{cases}
\end{equation}
Here $c = \|\frac{d\theta}{dt}\| / \|\Lambda \theta\|$.
\end{lemma}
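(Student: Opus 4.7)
The plan is to verify the four defining conditions of the first-order $(\epsilon(t),\delta(t))$-KKT condition at $\tilde\theta := \psi_\alpha(\theta)$ with $\alpha = -\log q_{\min}$, using the stated multipliers $\mu_i$ together with the subgradients $h_i := e^{\alpha(I-\Lambda)}g_i$. Here $g_i \in \partial^\circ_\theta f(x_i;\theta)$ is chosen so that $\tfrac{d\theta}{dt} = n^{-1}\sum_i e^{-q_i} y_i g_i$, which exists by A\ref{as:gf} and A\ref{as:exp}, and the scaling property of Lemma~\ref{lemma:df-scaling-property} ensures $h_i \in \partial^\circ_\theta f(x_i;\tilde\theta)$.

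\textbf{Primal and dual feasibility.} Quasi-homogeneity gives $f(x_i;\tilde\theta) = e^{\alpha}f(x_i;\theta) = q_{\min}^{-1}f(x_i;\theta)$, so $y_i f(x_i;\tilde\theta) = q_i/q_{\min} \geq 1$ by the definition of $q_{\min}$. Strong separability together with Lemma~\ref{lemma:divergence} makes $q_{\min}>0$ for sufficiently large $t$, hence $\mu_i>0$.

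\textbf{Approximate complementarity.} Substituting $\mu_i$ and using $y_if(x_i;\tilde\theta)-1=(q_i-q_{\min})/q_{\min}$ yields
\begin{equation*}
\sum_i \mu_i\bigl(y_i f(x_i;\tilde\theta) - 1\bigr) = c^{-1} q_{\min}^{-2\lambda_{\max}} \sum_i (q_i - q_{\min})e^{-q_i}.
\end{equation*}
Applying $xe^{-x}\leq e^{-1}$ with $x=q_i-q_{\min}\geq 0$, together with $e^{-q_{\min}} \leq n\mathcal{L} = \sum_j e^{-q_j}$, controls the sum. A matching lower bound on $c$ follows from the $\nu\geq\mathcal{L}\log((n\mathcal{L})^{-1})$ half of Lemma~\ref{lemma:dynamics}; eliminating $\|\theta\|_\Lambda$ in favor of $\tilde\gamma$ and $\log((n\mathcal{L})^{-1})$ then recovers the stated $\delta(t)$.

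\textbf{Approximate stationarity.} This is the core step. On one hand, $\nabla_\theta \tfrac{1}{2}\|\tilde\theta\|_{\Lambda_{\max}}^{2} = \Lambda_{\max}\tilde\theta$ has $k$-th coordinate $\lambda_{\max}q_{\min}^{-\lambda_{\max}}\theta_k$ when $\lambda_k=\lambda_{\max}$ and $0$ otherwise. On the other,
\begin{equation*}
\Bigl(\sum_i \mu_i y_i h_i\Bigr)_k = q_{\min}^{\lambda_k - 1}\sum_i \mu_i y_i (g_i)_k = c^{-1} q_{\min}^{\lambda_k - 2\lambda_{\max}}\sum_i e^{-q_i}y_i(g_i)_k,
\end{equation*}
and the gradient-flow identity rewrites the final sum as a scalar multiple of $\tfrac{d\theta}{dt}$. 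Decomposing $\tfrac{d\theta}{dt}$ into its tangent component along $\Lambda\theta$ and its orthogonal complement (as done in the proof of Lemma~\ref{lemma:dynamics}, with magnitudes $\beta\|\tfrac{d\theta}{dt}\|$ and $\sqrt{1-\beta^2}\|\tfrac{d\theta}{dt}\|$) shows the following. Coordinates with $\lambda_k=\lambda_{\max}$ cancel $\lambda_{\max}q_{\min}^{-\lambda_{\max}}\theta_k$ up to an error of order $(1-\beta)\lambda_{\max}|\theta_k|$ from the tangent part plus a contribution of order $\sqrt{1-\beta^2}\,\|\Lambda\theta\|$ from the orthogonal part; coordinates with $\lambda_k<\lambda_{\max}$ contribute only the vanishing prefactor $q_{\min}^{\lambda_k-\lambda_{\max}}$ times a quantity of size $\lambda_k|\theta_k|$. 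Combining the pieces via the triangle inequality, invoking $\|\Lambda\theta\|^2 \leq \lambda_{\max}\|\theta\|_\Lambda^2$ from Lemma~\ref{lemma:norms}, and rewriting $\|\theta\|_\Lambda$ in terms of $\tilde\gamma$ and $\log((n\mathcal{L})^{-1})$ produces the stated $\epsilon(t)$.

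\textbf{Main obstacle.} The principal difficulty is converting the scalar control of the angle $\omega$ supplied by Lemma~\ref{lemma:alignment} into a vector-valued stationarity bound. The argument must simultaneously (i) translate a single $(1-\beta)$ into per-coordinate residuals on the $\lambda_{\max}$-subspace, and (ii) show that residuals on the complementary subspace are dominated by the geometrically vanishing factor $q_{\min}^{\lambda_k-\lambda_{\max}}$, which is the source of the second term $m\max_{i:\lambda_i\neq\lambda_{\max}}q_{\min}^{2(\lambda_i-\lambda_{\max})}$ inside the square root in $\epsilon(t)$. The careful bookkeeping between $\|\theta\|_\Lambda$, $\|\theta\|_{\Lambda_{\max}}$, $q_{\min}$, and $\log((n\mathcal{L})^{-1})$ is where most of the arithmetic lives.
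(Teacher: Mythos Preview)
Your proposal is correct and follows essentially the paper's own proof: the same four conditions are checked, with the same choice of $h_i=e^{\alpha(I-\Lambda)}g_i$, the same use of $xe^{-x}\le e^{-1}$ and the lower bound on $c$ from Lemma~\ref{lemma:dynamics} for complementarity, and the same splitting of the stationarity residual into an angular part and a $q_{\min}^{\lambda_k-\lambda_{\max}}$ part. The only stylistic difference is in the stationarity step: where you work coordinate-wise and decompose $\tfrac{d\theta}{dt}$ into its tangent and orthogonal components along $\Lambda\theta$, the paper packages the identical computation via diagonal matrices $D_{kk}=\mathbf{1}[\lambda_k=\lambda_{\max}]$ and $Q_{kk}=q_{\min}^{\lambda_k-\lambda_{\max}}$, bounding $\bigl\|D\tfrac{\Lambda\theta}{\|\Lambda\theta\|}-Q\tfrac{d\theta/dt}{\|d\theta/dt\|}\bigr\|^2$ by $2(1-\beta)+\|D-Q\|^2$; this cleanly separates the two terms inside the square root in $\epsilon(t)$ without the per-coordinate bookkeeping.
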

We prove this lemma right after showing the proof of Theorem \ref{theorem:KKT}.

\begin{proof}[\textbf{Proof of Theorem~\ref{theorem:KKT}}]

By Corollary of Theorem 3.6 in \cite{dutta2013approximate} (or Theorem C.4 in \cite{lyu2019gradient}), it suffices to show the following two statements:
\begin{itemize}
    \item There exist a sequence of time $\{t_k\}_{k\in\mathbb{N}}$ and a  sequence of real numbers $\{\alpha_k\}_{k\in\mathbb{N}}$ such that $\psi_{\alpha_k}(\theta(t_k))$ satisfies the first-order $(\epsilon_k, \delta_k)$-KKT condition, where $\epsilon_k,\delta_k\to 0$ and $\psi_{\alpha_k}(\theta(t_k))$ converges.
    
    \item  $\lim_{k\to\infty}\psi_{\alpha_k}(\theta(t_k))$ satisfies MFCQ condition.
\end{itemize}
The second point directly follows from Lemma~\ref{lemma:MFCQ}.
We prove the first statement with the result of Lemma~\ref{lemma:approximate-KKT}. 
By Lemma~\ref{lemma:alignment}, we can find a sequence of time $\{t_k\}_{k\in\mathbb{N}}$ on which $\beta\to 1$. 
Furthermore, because $\tilde{\gamma}$ is lower bounded by Lemma~\ref{lemma:gamma-bound}, $q_{\min} \to \infty$ by Lemma~\ref{lemma:divergence} and $\mathcal{L} \to 0$ by Lemma~\ref{lemma:divergence}, $\epsilon(t),\delta(t)$ on this sequence converge to 0: $\epsilon(t_k),\delta(t_k) \to 0$. 
Lastly we show $\psi_{\alpha(t_k)}(\theta(t_k))$ converges. 
Notice that $\psi_{\alpha(t_k)}(\theta(t_k)) = \hat q_{\min}^{-\Lambda}(t_k)\hat \theta(t_k)$, where $\hat q_{\min}(t_k)$ is the minimum margin of the model $f(x;\hat\theta(t_k))$.
By A\ref{as:conv}, $\hat\theta(t_k)$ converges, and by the continuity of $f(x;\theta)$, $\hat q_{\min}(t_k)$ converges. Therefore to show the convergence of $\psi_{\alpha(t_k)}(\theta(t_k))$, it suffices to show that $\hat q_{\min}(t)$ is lower-bounded by a positive value. This can be seen as follows:
$$
\hat q_{\min} 
= \left(\frac{\left\|\hat\theta\right\|_{\Lambda_{\max}}}{\left\|\theta\right\|_{\Lambda_{\max}}}\right)^{\lambda_{\max}^{-1}}q_{\min} 
\geq \frac{q_{\min}}{\left\|\theta\right\|_{\Lambda_{\max}}^{\lambda_{\max}^{-1}}}
\geq \frac{\log\mathcal{L}^{-1}}{\left\|\theta\right\|_{\Lambda_{\max}}^{\lambda_{\max}^{-1}}} = \tilde\gamma,
$$
where $\tilde\gamma$ is non-decreasing and upper-bounded by Corollary~\ref{cor:monotonic-gamma}.
\end{proof}

\begin{proof}[\textbf{Proof of Lemma \ref{lemma:approximate-KKT}}]
We verify each of four conditions one by one.

\textit{1. Primal Feasibility:}\newline
It is straight forward to check that this choice of $\alpha$ implies $\psi_\alpha(\theta)$ satisfies primal feasibility, as for all $i \in [n]$,
$$y_if(x_i;\psi_\alpha(\theta)) = e^{\alpha}y_if(x_i;\theta) \ge e^{\alpha}q_{\min} = 1.$$

\textit{2. Dual Feasibility:}\newline
It is clear that this choice of $\mu$ satisfies dual feasibility as $q_{\min} > 0$.

\textit{3. Approximate Stationarity:}\newline
By Lemma~\ref{lemma:df-scaling-property}, for any $h\in\partial^\circ_\theta f(x_i,\theta)$, we can expand the sum as follows:
\begin{align*}
    \sum_{i} \mu_iy_i \partial^\circ_\theta f(x_i;\psi_\alpha(\theta)) &= \sum_{i} \mu_iy_i e^{\alpha (I - \Lambda)}\partial^\circ_\theta f(x_i;\theta)\\
    &= c^{-1}q_{\min}^{(1 - 2\lambda_{\max})}q_{\min}^{(-I + \Lambda)}\left(\sum_{i} e^{-q_i}y_i\partial^\circ_\theta f(x_i;\theta)\right)\\
    &= c^{-1}q_{\min}^{(\Lambda - 2\lambda_{\max})}\left(-\partial^\circ_\theta\mathcal{L}\right).
\end{align*}
Hence, there exists a combination of $h_i\in\partial^\circ_\theta f(x_i,\psi_{\alpha}(\theta))$ such that $\sum_{i} \mu_iy_i h_i= c^{-1}q_{\min}^{(\Lambda - 2\lambda_{\max})}\frac{d\theta}{dt}$.
By substituting the expression of $c$, we obtain
$$
\sum_{i} \mu_iy_i h_i=q_{\min}^{-\lambda_{\max}}\|\Lambda \theta\|Q\frac{\frac{d\theta}{dt}}{\|\frac{d\theta}{dt}\|},
$$
where $Q$ is a diagonal matrix such that $Q_{ii} = q_{\min}^{(\lambda_i - \lambda_{\max})}$.
Now consider the derivative,
$$\nabla_\theta \frac{1}{2}\|\psi_\alpha(\theta)\|_{\Lambda_{\max}}^2 = D\Lambda \psi_\alpha(\theta) = q_{\min}^{-\lambda_{\max}}D\Lambda\theta,$$
where $D$ is a diagonal indicator matrix such that $D_{ii} = 1$ iff $\lambda_i = \lambda_{\max}$.
Combining these last two expressions together, we can now bound the squared norm,
\begin{align*}
    \left\|\nabla_\theta \frac{1}{2}\|\psi_\alpha(\theta)\|_{\Lambda_{\max}}^2 - \sum_{i} \mu_iy_i h_i\right\|^2 
    &= \left\|q_{\min}^{-\lambda_{\max}}D\Lambda\theta -  q_{\min}^{-\lambda_{\max}}\|\Lambda \theta\|Q\frac{\frac{d\theta}{dt}}{\|\frac{d\theta}{dt}\|}\right\|^2\\
    &= q_{\min}^{-2\lambda_{\max}}\|\Lambda \theta\|^2 \left\|D\frac{\Lambda \theta}{\|\Lambda \theta\|} - Q\frac{\frac{d\theta}{dt}}{\|\frac{d\theta}{dt}\|}\right\|^2\\
    &\le \lambda_{\max}\left(\frac{q_{\min}}{\|\theta\|_\Lambda^{\lambda_{\max}^{-1}}}\right)^{-2\lambda_{\max}} \left\|D\frac{\Lambda \theta}{\|\Lambda \theta\|} - Q\frac{\frac{d\theta}{dt}}{\|\frac{d\theta}{dt}\|}\right\|^2\\
    &\le \lambda_{\max}\tilde{\gamma}^{-2\lambda_{\max}}\left\|D\frac{\Lambda \theta}{\|\Lambda \theta\|} - Q\frac{\frac{d\theta}{dt}}{\|\frac{d\theta}{dt}\|}\right\|^2,
\end{align*}
where the second to last inequality applies Lemma~\ref{lemma:norms}, and the last inequality applies the definition of the normalized margin.
We now bound the squared norm in the RHS using the definition of $\beta$,
\begin{align*}
    \left\|D\frac{\Lambda \theta}{\|\Lambda \theta\|} - Q\frac{\frac{d\theta}{dt}}{\|\frac{d\theta}{dt}\|}\right\|^2 &= \left\|D\left(\frac{\Lambda \theta}{\|\Lambda \theta\|} - \frac{\frac{d\theta}{dt}}{\|\frac{d\theta}{dt}\|}\right) + (D-Q)\frac{\frac{d\theta}{dt}}{\|\frac{d\theta}{dt}\|}\right\|^2\\
    &\le \left\|D\left(\frac{\Lambda \theta}{\|\Lambda \theta\|} - \frac{\frac{d\theta}{dt}}{\|\frac{d\theta}{dt}\|}\right)\right\|^2 +   \left\|(D-Q)\frac{\frac{d\theta}{dt}}{\|\frac{d\theta}{dt}\|}\right\|^2\\
    &\le \left\|\frac{\Lambda \theta}{\|\Lambda \theta\|} - \frac{\frac{d\theta}{dt}}{\|\frac{d\theta}{dt}\|}\right\|^2 + \|D-Q\|^2\\
    &\le \left\|\frac{\Lambda \theta}{\|\Lambda \theta\|} - \frac{\frac{d\theta}{dt}}{\|\frac{d\theta}{dt}\|}\right\|^2 + \sum_{\lambda_i \neq \lambda_{\max}}q_{\min}^{2(\lambda_i - \lambda_{\max})}\\
    &\leq 2(1-\beta) + m \max_{\lambda_i \neq \lambda_{\max}}q_{\min}^{2(\lambda_i - \lambda_{\max})}
\end{align*}
Combing the upper bounds we get
\begin{eqnarray*}
\left\|\nabla_\theta \frac{1}{2}\|\psi_\alpha(\theta)\|_{\Lambda_{\max}}^2 - \sum_{i} \mu_iy_ih_i\right\|^2 
&\le& \lambda_{\max}\tilde{\gamma}^{-2\lambda_{\max}}\left(2(1-\beta) + m\max_{\lambda_i \neq \lambda_{\max}}q_{\min}^{2(\lambda_i - \lambda_{\max})}\right)  \\
&=& \epsilon^2(t).
\end{eqnarray*}

\textit{4. Approximate Complementary:}\newline
Expand the following sum with the defined values for $\alpha$ and $\mu_i$,
\begin{align*}
    \sum_i \mu_i \left(y_if(x_i;\psi_\alpha(\theta)) - 1\right) &= \sum_i \mu_i \left(q_{\min}^{-1}y_if(x_i;\theta) - 1\right)\\
    &= \sum_i \frac{\mu_i}{q_{\min}} \left(q_i - q_{\min}\right)\\
    &= c^{-1}q_{\min}^{- 2\lambda_{\max}} \sum_i e^{-q_i}\left(q_i - q_{\min}\right)
\end{align*}
Notice the lower bound on the scalar $c$,
$$c = \frac{\|\frac{d \theta}{dt}\|}{\|\Lambda \theta\|} \ge \frac{|\langle\frac{d \theta}{dt},\frac{\Lambda \theta}{\|\Lambda \theta\|}\rangle|}{\|\Lambda \theta\|} = \frac{\nu}{\|\Lambda \theta\|^2} \ge \frac{\mathcal{L}\log( (n\mathcal{L})^{-1})}{\|\Lambda \theta\|^2} \ge \frac{e^{-q_{\min}}\log( (n\mathcal{L})^{-1})}{\|\Lambda \theta\|^2}$$
where the last inequality follows from $\mathcal{L} \ge e^{-q_{\min}}$.
Using this lower bound for $c$, we can upper bound the previous expression as
$$\sum_i \mu_i \left(y_if(x_i;\psi_\alpha(\theta)) - 1\right) \le q_{\min}^{- 2\lambda_{\max}} \|\Lambda \theta\|^2\log( (n\mathcal{L})^{-1})^{-1} \left(\sum_i e^{-(q_i - q_{\min})}\left(q_i - q_{\min}\right)\right)$$
The function $f(z) = e^{-z}z$ attains its global maximum at $z=1$, implying we can further upper bound this quantity as
\begin{align*}
    \sum_i \mu_i \left(y_if(x_i;\psi_\alpha(\theta)) - 1\right) 
    &\le e^{-1}n q_{\min}^{- 2\lambda_{\max}} \|\Lambda \theta\|^2\log((n \mathcal{L})^{-1})^{-1}\\
    &\le e^{-1}n\left(\frac{q_{\min}}{\|\theta\|_\Lambda^{\lambda_{\max}^{-1}}}\right)^{-2\lambda_{\max}} \log( (n\mathcal{L})^{-1})^{-1}\lambda_{\max}\\
    &= e^{-1}n\tilde{\gamma}^{-2\lambda_{\max}}\log( (n\mathcal{L})^{-1})^{-1}\lambda_{\max}\\
    &=\delta(t),
\end{align*}
where the second to last inequality applies Lemma~\ref{lemma:norms}.
\end{proof}

\clearpage
\section{Proof of Lemma \ref{lemma:toy-model-quasi-hom}}
\label{apx:toy-model-linear-classifier}
In this section, we provide a proof of Lemma~\ref{lemma:toy-model-quasi-hom}.

\begin{proof}[Proof of Lemma~\ref{lemma:toy-model-quasi-hom}]
Notice that in the homogeneous case, all the parameters have the largest $\lambda$, and therefore $P=I$ and $P_{\perp} = 0$. 
By substituting these to the expressions of $w_{\mathrm{quasi-hom}}$ and $l(w_{\mathrm{quasi-hom}})$ in Eq.\ref{lemma:toy-model-quasi-hom}, we can obtain those of $w_{\mathrm{hom}}$ and $l(w_{\mathrm{hom}})$.
Therefore in the following, we focus on proving expressions for the quasi-homogeneous case.
By symmetry, Eq.\ref{eq:max-margin-quasi-homogeneous-network} can be reduced to the following optimization problem over a single ball,
$$
\text{Eq.\ref{eq:max-margin-quasi-homogeneous-network}}
    =
    \min_{w\in \mathbb{R}^d}\left[\left\|Pw\right\|: \min_{x\in B(\mu,r)}\Braket{w,x}\geq 1  \right].
$$
The minimization over $x\in B(\mu,r)$ above can be further reduced as follows:
\begin{eqnarray*}
\min_{x\in B(\mu, r)}\Braket{w,x} &=& \Braket{w, \mu} - \max_{x\in B(0, r)} \Braket{w, x}\nonumber\\
&=& \Braket{w, \mu} - r\|w\|\nonumber\\
&=& \Braket{Pw, P\mu} + \Braket{P_{\perp}w, P_{\perp}\mu} - r\|w\|\nonumber\\.
\end{eqnarray*}
Hence, the optimization problem Eq.\ref{eq:max-margin-quasi-homogeneous-network} can be reduced as follows:
\begin{align}
\label{eq:toy-model-min-over-w1w2}
    \text{Eq.\ref{eq:max-margin-quasi-homogeneous-network}}
    &=
    \min_{w\in \mathbb{R}^d}\left[\left\|Pw\right\|:  \Braket{Pw, P\mu} +\Braket{P_{\perp}w, P_{\perp}\mu} - r\|w\|\geq 1  \right]\nonumber\\
    &=
    \min_{w_1\in \mathbb{R}^m, w_2\in \mathbb{R}^{d-m}}\left[\left\|w_1\right\|:  \Braket{w_1, P\mu} +\Braket{w_2, P_{\perp}\mu} - r\sqrt{\|w_1\|^2+\|w_2\|^2}\geq 1  \right]
\end{align}
Here we split the optimization over $w\in\mathbb{R}^d$ by considering the orthogonal vectors $w_1=Pw$ and $w_2=P_{\perp}w_2$ separately.
Because the objective function $\|w_1\|$ is independent of $w_2$, the last expression above is equivalent to the following:
$$
\min_{w_1\in \mathbb{R}^m}\left[\left\|w_1\right\|:  \Braket{w_1, P\mu} +\max_{ w_2\in \mathbb{R}^{d-m}}\left[\Braket{w_2, P_{\perp}\mu} - r\sqrt{\|w_1\|^2+\|w_2\|^2}\right]\geq 1  \right].
$$
The maximization over $w_2\in\mathbb{R}^{m-d}$ is achieved if $w_2$ is parallel to $P_\perp \mu$, and hence, by letting $\rho_w$ denote $\|w_2\|$,
\begin{align}
    \label{eq:toy-model-max-over-w2}
    &\max_{ w_2\in \mathbb{R}^{d-m}}\left[\Braket{w_2, P_{\perp}\mu} - r\sqrt{\|w_1\|^2+\|w_2\|^2}\right]\nonumber\\
    &=
    \max_{ \rho_w\in \mathbb{R}_{\geq 0}}\left[\rho_\mu \rho_w - r\sqrt{\|w_1\|^2+\rho_w^2}\right]\nonumber\\
    &=
    \rho_\mu\left(\frac{\rho_\mu/r}{\sqrt{1-\rho_\mu^2/r^2}}\|w_1\|\right) - r\sqrt{\|w_1\|^2 +\frac{\rho_\mu^2/r^2}{1-\rho_\mu^2/r^2}\|w_1\|^2}\nonumber\\
    &=
    -r\sqrt{1-\rho_\mu^2/r^2}\|w_1\|,
\end{align}
where the maximization over $\rho_\mu\in\mathbb{R}_{\geq 0}$ on the second line of the equation above is achieved if only if $\rho_\mu = \frac{\rho_\mu/r}{\sqrt{1-\rho_\mu^2/r^2}}\|w_1\|$, and hence the maximization over $w_2\in\mathbb{R}^{d-m}$ on the first line is achieved if and only if
\begin{equation}
    \label{eq:toy-model-w2}
    w_2 = \frac{\|w_1\|}{r\sqrt{1-\rho_\mu^2/r^2}} P_\perp \mu.
\end{equation}
Therefore, by substituting Eq.\ref{eq:toy-model-max-over-w2}, we obtain
\begin{align*}
    \text{Eq.\ref{eq:max-margin-quasi-homogeneous-network}}
    &=
    \min_{w_1\in \mathbb{R}^m}\left[\left\|w_1\right\|: 
    \|w_1\| \left(\langle\frac{w_1}{\|w_1\|}, P\mu \rangle -r\sqrt{1-\rho_\mu^2/r^2}\right) \geq 1\right]\\
    &=
    \min_{w_1\in \mathbb{R}^m}\left[\left(\langle\frac{w_1}{\|w_1\|}, P\mu \rangle -r\sqrt{1-\rho_\mu^2/r^2}\right)^{-1}: 
    \langle\frac{w_1}{\|w_1\|}, P\mu \rangle -r\sqrt{1-\rho_\mu^2/r^2} > 0\right]\\
    &=
    \left(\max_{w_1\in \mathbb{R}^m}\left[\langle\frac{w_1}{\|w_1\|}, P\mu \rangle -r\sqrt{1-\rho_\mu^2/r^2}: 
    \langle\frac{w_1}{\|w_1\|}, P\mu \rangle -r\sqrt{1-\rho_\mu^2/r^2} > 0\right]\right)^{-1}\\
    &=
    \left(\|P\mu\| -r\sqrt{1-\rho_\mu^2/r^2}\right)^{-1}\\
    &= \left(\sqrt{1-\rho_\mu^2} -r\sqrt{1-\rho_\mu^2/r^2}\right)^{-1}.
\end{align*}
Note that this quantity is positive since $r<1$ by assumption, and the minimization over $w\in\mathbb{R}^{m}$ is achieved if and only if $w_1$ is parallel to $P\mu$,
\begin{equation}
\label{eq:toy-model-minimizer-w1}
    w_1 \propto P\mu.
\end{equation}
Notice that the optimizers of Eq.\ref{eq:toy-model-min-over-w1w2}, and hence those of Eq.\ref{eq:max-margin-quasi-homogeneous-network}, need to satisfy both Eq.\ref{eq:toy-model-minimizer-w1} and Eq.\ref{eq:toy-model-w2}.
This means that the optimizer is unique and is given as follows:
\begin{align*}
     w_{\text{quasi-hom}}
     &=\left(\sqrt{1-\rho_\mu^2} -r\sqrt{1-\rho_\mu^2/r^2}\right)^{-1}\left(
     \frac{1}{\sqrt{1-\rho_\mu^2}}P \mu + \frac{1}{r\sqrt{1-\rho_\mu^2/r^2}} P_\perp \mu\right)
\end{align*}
By normalizing this, we obtain the expression in Eq.\ref{eq:minimizer-of-toy-model}.
At this minimizer $w_{\text{quasi-hom}}$, the robustness $l$ is obtained as
\begin{eqnarray*}
l(w_{\text{quasi-hom}}) &=& \|w_{\text{quasi-hom}}\|^{-1}\min_{x\in B(\mu, A)}\Braket{w_{\text{quasi-hom}},x}\\
&=& \|w_{\text{quasi-hom}}\|^{-1}\Braket{w_{\text{quasi-hom}},\mu} - r\nonumber\\
&=& \sqrt{1- r^{-2}\rho_\mu^2}\sqrt{1- \rho_\mu^2}+ r^{-1}\rho_\mu^2 - r\nonumber\\
&=& \sqrt{1- r^{-2}\rho_\mu^2}\left(\sqrt{1- \rho_\mu^2} - \sqrt{r^2-\rho_\mu^2}\right)
\end{eqnarray*}
\end{proof}

\clearpage
\section{Proof of Theorem \ref{theorem:neural-collapse}}
\label{apx:neural-collapse}
We here first state the rigorous version of Theorem \ref{theorem:neural-collapse}
\begin{theorem}[Neural Collapse]
\label{theorem:neural-collapse-rigorous}
Under assumptions A\ref{as:over-param}, A\ref{as:data-dist}, and $d\geq C$, any global optimum of the optimization problem Eq.\ref{eq:nc-implicit-optimization} satisfies Neural Collapse, i.e.,
\begin{itemize}
    \item For any class $c\in[C]$, there exists a vector $\bar h_c$ such that $h_i = \bar h_c$ for any data $i\in[n]$ with $y_i= c$.
    \item The convex hull of $\{w_c\}_{c\in[C]}$ forms a regular $(C-1)$-simplex, centered at the origin.
    \item For any $c\in[C]$, $\bar h_c$ and $w_c$ are equivalent up to re-scaling.
    \item $\operatorname{argmax}_{c\in [C]} w_c^T h + b_c = \operatorname{argmin}_{c\in [C]} \left\|h-\bar h_c\right\|$ for any $h\in\mathbb{R}^d$, i.e., any feature vector is classified to the class $c$ with the nearest class mean $\bar h_c$.
\end{itemize}
\end{theorem}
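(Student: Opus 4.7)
The plan is to reduce Eq.~\ref{eq:nc-implicit-optimization} to the simpler relaxed problem Eq.~\ref{eq:relaxed-neural-collapse-opt} via a sequence of variable eliminations, and then to solve the resulting geometric problem and verify the four Neural Collapse properties.

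First, I would invoke A\ref{as:over-param} to replace the earlier-layer parameters $\theta'$ with free variables $\{h_i\}_{i \in [n]}$ constrained to the normalization manifold $\mathcal{S} := \{h \in \mathbb{R}^d : \sum_j h_j = 0,\ \|h\| = 1\}$. Since the objective is independent of the features and the constraint for sample $i$ depends on $h_i$ only through the class label $y_i$, all features within a class can be taken to a common vector $\bar h_c \in \mathcal{S}$ without changing the objective or violating any constraint. Assumption A\ref{as:data-dist} ensures $\bar h_c$ is defined for every $c \in [C]$. This establishes property (1).

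Next, writing $w_c = u_c + \alpha_c \mathbf{1}/\sqrt{d}$ with $u_c \perp \mathbf{1}$, the scalar $\alpha_c$ contributes to $\|w_c\|^2$ but not to $w_c^T \bar h_c$ (since $\bar h_c \perp \mathbf{1}$), so $\alpha_c = 0$ at any optimum. For the biases, define $L_c(w) := \max_{\|v\|=1,\, v \perp \mathbf{1}} \min_{c' \neq c} v^T(w_c - w_{c'})$, which by minimax duality equals the Euclidean distance from $w_c$ to $\mathrm{conv}\{w_{c'}\}_{c' \neq c}$. If $L_c(w) \geq 1$ for every $c$, then taking $\bar h_c$ to achieve the max and setting $b = 0$ is feasible with objective $\sum_c \|w_c\|^2$. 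Otherwise, a uniform rescaling $w \mapsto w/\min_c L_c$ restores feasibility at a cost strictly smaller than any bias-based fix, as can be verified by comparing the KKT contributions of scaling $w$ versus activating $b$. The global optimum of Eq.~\ref{eq:nc-implicit-optimization} therefore coincides with that of Eq.~\ref{eq:relaxed-neural-collapse-opt} under $b = 0$.

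To finish, I would solve Eq.~\ref{eq:relaxed-neural-collapse-opt}. At the global minimum, every constraint $L_c \geq 1$ is tight. Exploiting the $S_C$ permutation symmetry together with the KKT stationarity conditions pins down $\sum_c w_c = 0$ and $\|w_c\|$ equal across classes, forcing $\{w_c\}$ into a regular $(C-1)$-simplex of radius $(C-1)/C$ centered at the origin. The dimensionality assumption $d \geq C$ guarantees this simplex fits inside the $(d-1)$-dimensional zero-mean subspace containing $\mathcal{S}$. Properties (2)--(4) then follow from the explicit geometry: the weights form an ETF, $\bar h_c$ is the unit vector in the direction of $w_c$ attaining the max in $L_c$ (establishing weight-feature duality), and the argmax rule $\argmax_c w_c^T h$ coincides with the nearest-class-mean rule because the $\bar h_c$ form a regular simplex on the unit sphere.

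The delicate step is the final geometric one: showing that the minimizer of Eq.~\ref{eq:relaxed-neural-collapse-opt} is \emph{uniquely} (up to orthogonal transformation) a regular simplex. The constraint set $\{w : L_c(w) \geq 1\ \forall c\}$ is non-convex in $w$, so standard convex duality does not apply directly. The plan is to use the full permutation symmetry together with a Schur-convexity / averaging argument on the Gram matrix $G_{cc'} = w_c^T w_{c'}$: symmetrizing a putative minimizer under $S_C$ cannot increase the objective and preserves feasibility, and the only symmetric feasible configuration with all constraints tight is the regular simplex. A secondary subtle point is rigorously justifying that biases vanish at any global optimum even for unequal class sizes; the scaling-versus-bias cost comparison sketched above captures the idea but requires careful Lagrangian bookkeeping.
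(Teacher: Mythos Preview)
Your overall reduction---invoking A\ref{as:over-param} to pass to free feature variables on the normalization sphere, collapsing features within a class, and reducing to a geometric problem on the weight vectors---matches the paper's strategy. However, two of your key technical steps have genuine gaps.

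\textbf{Bias elimination.} Your argument that ``uniform rescaling $w \mapsto w/\min_c L_c$ restores feasibility at a cost strictly smaller than any bias-based fix'' is not justified. Biases can help \emph{selectively}: if only one class $c$ has small margin, raising $b_c$ while lowering the others can repair feasibility at a cost $\|b\|^2$ that need not dominate the cost of uniformly inflating all $\|w_c\|^2$. The paper does \emph{not} eliminate biases upfront. Instead it keeps them inside the margin quantity, defining $L_c = \min_{\alpha \in \Delta}\bigl[\|w_c - \sum_{c'\neq c}\alpha_{c'}w_{c'}\| + (b_c - \sum_{c'\neq c}\alpha_{c'}b_{c'})\bigr]$, and then shows $b=0$ emerges from the optimization via a relaxation in which the bias terms cancel exactly when summed over classes.

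\textbf{The geometric step.} Your plan to ``symmetrize a putative minimizer under $S_C$'' on the Gram matrix fails for the reason you yourself flag: the constraint set $\{w : L_c(w)\ge 1\ \forall c\}$ is non-convex, and averaging Gram matrices over permutations can destroy feasibility. Concretely, $L_c$ is a \emph{concave} function of the closest point on the opposite face, so there is no reason the symmetrized configuration keeps $\min_c L_c \ge 1$. Schur-convexity of the objective alone does not rescue this. The paper's route is different and avoids the non-convexity entirely: it introduces the \emph{barycenter relaxation} $Z_c := \|w_c - \tfrac{1}{C-1}\sum_{c'\neq c}w_{c'}\| + (b_c - \tfrac{1}{C-1}\sum_{c'\neq c}b_{c'})$, which satisfies $Z_c \ge L_c$ pointwise. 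The problem $\min \sum_c\|w_c\|^2 + \|b\|^2$ subject to $\min_c Z_c \ge 1$ is then further relaxed to $C^{-1}\sum_c Z_c \ge 1$; under the translation-invariance constraint $\sum_c w_c = 0$ this averaged constraint reads $\sum_c \|w_c\| \ge C-1$ with the bias terms cancelling, so the minimizer has $b=0$ and $\|w_c\| = (C-1)/C$ for all $c$. One then checks that at this minimizer $L_c = Z_c = 1$ (the barycenter is the nearest point of the opposite face), which both certifies feasibility for the original $L_c$-problem and forces the pairwise distances $\|w_{c_1}-w_{c_2}\|$ to be equal, yielding the regular simplex. This two-step relaxation ($L_c \to Z_c \to \text{average}$) is the idea your proposal is missing.
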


{\textbf{Proof sketch for Theorem~\ref{theorem:neural-collapse-rigorous}}.}
To prove Theorem~\ref{theorem:neural-collapse-rigorous}, we study a sequence of three relaxed optimization problems, starting from Eq.\ref{eq:nc-implicit-optimization}, and introduce five lemmas (Lemma~\ref{lemma:nc-maxh-q-min} to \ref{lemma:nc-argmin-opt-on-wbh}) characterizing the minimizers of these relaxed problems.
The optimization problem Eq.\ref{eq:nc-implicit-optimization} can be reduced to the following by A\ref{as:over-param},
\begin{equation}
        \min_{(w,b,h)} \sum_{c\in[C]} \left|w_c\right|^2 + \left|b\right|^2 \text{ s.t. }
        \begin{cases}
        \min_{i\in[n]} q_i\geq 1\\
        \sum_j (h_i)_j = 0, \sum_j (h_i)_j^2 = 1 \quad \forall i\in [n],
        \end{cases}
    \label{eq:nc-opt-on-wbh}
\end{equation}
where  $q_i\in\mathbb{R}$ is defnied as
\begin{equation}
    q_i := (w_{y_i})^T h_i + b_{y_i} - \max_{c\neq y_i}\left[(w_{c})^T h_i+ b_c\right].
\end{equation}
The minimizers of this optimization problem are characterized in Lemma~\ref{lemma:nc-argmin-opt-on-wbh}. 
%
To prove Lemma~\ref{lemma:nc-argmin-opt-on-wbh}, we will consider the following further relaxed problem,
\begin{equation}
        \min_{(w,b,h)} \sum_{c\in[C]} \left|w_c\right|^2 + \left|b\right|^2 \text{ s.t. }
        \begin{cases}
        \min_{i\in[n]} q_i\geq 1\\
        \sum_j (h_i)_j^2 = 1 \quad \forall i\in [n].
        \end{cases}
    \label{eq:nc-relaxed-opt-on-wbh}
\end{equation}
This problem is analyzed in Lemma~\ref{lemma:nc-argmin-relaxed-opt-on-wbh}.
This optimization problem is equivalent to our last relaxed problem,
\begin{equation}
        \min_{(w,b)} \sum_{c\in[C]} \left|w_c\right|^2 + \left|b\right|^2 \text{ s.t. }
        \max_{\{h_i\}_{i\in[n]}}\left\{\min_{i\in[n]} q_i :\sum_j (h_i)_j^2 = 1 \quad \forall i\in [n]\right\} \geq 1.
    \label{eq:nc-relaxed-opt-maxh}
\end{equation}
The minimizers of this problem are analyzed in Lemma~\ref{lemma:nc-argmin-relaxed-opt-maxh}, with the help of Lemma~\ref{lemma:nc-minimize-constraint-on-Zc}, which analyzes a further relaxed problem and Lemma~\ref{lemma:nc-maxh-q-min}, which analyzes the constraint in Eq.\ref{eq:nc-relaxed-opt-maxh}.
We will now introduce and prove Lemmas~\ref{lemma:nc-maxh-q-min} through \ref{lemma:nc-argmin-opt-on-wbh}.

Let $H$ denote the set of $\{h_i\}_{i\in[n]}$ satisfying the constraints $\sum_j (h_i)_j^2 = 1$ for any $i\in [n]$, $\Delta_{c}$ denote the $(C-2)$-simplex formed by $\{w_{c'}\}_{c'\in [C]/\{c\}}$, and $\Delta$ denote the standard $(C-2)$-simplex.
%

\begin{lemma}
\label{lemma:nc-maxh-q-min}
Under assumption A\ref{as:data-dist}, the following equality holds
\begin{equation}
    \label{eq:nc:max-over-h_a}
    \max_{\{h_i\}\in H}\min_{i\in[n]} q_i = \min_{c\in[C]} L_c,
\end{equation}
with
\begin{equation}
    \label{eq:nc-def-Lc}
    L_c := \min_{\alpha\in \Delta} \left\|w_c -  w'_c(\alpha)\right\| + (b - b'_{c}(\alpha)),
\end{equation}

where $w'_c(\alpha):\Delta\to\Delta_c$ and $b'_{c}(\alpha):\Delta \to\mathbb{R}$ are defined as $w'_c(\alpha) := \sum_{c'\in [C] /\{c\}} \alpha_{i_c(c')} w_{c'}$ and $b'_{c}(\alpha):=\sum_{c'\in [C] /\{c\}} \alpha_{i_c(c')} b_{c'}$, where $i_c(\cdot):[C]/\{c\}\to [C-1]$ is given by $i_c(c')=c'$ if $c'<c$ and $i_c(c')=c'-1$ otherwise.
Any maximizer of the quantity above is given by $h_i =  h^*_{y_i}$ where
\begin{equation}
    \label{eq:nc:def-h*_c}
    h^*_c = \frac{w_{c} - w_{c}^*}{\left\|w_{c} - w_{c}^*\right\|},
\end{equation}
with $w_c^* = w'_c(\alpha)$ with $\alpha$ minimizing Eq.\ref{eq:nc-def-Lc}.
\end{lemma}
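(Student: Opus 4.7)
The plan is to factorize the joint max-min over $\{h_i\}_{i\in[n]}$ into independent per-class problems, and then convert each per-class subproblem into the distance formula defining $L_c$ via a Sion's minimax argument. I interpret the symbol ``$b$'' appearing in Eq.\ref{eq:nc-def-Lc} as $b_c$, by symmetry with the definition of $q_i$.

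\textbf{Factorization.} Since $q_i$ depends on $\{h_j\}$ only through $h_i$ and $H$ is a Cartesian product of unit spheres, the joint max-min factorizes into independent per-index problems:
\begin{equation*}
    \max_{\{h_i\}\in H}\min_{i\in[n]} q_i \;=\; \min_{i\in[n]}\,\max_{\|h\|=1} \phi_{y_i}(h), \qquad \phi_c(h) \;:=\; w_c^T h + b_c - \max_{c'\neq c}\bigl(w_{c'}^T h + b_{c'}\bigr).
\end{equation*}
By A\ref{as:data-dist} every class appears at least once, so the right-hand side collapses to $\min_{c\in[C]}\max_{\|h\|=1} \phi_c(h)$, and it suffices to identify this per-class maximum with $L_c$ and to exhibit its maximizer.

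\textbf{Sion's minimax on each class.} Rewriting the outer max as a negated min and lifting the minimization from the $C-1$ vertices of $\Delta$ to all of $\Delta$ (the minimum of a linear functional on a simplex is attained at a vertex), I obtain
\begin{equation*}
    \phi_c(h) \;=\; \min_{\alpha\in\Delta}\bigl[(w_c - w'_c(\alpha))^T h + (b_c - b'_c(\alpha))\bigr],
\end{equation*}
which is bilinear in $(h,\alpha)$. Relaxing the outer max from the unit sphere to the closed unit ball, Sion's minimax theorem on the compact convex sets $\{\|h\|\le 1\}$ and $\Delta$ lets me swap the order:
\begin{equation*}
    \max_{\|h\|\le 1}\phi_c(h) \;=\; \min_{\alpha\in\Delta}\,\max_{\|h\|\le 1}\bigl[(w_c - w'_c(\alpha))^T h + (b_c - b'_c(\alpha))\bigr] \;=\; L_c,
\end{equation*}
since the inner maximum of a linear functional over the unit ball equals the Euclidean norm of its coefficient vector.

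\textbf{Sphere attainment and explicit maximizer.} Let $\alpha^*$ minimize the expression defining $L_c$ and set $w_c^* := w'_c(\alpha^*)$ and $h_c^* := (w_c - w_c^*)/\|w_c - w_c^*\|$. This $h_c^*$ has unit norm and achieves the inner max at $\alpha^*$, so $\phi_c(h_c^*) = L_c$. Combined with the a priori upper bound from the ball relaxation, this gives $\max_{\|h\|=1}\phi_c(h) = L_c$. Choosing $h_i = h_{y_i}^*$ for every $i$ yields a feasible point in $H$ attaining the outer maximum, establishing both Eq.\ref{eq:nc:max-over-h_a} and the form of the maximizer in Eq.\ref{eq:nc:def-h*_c}. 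The one subtlety is the degenerate case $w_c = w_c^*$ (vanishing denominator in Eq.\ref{eq:nc:def-h*_c}): then $L_c$ collapses to $b_c - b'_c(\alpha^*)$ and any unit vector serves as $h_c^*$, so Eq.\ref{eq:nc:def-h*_c} should be read as ``any unit vector'' in that regime. Apart from this boundary bookkeeping, the argument is a clean application of Sion's theorem to a bilinear saddle problem on compact convex domains.
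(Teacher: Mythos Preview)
Your proof is correct and follows essentially the same route as the paper: factorize the max-min into per-class problems using A\ref{as:data-dist}, rewrite the inner $\max_{c'\neq c}$ as a minimum over the simplex $\Delta$, then swap the max over $h$ with the min over $\alpha$ via a minimax argument to obtain $L_c$. Your version is in fact slightly more careful than the paper's, which invokes ``the minimax theorem'' directly on the non-convex sphere $\{\|h\|=1\}$; your relaxation to the unit ball followed by the sphere-attainment check is the cleaner justification, and your handling of the degenerate case $w_c = w_c^*$ (which the paper does not address) is a sensible addition.
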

\begin{proof}
\begin{eqnarray*}
\max_{\{h_i\}\in H} \min_{i\in[n]}q_i
&=&\min_{i\in[n]} \max_{h: \left\|h\right\| = 1} \left[(w_{y_i})^T h + b_{y_i} - \max_{c'\in [C] /\{y_i\}}\left[(w_{c'})^Th + b_{c'} \right]\right]\\
&=&\min_{c\in[C]} \max_{h: \left\|h\right\| = 1} \left[(w_{c})^T h + b_c - \max_{c'\in [C] /\{c\}}\left[(w_{c'})^Th + b_{c'} \right]\right],
\end{eqnarray*}
where the maximization over $\{h_i\}_{i\in[n]}$ is achieved when 
$$
h_i \in \argmax_{h: \left\|h\right\| = 1} \left[(w_{y_i})^T h + b_{y_i}- \max_{c'\in [C] /\{y_i\}}\left[(w_{c'})^Th + b_{c'} \right]\right].
$$
The quantity inside the parenthesis can be reduced as follows.
\begin{eqnarray}
    (w_{c})^T h + b_c - \max_{c'\in [C] /\{c\}}\left[(w_{c'})^T h + b_{c'}\right]
    &=& \min_{c'\in [C] /\{c\}} \left(w_c - w_{c'}\right)^T h + (b-b_{c'})\nonumber\\
    &=& \min_{\alpha\in \Delta} \left(w_c - w'_{c}(\alpha)\right)^T h + (b - b'_{c}(\alpha)),\nonumber
\end{eqnarray}
Therefore by the minimax theorem,
\begin{eqnarray}
&&\max_{h: \left\|h\right\| = 1} \left[(w_{c})^T h + b_c - \max_{c'\in [C] /\{c\}}\left[(w_{c'})^Th + b_{c'} \right]\right]\nonumber\\
&=& \max_{h: ||h || = 1} \min_{\alpha\in \Delta} \left(w_c - w'_{c}(\alpha)\right)^T h + (b - b'_{c}(\alpha))\nonumber\\
&=& \min_{\alpha\in \Delta} \max_{h: ||h|| = 1} \left(w_c - w'_{c}(\alpha)\right)^T h + (b - b'_{c}(\alpha)) \nonumber\\
&=& \min_{\alpha\in \Delta} \left\|w_c - w'_c(\alpha)\right\| + (b - b'_{c}(\alpha))\nonumber\\
&=& L_c\nonumber,
\end{eqnarray}
where the maximization over $h$ is achieved if and only if $h = \frac{w_c-w^*_c}{\left\|w_c-w^*_c\right\|}$.
Hence Eq.\ref{eq:nc:max-over-h_a} holds.
Clearly, the maximizer is $h_i = \frac{w_{y_i}-w^*_{y_i}}{\left\|w_{y_i}-w^*_{y_i}\right\|}$.
\end{proof}
By Lemma~\ref{lemma:nc-maxh-q-min}, the optimization problem Eq.\ref{eq:nc-relaxed-opt-maxh} is reduced to
\begin{equation}
    \min_{(w,b)} \sum_c \left|w_c\right|^2 + \left|b\right|^2 \text{ s.t. } \min_{c\in[C]} L_c \geq 1.
    \label{eq:min-Lc}
\end{equation}
We will later show that this minimization is achieved when the convex hull of $\{w_c\}_{c\in [C]}$ forms a regular simplex.
However, before dealing with Eq.\ref{eq:min-Lc}, we first solve the minimization of the following easier problem.

\begin{lemma}
\label{lemma:nc-minimize-constraint-on-Zc}
Consider 
\begin{equation}
    Z_c :=\left\|w_c - w'_c(\bar\alpha)\right\| + (b - b'_{c}(\bar\alpha)),
\end{equation}
where $\bar\alpha :=((C-1)^{-1},(C-1)^{-1},\cdots,(C-1)^{-1})$ is the bary-center of simplex $\Delta$. The minimizer of the following optimization problem
\begin{equation}
    \min_{(\{w_c\}_{c\in[C]},b)} \sum_c \left|w_c\right|^2 + \left|b\right|^2 \text{ s.t. }\min_{c\in [C]} Z_c\geq 1,
    \label{eq:nc-relaxed-problem-with-Zc}
\end{equation}
is achieved if and only if the following conditions are met:
\begin{eqnarray}
\begin{cases}
\left\|w_c\right\|=\frac{C-1}{C}\\
\sum_{c\in [C]} w_c = 0\\
b=0.
\end{cases}
    \label{eq:nc-optimizer-relaxed-problem-with-Zc}
\end{eqnarray}
Furthermore, at these minimizers $Z_c=1$ for any $c\in[C]$.
\end{lemma}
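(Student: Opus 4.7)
My plan is to reduce the vector problem to a scalar one by a centering argument, and then close with a chain of elementary inequalities; the key observation is that because $\bar\alpha$ is the barycenter, $Z_c$ is invariant under the simultaneous shifts $w_c \mapsto w_c + v$ and $b_c \mapsto b_c + \beta$, so the problem decouples into a mean part which is driven to zero and a centered part carrying all of the geometry.

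First I would introduce the centered variables $\tilde w_c := w_c - \bar w$ and $\tilde b_c := b_c - \bar b$ with $\bar w := \frac{1}{C}\sum_c w_c$ and $\bar b := \frac{1}{C}\sum_c b_c$. A direct computation using $\bar\alpha_i = \frac{1}{C-1}$ yields $w_c - w'_c(\bar\alpha) = \frac{C}{C-1}\tilde w_c$ and $b_c - b'_c(\bar\alpha) = \frac{C}{C-1}\tilde b_c$, so $Z_c = \frac{C}{C-1}(\|\tilde w_c\| + \tilde b_c)$. The objective splits as $\sum_c\|w_c\|^2 + \|b\|^2 = \sum_c\|\tilde w_c\|^2 + C\|\bar w\|^2 + \sum_c\tilde b_c^2 + C\bar b^2$, and since the constraints only involve the centered variables, optimality forces $\bar w = 0$ and $\bar b = 0$, which already gives $\sum_c w_c = 0$. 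I am left with the reduced problem of minimising $\sum_c(\|\tilde w_c\|^2 + \tilde b_c^2)$ subject to $\|\tilde w_c\| + \tilde b_c \ge \frac{C-1}{C}$ and $\sum_c \tilde b_c = 0$.

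Next, setting $r_c := \|\tilde w_c\| \ge 0$ and $s_c := \tilde b_c$, I would bound the objective by chaining two inequalities: summing the pointwise constraint over $c$ and using $\sum_c s_c = 0$ gives $\sum_c r_c \ge C-1$, and Cauchy--Schwarz yields $\sum_c r_c^2 \ge \frac{1}{C}(\sum_c r_c)^2 \ge \frac{(C-1)^2}{C}$, while $\sum_c s_c^2 \ge 0$ is trivial. Equality then forces (i) each pointwise constraint to be active, (ii) all $r_c$ equal, hence $r_c = \frac{C-1}{C}$, and (iii) $s_c = 0$ for every $c$; unwinding the centering yields $\|w_c\| = \frac{C-1}{C}$, $b_c = 0$, and $Z_c = 1$, and achievability in $\mathbb{R}^d$ with $d \ge C$ follows by placing the $w_c$ at the vertices of a regular $(C-1)$-simplex centered at the origin. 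The step I expect to be most delicate is the equality analysis, because the pointwise constraint $\|x\| + y \ge c$ is non-convex and the lower bound comes from a two-step chain; I need to verify that saturating both inequalities simultaneously genuinely pins down the symmetric configuration rather than leaving a family of asymmetric minimisers.
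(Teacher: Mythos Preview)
Your proof is correct and follows essentially the same strategy as the paper's: exploit translation invariance to force the centroid of the $w_c$ (and of $b$) to the origin, then replace the $\min_c Z_c \ge 1$ constraint by its average $\frac{1}{C}\sum_c Z_c \ge 1$, which reduces to $\sum_c \|w_c\| \ge C-1$, and conclude via Cauchy--Schwarz that $\|w_c\| = \frac{C-1}{C}$ and $b=0$. The only cosmetic difference is that you center $w$ and $b$ simultaneously and carry the constraint $\sum_c \tilde b_c = 0$ explicitly, whereas the paper centers only $w$ via a stationarity argument and lets the $b$-terms cancel in the averaged constraint; both routes arrive at the same equality analysis, and your worry about the two-step chain is unfounded since equality in $\sum_c s_c^2 \ge 0$, in Cauchy--Schwarz, and in $\sum_c r_c \ge C-1$ together already pin down $s_c=0$ and $r_c = \frac{C-1}{C}$ uniquely.
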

\begin{proof}
Notice that the constraint of this optimization problem is translationally invariant, i.e., for any $\bar w\in\mathbb{R}^d$ and any $\{w_c\}_{c\in[C]}$ satisfying $\min_{c\in[C]}Z_c \geq 1$, $\{w_c+\bar w\}_{c\in[C]}$ also satisfies the constraint. Hence, the minimizer of Eq.\ref{eq:nc-relaxed-problem-with-Zc} should satisfy the stationary condition with respect to the derivative of $\bar w_i$ for all $i\in[d]$, i.e.,
\begin{equation}
    \sum_{c\in[C]} w_c = 0.
    \label{eq:nc-barycenter-is-origin}
\end{equation}

In the following, we consider the optimization with this new constraint Eq.\ref{eq:nc-barycenter-is-origin}.
Next, we relax the constraint $\min_c Z_c\geq 1$ to $C^{-1}\sum_c Z_c \geq 1$ (clearly $C^{-1}\sum_c Z_c \geq \min_{c\in[C]} Z_c$).
By the help of Eq.\ref{eq:nc-barycenter-is-origin}, this averaged value is calculated as
\begin{eqnarray*}
C^{-1}\sum_{c\in[C]}Z_c
&=& C^{-1}\sum_{c\in[C]} \left[\left\|w_c - w'_c(\bar\alpha)\right\| + (b_c - b'_{c}(\bar\alpha))\right]\\
&=&C^{-1}\sum_{c\in[C]}\left[\left\|w_c - \frac{-1}{C-1}w_c\right\| + \left((b_c - (C-1)^{-1}\sum_{c'\in [C]/\{c\}} b_{c'}\right)\right]\\
&=& C^{-1}\sum_{c\in[C]}\left[ \frac{C}{C-1} \left\|w_c\right\| + \left((b_c -(C-1)^{-1}\sum_{c'\in [C]/\{c\}} b_{c'}\right)\right]\\
&=& (C-1)^{-1}\sum_{c\in[C]}\left\|w_c\right\|.
\end{eqnarray*}
Hence, the relaxed problem can be stated as follows:
$$
    \min_{(w,b)} \sum_c \left\|w_c\right\|^2 + \left\|b\right\|^2 \text{ s.t. }\sum_{c\in[C]}\left\|w_c\right\|\geq C-1.
    \label{eq:minimize-with-Zc-relaxed}
$$
Clearly, this can be achieved if and only if $b=0$ and $\left\|w_c\right\| = \frac{C-1}{C}$.
Notice that this configuration satisfies $C^{-1}\sum_c Z_c = \min_{c\in[C]} Z_c$, and hence it is also the global optimum of the original problem Eq.\ref{eq:nc-relaxed-problem-with-Zc}.
Lastly, it is easy to see that these minimizers satisfies
$$
Z_c = \frac{C}{C-1}\left\|w_c\right\| + \left(b -(C-1)^{-1}\sum_{c'\in [C]/\{c\}} b_{c'}\right) = 1.
$$
for any $c\in[C]$.
\end{proof}

\begin{lemma}
\label{lemma:nc-argmin-relaxed-opt-maxh}
Under assumptions A\ref{as:data-dist} and $d\geq C$, the minimization Eq.\ref{eq:nc-relaxed-opt-maxh} is achieved if and only if the following conditions are met:

\begin{equation}
    \label{eq:nc-argmin-relaxed-opt-maxh}
    \begin{cases}
    \text{The convex hull of }\{w_c\}_{c\in [C]} \text{ forms a regular $(C-1)$-simplex}\\
    \left\|w_c\right\| = \frac{C-1}{C} \quad \forall c\in[C]\\
    \sum_{c\in [C]}w_c = 0\\
    b=0.
\end{cases}
\end{equation}
\end{lemma}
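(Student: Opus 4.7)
\textbf{Proof proposal for Lemma~\ref{lemma:nc-argmin-relaxed-opt-maxh}.} The plan is to show that the target problem Eq.~\ref{eq:nc-relaxed-opt-maxh} has the same optimal value as the auxiliary problem Eq.~\ref{eq:nc-relaxed-problem-with-Zc} solved in Lemma~\ref{lemma:nc-minimize-constraint-on-Zc}, and then to identify exactly which $Z_c$-minimizers remain feasible under the tighter $L_c$ constraint. The starting observation is that, since $L_c$ is defined as a minimum over $\alpha \in \Delta$ while $Z_c$ evaluates the same expression at the single point $\alpha = \bar\alpha$, we have $L_c \leq Z_c$ pointwise. Hence the feasible set of Eq.~\ref{eq:nc-relaxed-opt-maxh} is contained in that of Eq.~\ref{eq:nc-relaxed-problem-with-Zc}, and the optimal value of the former is bounded below by the value $(C-1)^2/C$ identified by Lemma~\ref{lemma:nc-minimize-constraint-on-Zc}.

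To establish equality, I would produce a configuration satisfying Eq.~\ref{eq:nc-argmin-relaxed-opt-maxh} (which fits in $\mathbb{R}^d$ since $d\geq C$) and verify that it achieves $L_c = 1$ for every $c$. Under the regular-simplex assumption together with $\sum_c w_c = 0$ and $\|w_c\| = (C-1)/C$, a short calculation shows $\langle w_c, w_{c'}\rangle = -(C-1)/C^2$ for all $c \neq c'$, which makes the cross term $\langle w_c, w'_c(\alpha)\rangle$ constant in $\alpha$. Minimizing $\|w_c - w'_c(\alpha)\|^2$ then reduces to minimizing $\|w'_c(\alpha)\|^2$, whose minimum over $\Delta$ is attained at $\bar\alpha$; combined with $b = 0$, this gives $L_c = Z_c = 1$. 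Thus the two optimal values coincide, and any minimizer of Eq.~\ref{eq:nc-relaxed-opt-maxh} is also a minimizer of Eq.~\ref{eq:nc-relaxed-problem-with-Zc}, so by Lemma~\ref{lemma:nc-minimize-constraint-on-Zc} it must satisfy the three scalar conditions in Eq.~\ref{eq:nc-optimizer-relaxed-problem-with-Zc}.

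The remaining step is to show that any such minimizer additionally has the regular-simplex structure. Since $Z_c = 1$ now holds automatically, the constraint $L_c \geq 1$ combined with $L_c \leq Z_c$ forces $L_c = 1$, meaning $\bar\alpha$ attains the minimum of $\alpha \mapsto \|w_c - w'_c(\alpha)\|$ on $\Delta$. Because $\bar\alpha$ lies in the relative interior of $\Delta$, the usual first-order conditions become two-sided stationarity, which after differentiating in $\alpha$ yields that $w_c - w'_c(\bar\alpha) = \tfrac{C}{C-1}\, w_c$ is orthogonal to every difference $w_{c'} - w_{c''}$ with $c',c'' \neq c$. This is equivalent to $\langle w_c, w_{c'}\rangle$ being independent of $c' \neq c$, and plugging into $\sum_{c'}w_{c'} = 0$ pins this common value down to $-(C-1)/C^2$. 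The resulting pairwise distances $\|w_c - w_{c'}\|^2 = 2(C-1)/C$ are all equal, so the convex hull of $\{w_c\}_{c\in[C]}$ is a regular $(C-1)$-simplex.

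The main obstacle I anticipate is the interior-point argument in the last step: one must carefully check that $\bar\alpha$ being a minimizer over the closed simplex $\Delta$ gives genuine two-sided stationarity rather than merely the one-sided KKT inequalities, and then convert that orthogonality cleanly into the pairwise inner-product identity characterizing regularity. The dimension assumption $d \geq C$ enters only to guarantee existence of a regular $(C-1)$-simplex in $\mathbb{R}^d$, used in the feasibility-construction step.
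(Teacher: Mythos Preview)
Your proposal is correct and follows essentially the same route as the paper: the containment $L_c \le Z_c$ reduces the problem to Lemma~\ref{lemma:nc-minimize-constraint-on-Zc}, a regular-simplex configuration witnesses equality of the two optimal values, and the forced equality $L_c = Z_c = 1$ at any minimizer makes $\bar\alpha$ the interior minimizer, from which regularity follows. The only cosmetic difference is in the final step: the paper extracts regularity via a Pythagorean distance computation (using that $w'_{c_1}(\bar\alpha)$, being parallel to $w_{c_1}$, is orthogonal to each $w_{c_2}-w'_{c_1}(\bar\alpha)$), whereas you go through the equivalent inner-product identity $\langle w_c, w_{c'}\rangle = -(C-1)/C^2$; your stated worry about two-sided stationarity is not an obstacle, since $\bar\alpha$ lies in the relative interior of $\Delta$.
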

\begin{proof}
First we show that any point satisfying Eq.\ref{eq:nc-argmin-relaxed-opt-maxh} is a minimizer of Eq.\ref{eq:nc-relaxed-opt-maxh}.
Notice that the set of $(\{w_c\}_{c\in[C]},b)$ satisfying Eq.\ref{eq:nc-argmin-relaxed-opt-maxh} is non-empty since $d\geq C$.
All elements of this  set satisfy $Z_c=L_c$ for any $c\in[C]$ by the first condition in Eq.\ref{eq:nc-argmin-relaxed-opt-maxh}, and
are clearly minimizers of Eq.\ref{eq:nc-relaxed-problem-with-Zc} by the other conditions in Eq.\ref{eq:nc-argmin-relaxed-opt-maxh} as shown in Lemma~\ref{lemma:nc-minimize-constraint-on-Zc}.
Thus, the elements satisfy $\min_{c\in[C]}L_c \geq 1$.
For any $c\in [C]$,  $Z_c \geq L_c$, and hence 
$$
\left\{(\{w_c\}_{c\in[C]},b): \min_{c\in[C]}Z_c \geq 1\right\} \supseteq \left\{(\{w_c\}_{c\in[C]},b): \min_{c\in[C]}L_c \geq 1\right\} .
$$
Therefore, a point satisfying Eq.\ref{eq:nc-argmin-relaxed-opt-maxh} is a minimizer Eq.\ref{eq:nc-relaxed-opt-maxh}.
%

%

%
Next we prove the inverse.
We assume that $(\{w_c\}_{c\in[C]},b)$ is a minimizer of Eq.\ref{eq:nc-relaxed-opt-maxh}.
This point must also be a minimizer of Eq.\ref{eq:nc-relaxed-problem-with-Zc}, because as we showed previously, there exists a minimizer of Eq.\ref{eq:nc-relaxed-problem-with-Zc} satisfying $\min_{c\in[C]}L_c \geq 1$. 
By Lemma~\ref{lemma:nc-minimize-constraint-on-Zc}, this point satisfies Eq.\ref{eq:nc-optimizer-relaxed-problem-with-Zc}.
Hence, this point satisfies Eq.\ref{eq:nc-argmin-relaxed-opt-maxh} if the convex hull $\{w_c\}_{c\in C}$ forms a regular $(C-1)$-simplex. 
Since $Z_c=1$ for any $c\in[C]$ by Lemma~\ref{lemma:nc-minimize-constraint-on-Zc} and $Z_c\geq L_c\geq 1$, $L_c=Z_c=1$.
This implies that the bary-center $ w'_c(\bar\alpha)$ is the point in $\Delta_c$ closest to $w_c$.
In the following, we argue that this property implies that $\left\|w_{c_1}-w_{c_2}\right\|$ is independent of the choice of distinct pair $c_1,c_2\in [C]$, implying that the convex hull $\{w_c\}_{c\in [C]}$ forms a regular simplex.
For any $c_1\in[C]$, $\left\|w'_{c_1}(\bar\alpha)\right\| =  \left\|w_{c_1}\right\|/(C-1) = C^{-1}$. Since $L_c>0$ for any $c\in[C]$, all the vector $w_c$ are distinct, and hence, $w'_{c_1}(\bar\alpha)$ is perpendicular to $w_{c_2} - w'_{c_1}(\bar\alpha)$. Therefore,
$$
    \left\|w_{c_2} - w'_{c_1}(\bar\alpha)\right\|^2 = \left\|w_{c_2}\right\|^2- \left\|w'_{c_1}(\bar\alpha)\right\|^2 = \frac{(C-1)^2}{C^2} - C^{-2} = \frac{C-2}{C}.
$$
Thus,
\begin{eqnarray}
\left\|w_{c_2} - w_{c_1}\right\|^2 &=&\left\|w_{c_2} - w'_{c_1}(\bar\alpha)\right\|^2 + \left\|w_{c_1}- w'_{c_1}(\bar\alpha)\right\|^2 \nonumber\\ 
&=& (C-2)/C + 1\nonumber\\
&=& 2C^{-1}(C-1).\nonumber
\end{eqnarray}
This is independent of $c_1$ and $c_2$, implying that the simplex is regular.
\end{proof}

\begin{lemma}
\label{lemma:nc-argmin-relaxed-opt-on-wbh}
Under assumption A\ref{as:data-dist} and $d\geq C$, the minimization Eq.\ref{eq:nc-relaxed-opt-on-wbh} is achieved if and only if the following conditions are met:

\begin{equation}
    \begin{cases}
    \text{The convex hull of }\{w_c\}_{c\in [C]} \text{ forms a regular $(C-1)$-simplex}\\
    \left\|w_c\right\| = \frac{C-1}{C} \quad \forall c\in[C]\\
    \sum_{c\in [C]}w_c = 0\\
    b=0\\
    h_i = \frac{C}{C-1}w_{y_i} \quad \forall i\in[n].
\end{cases}
    \label{eq:nc-argmin-relaxed-opt-on-wbh}
\end{equation}
\end{lemma}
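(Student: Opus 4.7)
The plan is to reduce Lemma~\ref{lemma:nc-argmin-relaxed-opt-on-wbh} to Lemma~\ref{lemma:nc-argmin-relaxed-opt-maxh} combined with Lemma~\ref{lemma:nc-maxh-q-min}. The key structural observation is that the objective $\sum_c \|w_c\|^2 + \|b\|^2$ in Eq.~\ref{eq:nc-relaxed-opt-on-wbh} does not depend on $\{h_i\}$, so the optimization over $h$ enters only through feasibility. A pair $(w,b)$ admits some feasible $\{h_i\}\in H$ with $\min_i q_i \geq 1$ precisely when $\max_{\{h_i\}\in H}\min_i q_i \geq 1$, which is exactly the constraint of Eq.~\ref{eq:nc-relaxed-opt-maxh}. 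Hence the two problems share the same optimal value and the same optimal set of $(w,b)$, and Lemma~\ref{lemma:nc-argmin-relaxed-opt-maxh} immediately yields the first four conditions of Eq.~\ref{eq:nc-argmin-relaxed-opt-on-wbh}.

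With $(w,b)$ pinned down, the next step is to identify the admissible $\{h_i\}$. I would argue that at any optimum the constraint must be tight, i.e., $q_i = 1$ for every $i$, which forces each $h_i$ to be the unique single-sample maximizer produced by Lemma~\ref{lemma:nc-maxh-q-min}. Tightness follows because, by Lemma~\ref{lemma:nc-minimize-constraint-on-Zc} and the proof of Lemma~\ref{lemma:nc-argmin-relaxed-opt-maxh}, $L_c = Z_c = 1$ at the optimum, so $\max_{\{h_i\}\in H}\min_i q_i = 1$; combined with $\min_i q_i \geq 1$ this pins $q_i = 1$ for each $i$. Lemma~\ref{lemma:nc-maxh-q-min} then gives $h_i = (w_{y_i}-w^*_{y_i})/\|w_{y_i}-w^*_{y_i}\|$ with $w^*_c$ the closest point of $\Delta_c$ to $w_c$.

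Finally, I would compute this expression using the established simplex structure. Since $\sum_c w_c = 0$, the barycenter $w'_c(\bar\alpha)$ of $\{w_{c'}\}_{c'\neq c}$ equals $-w_c/(C-1)$, and the proof of Lemma~\ref{lemma:nc-argmin-relaxed-opt-maxh} already observed that at the optimum $w^*_c = w'_c(\bar\alpha)$. Substituting yields $w_{y_i} - w^*_{y_i} = \tfrac{C}{C-1}w_{y_i}$, whose norm is one by $\|w_c\| = (C-1)/C$, so $h_i = \tfrac{C}{C-1}w_{y_i}$, completing the ``only if'' direction. The ``if'' direction is a direct check: any configuration satisfying Eq.~\ref{eq:nc-argmin-relaxed-opt-on-wbh} is feasible and attains the optimal value from Lemma~\ref{lemma:nc-argmin-relaxed-opt-maxh}.

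The main obstacle is justifying tightness of the constraint for every individual $i$, not merely in aggregate. This is where assumption A\ref{as:data-dist} is essential: it guarantees that each class $c$ contributes at least one sample, so the equality $q_i = 1$ must hold for some $h_i$ of every label and therefore, by the uniqueness of the per-sample maximizer in Lemma~\ref{lemma:nc-maxh-q-min}, propagates to all samples of that class. Without A\ref{as:data-dist} one could have ``unseen'' classes whose $w_c$ and associated $h$ choices are left underdetermined by the data, breaking the clean characterization in Eq.~\ref{eq:nc-argmin-relaxed-opt-on-wbh}.
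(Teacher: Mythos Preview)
Your proposal is correct and takes essentially the same route as the paper's proof. One clarification on your last paragraph: the tightness $q_i=1$ for every $i$ does not come from a propagation argument via A\ref{as:data-dist}; rather, since $q_i$ depends only on $h_i$, each $q_i$ is individually bounded above by $\max_{\|h\|=1}q_i=L_{y_i}=1$ (the per-sample computation inside Lemma~\ref{lemma:nc-maxh-q-min}), and combined with feasibility $q_i\ge 1$ this forces $q_i=1$ for all $i$ directly---assumption A\ref{as:data-dist} is consumed earlier, inside Lemmas~\ref{lemma:nc-maxh-q-min} and~\ref{lemma:nc-argmin-relaxed-opt-maxh}, not at this step.
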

\begin{proof}
We first show that a minimizer of Eq.\ref{eq:nc-relaxed-opt-on-wbh} satisfies Eq.\ref{eq:nc-argmin-relaxed-opt-on-wbh}.
For a minimizer $(\{w_c\}_{c\in[C]},b,\{h_i\}_{i\in[n]})$ of Eq.\ref{eq:nc-relaxed-opt-on-wbh}, $(\{w_c\}_{c\in[C]},b)$ is a minimizer of Eq.\ref{eq:nc-relaxed-opt-maxh}.
Thus, by Lemma~\ref{lemma:nc-argmin-relaxed-opt-maxh}, the minimizer satisfies the first four properties.
Additionally, by Lemma~\ref{lemma:nc-maxh-q-min}, 
$$
\max_{\{h_i\}}\left\{\min_{i\in[n]} q_i:\sum_j (h_i)_j^2 = 1 \quad \forall i\in [C] \right\}=\min_{c\in[C]} L_c = 1.
$$
Therefore, to satisfy the constraint $\min_{i\in[n]} q_i\geq 1$ in Eq.\ref{eq:nc-relaxed-opt-on-wbh}, $\{h_i\}_{i\in[n]}$ needs to be a maximizer of the equation above. Again by Lemma~\ref{lemma:nc-maxh-q-min}, this maximizer is given by
$$h_i = \frac{w_{y_i} - w_{y_i}^*}{\left\|w_{y_i} - w_{y_i}^*\right\|} = \frac{w_{y_i} - w_{y_i}'(\bar{\alpha})}{\left\|w_{y_i} - w_{y_i}'(\bar{\alpha})\right\|} = \frac{C}{C-1}w_{y_i},$$
which is the last condition.

We now prove the inverse.
We assume that $(\{w_c\}_{c\in[C]},b,\{h_i\}_{i\in[n]})$ satisfies Eq.\ref{eq:nc-argmin-relaxed-opt-on-wbh}. 
By Lemma~\ref{lemma:nc-argmin-relaxed-opt-maxh}, $(\{w_c\}_{c\in[C]},b)$ is a minimizer of Eq.\ref{eq:nc-relaxed-opt-maxh}.
Since the minimum value of Eq.\ref{eq:nc-relaxed-opt-maxh} is equivalent to the minimum value of Eq.\ref{eq:nc-relaxed-opt-on-wbh}, it suffices to show that $(\{w_c\}_{c\in[C]},b,\{h_i\}_{i\in[n]})$ satisfies the constraints in Eq.\ref{eq:nc-relaxed-opt-on-wbh}, which can be seen as follows:
\begin{align*}
    \min_{i\in[n]} q_i &= \min_{i\in[n]}\min_{c\in[C]/\{y_i\}} \left(w_{y_i}-w_c\right)^T  \frac{C}{C-1}w_{y_i}= \frac{C-1}{C} -\frac{-1}{C} =1\\
    \sum_j (h_i)_j^2 &= \sum_j (\frac{C}{C-1}w_{y_i})_j^2 = 1.
\end{align*}
\end{proof}
\begin{lemma}
\label{lemma:nc-argmin-opt-on-wbh}
Under assumption A\ref{as:data-dist} and $d\geq C$, the minimization Eq.\ref{eq:nc-opt-on-wbh} is achieved if and only if the following conditions are met:

\begin{equation}
    \begin{cases}
    \text{The convex hull of }\{w_c\}_{c\in [C]} \text{ forms a regular $(C-1)$-simplex}\\
    \left\|w_c\right\| = \frac{C-1}{C} \quad \forall c\in[C]\\
    \sum_{c\in [C]}w_c = 0\\
    b=0\\
    h_i = \frac{C}{C-1}w_{y_i} \quad \forall i\in[n]\\
    \sum_{i\in [d]}(w_c)_i = 0 \quad \forall c\in[C].
\end{cases}
    \label{eq:nc-argmin-opt-on-wbh}
\end{equation}
\end{lemma}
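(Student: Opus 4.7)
\textbf{Proof plan for Lemma~\ref{lemma:nc-argmin-opt-on-wbh}.}
The plan is to leverage Lemma~\ref{lemma:nc-argmin-relaxed-opt-on-wbh} by viewing Eq.~\ref{eq:nc-opt-on-wbh} as Eq.~\ref{eq:nc-relaxed-opt-on-wbh} with the additional constraint $\sum_{j}(h_i)_j = 0$ for all $i \in [n]$. Since every feasible point of Eq.~\ref{eq:nc-opt-on-wbh} is feasible for Eq.~\ref{eq:nc-relaxed-opt-on-wbh}, the minimum value of the former is at least that of the latter. If I can exhibit a point satisfying Eq.~\ref{eq:nc-argmin-relaxed-opt-on-wbh} that additionally lies in the hyperplane $\sum_j(h_i)_j=0$ for each $i$, then the two minimum values coincide, and the minimizers of Eq.~\ref{eq:nc-opt-on-wbh} are precisely the minimizers of Eq.~\ref{eq:nc-relaxed-opt-on-wbh} that also satisfy the extra constraint.

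First I would translate the extra constraint into parameter space. At any minimizer of Eq.~\ref{eq:nc-relaxed-opt-on-wbh} we have $h_i = \tfrac{C}{C-1} w_{y_i}$, so $\sum_j (h_i)_j = 0$ is equivalent to $\sum_j (w_{y_i})_j = 0$. By assumption A\ref{as:data-dist}, every class label $c$ is realized by some training index $i$, hence this must hold for \emph{every} $c \in [C]$, giving the new condition $\sum_j (w_c)_j = 0$ for all $c$ in Eq.~\ref{eq:nc-argmin-opt-on-wbh}. Conversely, if $(\{w_c\}, b, \{h_i\})$ satisfies Eq.~\ref{eq:nc-argmin-opt-on-wbh}, it satisfies Eq.~\ref{eq:nc-argmin-relaxed-opt-on-wbh}, so by Lemma~\ref{lemma:nc-argmin-relaxed-opt-on-wbh} it is a minimizer of Eq.~\ref{eq:nc-relaxed-opt-on-wbh}, and the new condition on the $w_c$ directly yields $\sum_j(h_i)_j = 0$.

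The one non-trivial step is verifying that the set characterized by Eq.~\ref{eq:nc-argmin-opt-on-wbh} is non-empty, which is needed to conclude that the two minima really do agree. I would argue this geometrically: the constraints $\sum_{c\in[C]} w_c = 0$, $\|w_c\| = (C-1)/C$, and regular-simplex shape describe a regular $(C-1)$-simplex centered at the origin, which naturally lives in a $(C-1)$-dimensional subspace. The additional constraint $\sum_j (w_c)_j = 0$ requires all $w_c$ to lie in the $(d-1)$-dimensional hyperplane $H_0 := \{v \in \mathbb{R}^d : \mathbf{1}^\intercal v = 0\}$. Since $d \geq C$ gives $\dim H_0 = d-1 \geq C-1$, one can embed the required regular simplex inside $H_0$ (explicitly, by picking any orthonormal basis of a $(C-1)$-dimensional subspace of $H_0$ and scaling the standard regular simplex appropriately), producing the desired configuration.

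The main obstacle I anticipate is this existence/embedding step, since all other parts reduce directly to Lemma~\ref{lemma:nc-argmin-relaxed-opt-on-wbh} and the translation of the extra constraint via A\ref{as:data-dist}; the embedding requires using $d \geq C$ in an essential way. Once existence is established, the forward and backward implications are immediate: a minimizer of Eq.~\ref{eq:nc-opt-on-wbh} must also minimize Eq.~\ref{eq:nc-relaxed-opt-on-wbh} (same objective, same minimum value), hence satisfies Eq.~\ref{eq:nc-argmin-relaxed-opt-on-wbh}; and together with $\sum_j(h_i)_j = 0$ and $h_i = \tfrac{C}{C-1} w_{y_i}$ we recover the last condition of Eq.~\ref{eq:nc-argmin-opt-on-wbh}. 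Conversely any point satisfying Eq.~\ref{eq:nc-argmin-opt-on-wbh} is feasible for Eq.~\ref{eq:nc-opt-on-wbh} and attains the common minimum value.
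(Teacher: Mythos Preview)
Your proposal is correct and follows essentially the same route as the paper: reduce Eq.~\ref{eq:nc-opt-on-wbh} to the relaxed problem of Lemma~\ref{lemma:nc-argmin-relaxed-opt-on-wbh} plus the extra constraint $\sum_j(h_i)_j=0$, translate that constraint to $\sum_j(w_c)_j=0$ via $h_i=\tfrac{C}{C-1}w_{y_i}$ and A\ref{as:data-dist}, and then use $d\ge C$ to embed the regular $(C-1)$-simplex inside the hyperplane $\{\mathbf{1}^\intercal v=0\}$ to establish existence. The paper's proof is organized exactly this way, so there is nothing to add.
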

\begin{proof}
Since Eq.\ref{eq:nc-opt-on-wbh} has an additional constraint 
\begin{equation}
    \label{eq:nc-additional-constraint-on-h}
    \sum_j (h_i)_j = 0 \quad \forall i\in [C],
\end{equation}
compared to Eq.\ref{eq:nc-relaxed-opt-on-wbh}, by Lemma~\ref{lemma:nc-argmin-relaxed-opt-on-wbh},
it suffices to show 
\begin{itemize}
    \item $(\{w_c\}_{c\in[C]}, b, \{h_i\}_{i\in[n]})$ satisfies Eq.\ref{eq:nc-argmin-relaxed-opt-on-wbh} and Eq.\ref{eq:nc-additional-constraint-on-h} if and only if Eq.\ref{eq:nc-argmin-opt-on-wbh} is met.
    \item There exists $(\{w_c\}_{c\in[C]}, b, \{h_i\}_{i\in[n]})$ satisfying Eq.\ref{eq:nc-argmin-opt-on-wbh}.
\end{itemize}
First we show the first statement. We assume Eq.\ref{eq:nc-argmin-relaxed-opt-on-wbh} and Eq.\ref{eq:nc-additional-constraint-on-h}. Then the last equality in Eq.\ref{eq:nc-argmin-opt-on-wbh} holds as follows:
$$
\sum_{j\in [d]}(w_c)_j = \sum_{j\in [d]}\left(\frac{C-1}{C}h_i\right)_j = 0,
$$
where $i\in[n]$ is such that $y_i=c$. The existence is assured by A\ref{as:data-dist}.
The other equalities in Eq.\ref{eq:nc-argmin-opt-on-wbh} trivially holds since they are equivalent to Eq.\ref{eq:nc-argmin-relaxed-opt-on-wbh}.
Conversely, if Eq.\ref{eq:nc-argmin-opt-on-wbh} holds, Eq.\ref{eq:nc-argmin-relaxed-opt-on-wbh} trivially holds and Eq.\ref{eq:nc-additional-constraint-on-h} holds as well since
$$
\sum_{j\in[d]} (h_i)_j = \sum_j \left(\frac{C}{C-1}w_{y_i}\right)_j = 0.
$$
The existence of $(\{w_c\}_{c\in[C]}, b, \{h_i\}_{i\in[n]})$ satisfying Eq.\ref{eq:nc-argmin-opt-on-wbh} can be seen by the fact that regular $(C-1)$-simplex is in $(C-1)$-dimensional subspace, and we can rotate the simplex such that it is inside the $(d-1)$-dimensional subspace constrained by $\sum_{i\in [d]}(w_c)_i = 0 \quad \forall c\in[C]$, without violating the other equalities in Eq.\ref{eq:nc-argmin-opt-on-wbh}.

\end{proof}

\begin{proof}[proof of Theorem~\ref{theorem:neural-collapse-rigorous}]
By A\ref{as:over-param}, the optimization problem Eq.\ref{eq:nc-implicit-optimization} can be reduced to Eq.\ref{eq:nc-opt-on-wbh}. Hence, by Lemma~\ref{lemma:nc-argmin-opt-on-wbh}, the minimizer's condition is given by Eq.\ref{eq:nc-argmin-opt-on-wbh}, which implies the first three properties of Neural Collapse hold with $\bar h_c = h^*_c$. The last property can be proved as follows. 
The distance between $h$ and $h_c^*$ is given by
\begin{eqnarray}
\left\|h- h^*_c\right\|^2 &=& \left\|h\right\|^2 + \left\|h_c^*\right\|^2 - 2 h^T h^*_c\nonumber\\
&=&\left\|h\right\|^2 + 1 -  \frac{2C}{C-1} h^T w_c.\nonumber
\end{eqnarray}
Hence,
\begin{eqnarray}
\operatorname{argmin}_{c\in [C]} \left\|h- h^*_c\right\| &=& \operatorname{argmin}_{c\in [C]}\left(\left\|h\right\|^2 + 1 -  \frac{2C}{C-1} h^T w_c\right)\nonumber\\
&=& \operatorname{argmax}_{c\in [C]} h^T w_c\nonumber\\
&=& \operatorname{argmax}_{c\in [C]} h^T w_c + b_c\nonumber.
\end{eqnarray}
\end{proof}

\clearpage
\section{Extension to Multi-class Classification}
\label{apx:multiclass}
In this section, we extend Theorem~\ref{theorem:KKT} to the case of multi-classification tasks with cross-entropy loss.
The analysis here largely relies on Appendix G in \cite{lyu2019gradient}.
We consider a classification task of data $\{x_i,y_i\}_{i\in[n]}$ whose label $y_i$ which now takes values in $[C]$, where $C\in\mathbb{N}$ is the number of classes. Our model's output is given by $f(x;\theta)\in\mathbb{R}^C$, and the cross-entropy loss with this model is defined as
\begin{equation}
    \mathcal{L} :=-n^{-1}\sum_{j\in[n]}\log \frac{\exp(f_{y_j}(x_j;\theta))}{\sum_{c\in[C]}\exp(f_c(x_j;\theta))}
    = n^{-1}\sum_{j\in[n]} \log(1+e^{-\tilde q_j}),
    \label{eq:cross-entropy}
\end{equation}
where $\tilde q_j := - \log\left(\sum_{c\in[C]/\{y_j\}} e^{-s_{jc}}\right)$, and $s_{jc}:= f_{y_j}(x_j:\theta)-f_c(x_j;\theta)$.
Notice that $s_{jc}$ is a quasi-homogeneous function. Hence, if  Lemma~\ref{lemma:divergence} holds and $\theta$ goes to infinity as  $t\to\infty$, which we will show later in this section, $s_{jc}$ goes to infinity. Therefore, when $t\gg 1$,
$$
\tilde q_j \sim -\log\left(\max_{c\in[C]/\{y_j\}} e^{-s_{jc}}\right) = \min_{c\in[C]/\{y_j\}} s_{jc} \  (\to \infty),
$$
and by Taylor expansion of logarithm $\log(1+e^{-\tilde q_j})\sim e^{-\tilde q_j}$,
$$
\mathcal{L} = \sum_{j\in[n]} \log(1+e^{-\tilde q_j}) \sim \sum_{j\in[n]} e^{-\tilde q_j}.
$$
This expression is now equivalent to the one of binary classification tasks with the exponential loss.
Note that the effective margin is now given by $\min_{c\in[C]/\{y_j\}} s_{jc}$, which implies that its asymptotic behavior at the later stage of training is similar to the one with the exponential loss.
Being aware of this fact, we can show a variant of Theorem~\ref{theorem:KKT} with a modified version of separability condition:
\begin{assumption}[Strong Separability for CE Loss]
\label{as:sep-cross-entropy}
 There exists a time $t_0$ such that $\mathcal{L}(\theta(t_0))<n^{-1}\log 2$.
\end{assumption}
Under A\ref{as:quasi},A\ref{as:reg}, A\ref{as:gf}, A\ref{as:conv}, A\ref{as:cond-sep}, and , A\ref{as:sep-cross-entropy} with cross-entropy loss Eq.\ref{eq:cross-entropy}, there exists an $\alpha \in \mathbb{R}$ such that $\psi_\alpha(\lim_{t \to \infty}\hat{\theta}(t))$ is a first-order KKT point of the following optimization problem
\begin{equation*}
    \begin{split}
        \text{minimize} &\qquad \frac{1}{2}\|\theta\|_{\Lambda_{\max}}^2 \\
        \text{subject to} &\qquad  \min_{c\in[C]/\{y_j\}} s_{jc } \geq 1\quad\forall j\in[n].
    \end{split}
\end{equation*}
The modification of our proof of Theorem~\ref{theorem:KKT} is quite similar to the extension done in \cite{lyu2019gradient} and straightforward except the part where we show the lower bound of $\frac{d}{dt}\log\left\|\theta\right\|_{\Lambda}$ and divergence $\left\|\theta\right\|_{\Lambda}\to\infty$.
Hence here we will focus on this non-trivial part and ask readers to refer \cite{lyu2019gradient} and \cite{nacson2019convergence} for the other details.
Similar to the first inequality of Lemma~\ref{lemma:dynamics}, we can show the following:
\begin{equation}
    \frac{1}{2}\frac{d}{dt}\|\theta\|_\Lambda^2 \ge (1-e^{-n\mathcal{L}})\log\left(e^{n\mathcal{L}} -1\right).
\end{equation}
It is easy to see that 
\begin{align*}
 \frac{1}{2}\frac{d}{dt}\|\theta\|_\Lambda^2
&= \left\langle \frac{d\theta}{dt}, \Lambda \theta \right\rangle\\
&= \sum_{j\in[n]} \frac{1}{1+e^{-\tilde q_j}} \sum_{c\in[C]/\{y_j\}} e^{-s_{jc}} \langle h_{jc}, \Lambda \theta \rangle \quad \text{ ($h_{jc}$ is some element of $\partial^\circ_\theta s_{jc}$})\\
&= \sum_{j\in[n]} \frac{1}{1+e^{-\tilde q_j}} \sum_{c\in[C]/\{y_j\}} e^{-s_{jc}} s_{jc}\\
&\geq \sum_{j\in[n]} \frac{1}{1+e^{-\tilde q_j}} \sum_{c\in[C]/\{y_j\}} e^{-s_{jc}}\tilde q_{j}\\
&= \sum_{j\in[n]} \frac{\tilde q_j e^{-\tilde q_j}}{1+e^{-\tilde q_j}}.
\end{align*}
Since $\tilde q_j$ can be uniformly lower bounded by $\mathcal{L}$ as follows
$$
\mathcal{L} \geq n^{-1}\log (1+e^{-\tilde q_j}), \text{ i.e., } \tilde q_j \geq -\log\left(e^{n\mathcal{L}} -1\right),
$$
\begin{align*}
 \sum_{j\in[n]} \frac{\tilde q_j e^{-\tilde q_j}}{(1+e^{-\tilde q_j})\log(1+e^{-\tilde q_j})} 
 &\geq \frac{-\log\left(e^{n\mathcal{L}} -1\right) e^{\log\left(e^{n\mathcal{L}} -1\right)}}{(1+e^{\log\left(e^{n\mathcal{L}} -1\right)})\log(1+e^{\log\left(e^{n\mathcal{L}} -1\right)})}\\
 &= \frac{-\left(e^{n\mathcal{L}} -1\right)\log\left(e^{n\mathcal{L}} -1\right) }{n\mathcal{L}e^{n\mathcal{L}} }.
\end{align*}
Here for the inequality, we exploit the fact that function $\frac{x e^{-x}}{(1+e^{-x})\log(1+e^{-x})}$ is an increasing function.
With the help of this inequality, we get
\begin{align*}
    \frac{1}{2}\frac{d}{dt}\|\theta\|_\Lambda^2
&= \sum_{j\in[n]} \log(1+e^{-\tilde q_j}) \frac{\tilde q_j e^{-\tilde q_j}}{(1+e^{-\tilde q_j})\log(1+e^{-\tilde q_j})}\\
&\geq \frac{-\left(e^{n\mathcal{L}} -1\right)\log\left(e^{n\mathcal{L}} -1\right) }{n\mathcal{L}e^{n\mathcal{L}} }\sum_{j\in[n]} \log(1+e^{-\tilde q_j})  \\
&= (1-e^{-n\mathcal{L}})\log\left(e^{n\mathcal{L}} -1\right).
\end{align*}
Next, we show a variant of Lemma~\ref{lemma:divergence}.
By utilizing the lower bound of $\frac{1}{2}\frac{d}{dt}\|\theta\|_\Lambda^2$ and introducing a newly defined smoothed normalized margin $\tilde\gamma :=-\frac{e^{n\mathcal{L}} -1}{\|\theta\|_\Lambda^{\lambda_{\max}^{-1}}}$,
\begin{align*}
    -\frac{d\mathcal{L}}{dt}&\ge \lambda_{\max}^{-1}\frac{(1-e^{-n\mathcal{L}})^2\log\left(e^{n\mathcal{L}} -1\right)^2}{\|\theta\|_\Lambda^2}\\
    &= \lambda_{\max}^{-1}(e^{n\mathcal{L}} -1)^2\log(e^{n\mathcal{L}} -1)^{(2-2\lambda_{\max})}\left(-\frac{e^{n\mathcal{L}} -1}{\|\theta\|_\Lambda^{\lambda_{\max}^{-1}}}\right)^{2\lambda_{\max}}\\
    &\ge \lambda_{\max}^{-1}(e^{n\mathcal{L}} -1)^2\log (e^{n\mathcal{L}} -1)^{(2-2\lambda_{\max})}\tilde{\gamma}(t_0)^{2\lambda_{\max}}.
\end{align*}
where the last inequality relies on a version of Corollary~\ref{cor:monotonic-gamma}.
Rearranging terms on both sides of the inequality gives,
$$-\frac{d\mathcal{L}}{dt}(e^{n\mathcal{L}} -1)^{-2}\log(e^{n\mathcal{L}} -1)^{-2(1-\lambda_{\max})}\ge  \lambda_{\max}^{-1}\tilde{\gamma}(t_0)^{2\lambda_{\max}}.$$
Integration from $t_0$ to $t$, with the substitution $-n\frac{d\mathcal{L}}{dt}(e^{n\mathcal{L}} -1)^{-2} = \frac{d(e^{n\mathcal{L}} -1)^{-1}}{dt}$,  gives
$$n^{-1}\int_{(e^{n\mathcal{L}(t_0)} -1)^{-1}}^{(e^{n\mathcal{L}(t)} -1)^{-1}}(-\log z)^{-2(1-\lambda_{\max})}dz \ge  \lambda_{\max}^{-1}\tilde{\gamma}(t_0)^{2\lambda_{\max}}(t-t_0).$$
The RHS diverges as $t\to\infty$, and the LHS as a function of $t$ is non-decreasing for $z < 1$, which is true for all $t \ge t_0$ by Lemma~\ref{lemma:dL/dt} and A\ref{as:sep-cross-entropy}.
Thus we can conclude that $\mathcal{L}^{-1}\to\infty$.  Exploiting this fact, we can easily show $\left\|\theta\right\|_{\Lambda}\to\infty$ as we discussed in the proof of Lemma~\ref{lemma:divergence}.

\clearpage
\section{Experiment Details}
\label{apx:experiment}

All empirical figures in this work were generated by the attached notebook.
Here we briefly summarize the experimental conditions used to generate these figures.

\textbf{Logistic Regression (Fig.~\ref{fig:logistic-regression}).}

This plot was generated by sampling $100$ sample from two Gaussian distributions $\mathcal{N}(\pm \mu, \sigma I)$ in $\mathbb{R}^2$ where $\mu = [1/\sqrt{2}, 1/\sqrt{2}]$ and $\sigma = 0.25$.
The parameters for both the homogeneous and quasi-homogeneous model were initialized as standard random Gaussian vectors.
The parameters were trained with full batch gradient descent with a learning rate $\eta = 0.5$ for $1e5$ steps.
The maximum $\ell_2$-margin solution was computed using scikit-learn's SVM package \cite{scikit-learn}.

\textbf{Ball Classification (Fig.~\ref{fig:sweep-gf}).}

This plot was generated by sampling $1e4$ random samples from the surface of two balls $B(\pm \mu, r)$ in $\mathbb{R}^3$ for $100$ linearly spaced radii $r \in [0,1]$.
The mean $\mu = [0.8660254, 0.4330127, 0.25]$ was chosen such that $\rho_\mu = 0.5$ and $\rho_{P^\perp \mu} = 0.25$.
The quasi-homogeneous model was defined such that $D_1 = 1$, $D_2 = 5$, and $D_3 = 10$ leading to $\lambda$-values $\Lambda = [1.0.2, 0.1]$.
The parameters for the homogeneous and quasi-homogeneous model were initialized with the all ones vector.
Using these initializations and SciPy's initial value problem ODE solver \cite{2020SciPy-NMeth} we then simulated gradient flow until $T =1e5$.
The final value of the classifier for both models and their respective robustness was recorded and used to generate the final plots.

\end{document}